\documentclass[a4paper,11pt]{article}
\usepackage{microtype}
\usepackage{graphicx}
\usepackage{booktabs}
\usepackage{amsfonts}
\usepackage{amsmath}
\usepackage{amssymb}
\usepackage{amsthm}
\usepackage{subcaption}
\usepackage{xcolor}
\usepackage{bm}
\usepackage{rotating}
\usepackage[colorlinks,citecolor=blue]{hyperref}
\usepackage[capitalize,noabbrev]{cleveref}
\usepackage{algorithm2e}
\RestyleAlgo{ruled}
\usepackage{fancyhdr}
\usepackage{listings}
\usepackage{afterpage}
\usepackage{natbib}

\usepackage{dsfont}

\setlength{\hoffset}{-18pt}         
\setlength{\oddsidemargin}{0pt} 
\setlength{\evensidemargin}{8pt} 
\setlength{\marginparwidth}{54pt} 
\setlength{\textwidth}{484pt} 
\setlength{\voffset}{-18pt} 
\setlength{\marginparsep}{7pt} 
\setlength{\topmargin}{0pt} 
\setlength{\headheight}{13pt} 
\setlength{\headsep}{10pt} 
\setlength{\footskip}{37pt} 
\setlength{\textheight}{660pt} 
\setlength {\marginparwidth }{2cm}

\usepackage[textsize=tiny]{todonotes}

\newcommand{\ey}[2][]{\todo[color=blue!20,#1]{{\bf Ey:} #2}}

\title{Fairness Constraints for Set-Valued Classification}



\hypersetup{
    colorlinks=true,
    linkcolor=blue,
    filecolor=magenta,      
    urlcolor=cyan,
    pdftitle={Overleaf Example},
    pdfpagemode=FullScreen,
}

\newtheorem{theorem}{Theorem}[section]
\newtheorem{corollary}[theorem]{Corollary}
\newtheorem{lemma}[theorem]{Lemma}
\theoremstyle{definition}
\newtheorem{remark}[theorem]{Remark}
\theoremstyle{definition}
\newtheorem{definition}[theorem]{Definition}
\newtheorem{proposition}[theorem]{Proposition}
\newtheorem{assumption}{Assumption}

\crefname{assumption}{Assumption}{Assumptions}
\crefname{algocfline}{Algorithm}{Algorithms}
\crefname{lemma}{Lemma}{Lemmas}
\crefname{Definition}{definition}{definitions}
\crefname{theorem}{Theorem}{Theorems}
\crefname{remark}{Remark}{Remarks}
\crefname{algorithm}{Algorithm}{Algorithms}
\crefname{lstlisting}{Listing}{Listings}

\newcommand{\size}[1]{\left\lvert#1\right\rvert}

\newcommand{\E}[2][]{\ifthenelse{\equal{#2}{}}{\mathbb{E}_{#1}}{\mathbb{E}_{#1}\left[ #2 \right]}}
\newcommand{\X}{\mathcal{X}}
\newcommand{\Scal}{\mathcal{S}}
\newcommand{\proba}[2][]{\ifthenelse{\equal{#2}{}}{\mathbb{P}_{#1}}{\mathbb{P}_{#1}\left(#2\right)}}
\newcommand{\probahat}[2][]{\ifthenelse{\equal{#2}{}}{\widehat{\mathbb{P}}_{#1}}{\widehat{\mathbb{P}}_{#1}\left(#2\right)}}

\newcommand{\1}[1]{\mathds{1}_{\left\{#1\right\}}}
\newcommand{\Rcal}{\mathcal{R}}

\newcommand{\norm}[2][]{\left\lvert\left\lvert#2\right\rvert\right\rvert_{#1}}

\newcommand{\sumover}[2][]{\overset{#1}{\underset{#2}{\sum}}}

\newcommand{\argmin}[1]{\underset{#1}{\operatorname{arg}\!\operatorname{min}}\;}




\title{Set to Be Fair: Demographic Parity Constraints for Set-Valued Classification}

\author{Eyal H.~Cohen$^{(1)}$,  Christophe Denis$^{(2)}$, Mohamed Hebiri$^{(3)}$}

\begin{document}
\date{}
\maketitle
\begin{center}
$(1)$ Sorbonne Université, LPSM, UMR 8001\\
${(2)}$ Université Paris 1 Panthéon-Sorbonne, SAMM\\ 
$(3)$ Université Gustave Eiffel, LAMA, UMR 8050
\end{center}

\begin{abstract}

Set-valued classification is used in multiclass settings where confusion between classes can occur and lead to misleading predictions. However, its application may amplify discriminatory bias motivating the development of set-valued approaches under fairness constraints. 
In this paper, we address the problem of set-valued classification under demographic parity and expected size constraints.
We propose two complementary strategies: an oracle-based method that minimizes classification risk while satisfying both constraints, and a computationally efficient proxy that prioritizes constraint satisfaction.
For both strategies, we derive closed-form expressions for the (optimal) fair set-valued classifiers and use these to build plug-in, data-driven procedures for empirical predictions. We establish distribution-free convergence rates for violations of the size and fairness constraints for both methods, and under mild assumptions we also provide excess-risk bounds for the oracle-based approach. Empirical results demonstrate the effectiveness of both strategies and highlight the efficiency of our proxy method.


\end{abstract}

{\bf Keywords.} {Multi-class classification, Set-valued classification, Fairness, Demographic parity.}

\section{Introduction}\label{Introduction}
Set-valued classifiers are powerful tools for handling ambiguity between class labels in multiclass classification problems. Their popularity grew with the advent of conformal prediction~\citep{Vovk} and has become increasingly important in large-scale settings.
Numerous set-valued frameworks now coexist, each offering different trade-offs and applications~\citep{Denis_Hebiri_2017,Chzhen_Denis_Hebiri_Lorieul_2021,Sadinle_Lei_Wasserman_2019}.
In parallel, the rapid expansion of machine learning and deep learning in critical and sensitive domains such as medicine~\citep{Celard_Iglesias_Sorribes-Fdez_Romero_Vieira_Borrajo_2023}, hiring~\citep{Chen_2023}, criminal justice~\citep{Taylor_2023}, and banking~\citep{Sadok_Sakka_El_Maknouzi_2022,Amato_Osterrieder_Machado_2024}, has made algorithmic fairness a central concern in the statistical and machine learning communities~\citep{hardt2016equality,agarwal18a,Chzhen_Denis_Hebiri_Oneto_Pontil19,Paulus_Kent_2020,Hobson_Yesberg_Bradford_Jackson_2023,Yang_Soltan_Eyre_Clifton_2023,Chen_Wang_Williamson_Chen_Lipkova_Lu_Sahai_Mahmood_2023,Cameron_Cheong_Spitale_Gunes_2024}.
The main issue addressed by algorithmic fairness is the mitigation of learned biases and discrimination arising from sensitive attributes such as gender, ethnicity, or socioeconomic status.
A wide range of methods attempt to implement fairness through pre-, in-, or post-processing, targeting either exact fairness or approximate fairness (also called $\epsilon$-fairness), the latter allowing for improved trade-offs with predictive performance~\citep{zemel2013learning,lum2016statistical,calders2009building,feldman2015certifying,zafar2017fairness,barocas-hardt-narayanan,Chzhen_Denis_Hebiri_Oneto_Pontil19,jiang2019wasserstein,gordaliza2019obtaining,hardt2016equality,dwork2012}. See~\citep{Alves_Bernier_Couceiro_Makhlouf_Palamidessi_Zhioua_2023} for a recent review.
Approximate fairness offers a flexible way to control fairness while limiting the accuracy drop. However, it requires a predefined level of unfairness, which can be difficult to interpret and calibrate in practice.

In this paper, we consider a set-valued classification problem involving multiple classes and a sensitive attribute. The goal is to build a classifier that outputs a subset of classes while ensuring fairness -- in a sense to be specified later -- with respect to the sensitive attribute and controlling the average size of the output to limit information disclosure.
We focus on exact fairness, but allow a compromise on the interpretability of the output in order to reduce the classification risk. Specifically, we adopt the framework of set-valued classification~\citep{Lapin2016,Denis_Hebiri_2017,Sadinle_Lei_Wasserman_2019}, which includes ideas related to the conformal prediction setting~\citep{Vovk}. In this framework, the classifier may output multiple candidate labels, and the misclassification risk is naturally defined as the probability that the true label is not included in the predicted set.
A key advantage of set-valued classifiers under an expected size constraint is that, by allowing larger outputs in ambiguous cases, one can reduce the overall misclassification risk. However, it has been observed that in some applications where set-valued classification is particularly appropriate -- such as image classification -- biases in the data may lead to systematic misclassification~\citep{Besse2018CanEA}. This highlights the need to develop fair set-valued classifiers.
In this work, we incorporate demographic parity (DP) as a fairness constraint to ensure that the classifier does not discriminate based on the sensitive attribute.

        
    
    \subsection{Related Work}

        
Fairness in classification has been extensively studied under various criteria such as demographic parity~\citep{Calders_Kamiran_Pechenizkiy_2009}, equalized odds, and equal opportunity~\citep{hardt2016equality}. Two scenarios are typically considered: awareness and unawareness~\citep{agarwal18a,Chzhen_Denis_Hebiri_Oneto_Pontil19,Wang_Zhang_Zhu_2022,gaucher23a} -- whether we have access to the sensitive attribute at prediction time or not. With exact fairness being the concept of calibrating the algorithms to completely remove biases with respect to a given sensitive attribute, a relaxed version, known as approximate fairness or $\epsilon$-fairness allows for a trade-off between accuracy and fairness~\citep{agarwal18a,Denis_Elie_Hebiri_Hu_2024}.  While appealing from a performance standpoint, $\epsilon$-fairness is often less interpretable, as it relies on empirically chosen thresholds for acceptable unfairness. To improve interpretability, $\alpha$-fairness has been proposed~\citep{Chzhen_Schreuder_2022}, which seeks predictions that are at most $\alpha$ times as unfair as an unconstrained baseline, providing a clearer and more intuitive fairness guarantee.
        
Conformal prediction offers a natural framework for set-valued classification by providing calibrated prediction sets with coverage guarantees~\citep{Gibbs_Cherian_Candes_2024, Vovk}. Recent work has extended this framework to incorporate fairness constraints, such as adaptively selecting features and equalizing coverage across groups~\citep{Zhou_Sesia_2024}, or by combining conformal prediction with quantile regression and fairness adjustments~\citep{Romano_Patterson_Candes,Liu_Ding_Yu_Liu_Kong_Jiang}.
More broadly, set-valued predictors have been widely used to address class ambiguity in multiclass problems (see~\citep{Chzhen_Denis_Hebiri_Lorieul_2021} for a review) but has not been explored from the fairness perspective yet. 
        
Our work focuses on set-valued classification under fairness and size constraints. We provide an explicit solution of the fair set-valued classifier along with theoretical guarantees on constraint violations and excess risk. We also show that, while using a post-processing approach, the constraint violations guarantees are independent of the quality of the underlying estimators.

\subsection{Main contributions}
\label{ssec:main contribution}
        
        
        Our work focuses on the set-valued classification problem and the demographic fairness constraint. Our main contributions are the following: {\bf i)} we extend the notion of demographic parity to the set-valued classification setting;
        {\bf ii)} we exhibit a closed-form expression of the optimal fair set-valued classifier under an expected size constraint and deduce from its expression a data-driven procedure based on the plug-in principle. A key feature of the method is its post-processing nature: any preliminary estimator of the conditional probabilities can be used to build a fair set-valued classifier with the prescribed size, using {\it unlabeled} data only, making it attractive in practice.
        We provide theoretical controls on the risk, the unfairness and the size of the proposed set-valued classifier. Notably, both guarantees on the constraints are distribution-free.
        {\bf iii)} we propose a computationally efficient alternative to the optimal approach that avoids the need for solvers. Although not optimal, this proxy satisfies the same constraint violation guarantees, making it a practical alternative.
        {\bf iv)} we conduct numerical comparisons on both synthetic and real data, demonstrating the relevance of both approaches in practice.

    \subsection{Paper Outline}\label{ssec:outline}
The rest of the paper is organized as follow. In Section~\ref{sec:General Framework}, we formally introduce the problem of fair set-valued classification under a size constraint, along with a formal characterization of the optimal fair set-valued classifier with constrained size.
Section~\ref{sec:Data Driven Procedure} presents a plug-in approach that mimics this optimal set-valued classifier by solving a constrained optimization problem. Section~\ref{sec:twoStepProc} introduces a computationally simpler two-step procedure based on post-processing an unfair classifier to enforce fairness. We detail its statistical guarantees and compare both methods from a computational perspective. Section~\ref{sec:Experiments and Results} provides empirical results on synthetic and real-world data to evaluate the trade-offs between statistical accuracy, fairness, and computational cost. We conclude and discuss future directions in Section~\ref{sec:conclusion}.

\section{General Framework}
\label{sec:General Framework}

In this section, we start presenting in Section~\ref{subsec:statSetting} the general setting as well as the main definitions relevant to our problem. We then derive the optimal set-valued classifier and discuss its properties in Section~\ref{subsec:OptPredict}.
    
\subsection{Statistical setting}
\label{subsec:statSetting}
We begin with some useful notation. Let $K\geq 2$ be an integer and write $[K]$ to denote the set $\{1,\ldots,K\}$. Let $(X,S,Y) \in \X \times \Scal \times [K]$  be a random tuple with distribution $\proba[]{}$, respectively denoting by $X$ the covariates, $S$ the sensitive attribute, and $Y$ the class label.
A set-valued classifier is a function mapping $\X \times \Scal$ to the power set of classes $2^{[K]}$. Let $\bm{\Gamma}$ denote the collection of all set-valued classifiers.
For any $\Gamma \in \bm{\Gamma}$, two quantities are of interest: the expected size $\mathcal{T}(\Gamma) = \mathbb{E}\left[\left|\Gamma(X,S)\right|\right]$ and the risk $R(\Gamma) = \mathbb{P}\left(Y \notin \Gamma(X,S)\right)$. These two objectives are typically in tension: larger sets tend to reduce the risk but increase the size.
%
%
For every $(x,s,k)$ in $\X \times \Scal \times [K]$, we denote by $p_k(x,s) = \proba{Y=k | X=x, S=s}$ the conditional class probabilities. The marginal distribution of the sensitive attribute $S$ is denoted by $\pi_s := \proba{S = s}$ for each $s \in \Scal$.
A central tool in our analysis is the use of cumulative distribution functions (cdf) and their general inverses. For each $k\in [K]$ and $s\in \Scal$, we denote by $F_k$ (respectively  $F_{k,s}$) the cdf of $p_k(X,S)$ under the distribution $\mathbb{P}_{(X,S)}$ of $(X,S)$ (respectively the conditional distribution $\mathbb{P}_{X|S=s}$ of $X$ given $S=s$). Moreover, for any real-valued random variable $U$, we define $\overline{F}_{U} = 1 - F_{U}$. Finally, we introduce the function $G$ defined by $G(t) := \sumover[K]{k=1} \overline{F}_k(t)$ for $t \in \mathbb{R}$ and denote by $G^{-1}$ its generalized inverse.

\paragraph*{DP-fair set-valued classifier.}
We address the fairness problem within the Demographic Parity~(DP) framework adapted to the set-valued setting. This leads to the following definition:
\begin{definition}[DP-constraint]
\label{def:set-valued fairness}
    A set-valued classifier $\Gamma \in \bm{\Gamma}$ is said to be DP-fair if, for all $ k \in [K]$ and $ s \in \Scal$
    \begin{equation*}
        \proba[X \lvert S = s]{k \in \Gamma(X,s)} = \proba[X,S]{k \in \Gamma(X,S)} \enspace .
    \end{equation*}
    We denote by $\bm{\Gamma}_{\rm fair}$ the set of all classifiers satisfying the DP constraint.
\end{definition}
This definition is a direct extension of the notion of DP in classification \citep{Calders_Kamiran_Pechenizkiy_2009} to the set-valued setting. Our goal here is to build a set-valued classifier that minimizes the risk under the DP constraint and that has a bounded expected size. More formally, for a fixed limiting size $\beta > 0$, we aim to solve the following constrained optimization problem:
\begin{equation}
\label{eq:eqFairOpt}
\Gamma_{\beta}^* \in \arg\min \left\{R(\Gamma) \;\; : \;\; \Gamma \in \bm{\Gamma}_{{\rm fair}}, \;\;  \mathcal{T}(\Gamma) \leq \beta   \right\}   \enspace .
\end{equation}

\subsection{Optimal Predictor}
\label{subsec:OptPredict}
One convenient way to get a closed form expression of the optimal predictor $\Gamma_{\beta}^* $ is to lies under the following assumption.
\begin{assumption}[Continuity]
\label{assum:continuous}
        For each $k \in[K]$ and $s \in \mathcal{S}$, the cdf $F_{k,s}$ is continuous.
    \end{assumption}
\begin{theorem}
    \label{thm:optimal set-valued classifier} 
    Suppose Assumption~\ref{assum:continuous} is verified. 
    Then the $\beta$-specific oracle $\Gamma^*_\beta$ is:
    \begin{equation*}
        \Gamma^*_{\beta}(x,s) = \left\{ k \in [K]: p_k(x,s) \geq \lambda^* + \frac{\gamma^*_{k,s}}{\pi_s} \right\} \enspace,
    \end{equation*}
    with $\gamma^*_{k,s} = \alpha^*_{k,s} - \pi_s \sum_s \alpha^*_{k,s}$ and
     $\lambda^*$ and $\alpha^* = \left(\alpha_{k,s}\right)_{k \in [K], s\in \mathcal{S}}$ are the Lagrangian multiplier that are characterized as
     \footnotesize{
     \begin{equation}
     \label{eq:set-valued fair lagrangian}
        (\lambda^*,\alpha^*) \in \argmin{\substack{(\lambda, \alpha) \in \mathbb{R}^{K|\mathcal{S}|+1} \\ \lambda \geq 0 }} \sum_{k = 1}^K\sum_{s \in \mathcal{S}} \E[X|S=s]{\left( \pi_s \left(p_k(X, s) - \lambda + \sum_{s \in \mathcal{S}} \alpha_{k,s} \right) - \alpha_{k,s}\right)_{+}} +\lambda\beta \enspace ,
    \end{equation}
    }
    where $(\cdot)_+$ stands for the positive part.
\end{theorem}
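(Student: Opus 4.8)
The plan is to recast the constrained problem \eqref{eq:eqFairOpt} as a linear optimization over the membership functions of set-valued classifiers, solve it by Lagrangian duality, and read the stated closed form off the first-order conditions of the dual \eqref{eq:set-valued fair lagrangian}. First I would encode $\Gamma\in\bm{\Gamma}$ by $g_k(x,s)=\1{k\in\Gamma(x,s)}$, $k\in[K]$. Since the events $\{Y=k,\,k\in\Gamma(X,S)\}$, $k\in[K]$, are disjoint with union $\{Y\in\Gamma(X,S)\}$, one has $R(\Gamma)=1-\sum_k\mathbb{E}[p_k(X,S)g_k(X,S)]$, so minimizing $R$ is equivalent to maximizing $\Phi(g):=\sum_k\mathbb{E}[p_k(X,S)g_k(X,S)]$; the size constraint becomes $\sum_k\mathbb{E}[g_k(X,S)]\le\beta$; and, with $q_k:=\mathbb{E}[g_k(X,S)]$, the DP-constraint of Definition~\ref{def:set-valued fairness} reads $\mathbb{E}[g_k(X,S)\mid S=s]=q_k$ for all $k,s$. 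These equalities are linearly dependent (averaging over $s$ with weights $\pi_s$ returns $q_k=q_k$), which is exactly why only centered multipliers will appear in the solution.

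Next I would introduce $\lambda\ge0$ for the size constraint and $\alpha=(\alpha_{k,s})$ for the DP-constraints and form
\[
L(g,\lambda,\alpha)=\Phi(g)-\lambda\Bigl(\sum_k\mathbb{E}[g_k(X,S)]-\beta\Bigr)-\sum_{k,s}\alpha_{k,s}\bigl(\mathbb{E}[g_k(X,S)\mid S=s]-\mathbb{E}[g_k(X,S)]\bigr).
\]
Writing $\mathbb{E}[g_k\mid S=s]=\mathbb{E}[g_k(X,S)\1{S=s}]/\pi_s$ and collecting the terms multiplying $g_k(X,s)\1{S=s}$, a direct computation shows that this coefficient equals $\pi_s(p_k(x,s)-\lambda)-\gamma_{k,s}$ with $\gamma_{k,s}=\alpha_{k,s}-\pi_s\sum_{s'}\alpha_{k,s'}$ (hence $\sum_s\gamma_{k,s}=0$), so that
\[
L(g,\lambda,\alpha)=\sum_{k,s}\mathbb{E}_{X|S=s}\bigl[g_k(X,s)\bigl(\pi_s(p_k(X,s)-\lambda)-\gamma_{k,s}\bigr)\bigr]+\lambda\beta .
\]
The inner maximization over $\{0,1\}$-valued $g$ is then pointwise, attained by the threshold rule $g_k(x,s)=\1{p_k(x,s)\ge\lambda+\gamma_{k,s}/\pi_s}$, and $\max_g L(g,\lambda,\alpha)$ equals the objective of \eqref{eq:set-valued fair lagrangian} (rewriting $\pi_s(p_k-\lambda)-\gamma_{k,s}=\pi_s(p_k-\lambda+\sum_{s'}\alpha_{k,s'})-\alpha_{k,s}$); thus \eqref{eq:set-valued fair lagrangian} is the Lagrangian dual and the rule above is the form displayed in the theorem once $(\lambda,\alpha)$ is taken to be a dual optimum.

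Then I would argue that the convex dual objective $D$ attains its minimum: $D$ is invariant under $\alpha_{k,\cdot}\mapsto\alpha_{k,\cdot}+c_k(\pi_s)_s$, and after factoring out this one-dimensional degeneracy (e.g.\ fixing $\alpha_{k,s_0}=0$) it is coercive, since $\beta>0$ forces $D\to\infty$ as $\lambda\to\infty$ while any single large-magnitude $\gamma_{k,s}$ makes the corresponding positive-part term blow up. Let $(\lambda^*,\alpha^*)$ be a minimizer and $\Gamma^*_\beta$ the associated threshold rule, with membership $g^*$. Under Assumption~\ref{assum:continuous} each $F_{k,s}$ is continuous, so the tie sets $\{p_k(X,s)=\lambda^*+\gamma^*_{k,s}/\pi_s\}$ are $\mathbb{P}_{X|S=s}$-null and $D$ is differentiable at $(\lambda^*,\alpha^*)$; $\partial_{\alpha_{k,s}}D(\lambda^*,\alpha^*)=0$ yields precisely $\mathbb{P}_{X|S=s}(k\in\Gamma^*_\beta(X,s))=\mathbb{P}_{X,S}(k\in\Gamma^*_\beta(X,S))$, i.e.\ $\Gamma^*_\beta\in\bm{\Gamma}_{\rm fair}$, and the KKT condition in $\lambda$ gives $\mathcal{T}(\Gamma^*_\beta)\le\beta$ together with $\lambda^*(\mathcal{T}(\Gamma^*_\beta)-\beta)=0$. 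Hence the two bracketed slackness terms in the expansion of $L(g^*,\lambda^*,\alpha^*)$ vanish, so $L(g^*,\lambda^*,\alpha^*)=\Phi(g^*)$. Finally, for any feasible $\Gamma$ with membership $g$, feasibility gives $-\lambda^*(\sum_k\mathbb{E}[g_k]-\beta)\ge0$ and $\mathbb{E}[g_k\mid S=s]-\mathbb{E}[g_k]=0$, whence $\Phi(g)\le L(g,\lambda^*,\alpha^*)\le\max_{g'}L(g',\lambda^*,\alpha^*)=L(g^*,\lambda^*,\alpha^*)=\Phi(g^*)$; that is, $R(\Gamma)\ge R(\Gamma^*_\beta)$. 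Since $\Gamma^*_\beta$ is itself feasible, it solves \eqref{eq:eqFairOpt}.

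The hard part is the last step, specifically turning stationarity of the dual into \emph{exact} feasibility of the plug-in threshold rule: this is where Assumption~\ref{assum:continuous} is indispensable, as it both makes $D$ differentiable and guarantees the tie sets carry no mass, so that the ``$\ge$'' in the definition of $\Gamma^*_\beta$ does not perturb the size budget or the DP identities. Without continuity one obtains only a subdifferential condition and would in general have to randomize on the tie sets to meet the size constraint exactly. Establishing existence of a dual minimizer despite the built-in non-uniqueness of $\alpha^*$ (only $\gamma^*$, equivalently the thresholds, is determined) is the remaining, more routine, technical point.
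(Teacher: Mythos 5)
Your proposal is correct and follows essentially the same route as the paper: form the Lagrangian, minimize it pointwise over set-valued classifiers to get the threshold rule, use convexity and coercivity of the (centered) dual objective for existence of $(\lambda^*,\alpha^*)$, invoke Assumption~\ref{assum:continuous} to turn the first-order conditions into exact DP-feasibility and control of the size, and conclude by weak duality. The only difference is cosmetic: you treat the constraint $\lambda\ge 0$ via complementary slackness rather than asserting that the $\lambda$-derivative vanishes, which is if anything slightly more careful at the boundary case $\lambda^*=0$.
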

The above result shows that, under Assumption~\ref{assum:continuous}, the optimal predictor can be characterized as a thresholding rule applied to the conditional probabilities $p_k$. This threshold is composed of two components: the first, $\lambda^*$, is a Lagrange multiplier associated with the expected size constraint, and is therefore responsible for calibrating the average size of the predictor $\Gamma_{\beta}^*$. The second component adjusts $\lambda^*$ in a class- and group-specific manner to enforce the fairness constraint.
Notably, this characterization extends the one derived in~\cite{Denis_Hebiri_2017}, where only the expected size constraint is considered. In their setting, the threshold involves a single parameter that does not depend neither on the class-label nor on the sensitive feature.

An important issue that remains is the resolution of the optimization problem in Equation~\eqref{eq:set-valued fair lagrangian}. The Lagrange multipliers obtained are not unique: the fairness-related parameters $\alpha^*$ can be shifted by a common constant without affecting the resulting classifier $\Gamma_\beta^*$. To address this, and in light of the definition of the optimal fairness parameter $\gamma^*$, which satisfies $\sum_{s \in \mathcal{S}} \gamma_{k,s}^* = 0$, the optimization problem can be reparameterized as follows:
\begin{equation}
\label{eq:set-valued fair lagrangian2}
(\lambda^*,\gamma^*) \in \arg\min_{\substack{(\lambda, \gamma) \in \mathbb{R}^{K|\mathcal{S}|+1} \\ \lambda \geq 0 \\ \sum_{s \in \mathcal{S}}\gamma_{k,s} = 0 }} \sum_{k = 1}^K\sum_{s \in \mathcal{S}} \E[X|S=s]{\left( \pi_s \left(p_k(X, s) - \lambda \right) - \gamma_{k,s}\right)_{+}} +\lambda\beta \enspace .
\end{equation}
{While this reparameterization does not ensure the uniqueness of the pair $(\lambda^*,\gamma^*) $, it simplifies the optimization landscape and the construction of the oracle predictor $\Gamma^*_\beta$. In particular, it allows for easier verification of key properties of the objective function, such as coercivity. We now state several properties of the $\beta$-specific oracle $\Gamma^*_\beta$, which will facilitate the analysis in the following sections.}

\paragraph*{Risk measure.} 
The next result  provides an important characterization of the optimal predictor.
\begin{proposition}
\label{prop:charctOptimRisk}
Let $\Gamma^*_{\beta}$ be the optimal predictor. Under Assumption~\ref{assum:continuous} the following holds
\begin{enumerate}
    \item[(i)] $\mathcal{T}\left(\Gamma_{\beta}^*\right) = \beta$; \hspace{10.5cm} \textbf{\textit{[Size validity]}} 
    \item[(ii)] for each $(k,s) \in [K] \times \mathcal{S}$
    \begin{equation*}
    \proba[X \lvert S = s]{k \in \Gamma_{\beta}^*(X,s)} = \mathbb{P}_{X,S}\left(k \in \Gamma_{\beta}^*(X,S)\right)  \enspace; \textbf{ \hspace{3.35cm} \textit{[DP-fair validity]}}
    \end{equation*}
    \item[(iii)] $\Gamma^*_{\beta} \in \arg\min_{\Gamma \in \bm{\Gamma}} \mathcal{R}_{\lambda^*, \gamma^*} \left(\Gamma\right)$, with
    \begin{equation*}
    \mathcal{R}_{\lambda^*, \gamma^*} \left(\Gamma\right) = R(\Gamma) + \lambda^* \left(\E[X]{\left\lvert \Gamma(X,S) \right\rvert} - \beta\right) 
        + \sumover[K]{k=1} \sumover{s \in \Scal} \gamma^*_{k,s} \proba[X \lvert S = s]{k \in \Gamma(X,s)}\enspace.
    \end{equation*}
\end{enumerate}
\end{proposition}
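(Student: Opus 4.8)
The plan is to read part (iii) off the closed form in Theorem~\ref{thm:optimal set-valued classifier} and then obtain parts (i) and (ii) from the first-order optimality conditions of the dual problem~\eqref{eq:set-valued fair lagrangian2}. For (iii), I would first rewrite $\mathcal{R}_{\lambda^*,\gamma^*}$ so that it is linear in the membership indicators $\1{k\in\Gamma(X,S)}$. Conditioning on $(X,S)$ gives $R(\Gamma) = 1 - \mathbb{E}\big[\sum_{k=1}^{K}\1{k\in\Gamma(X,S)}\,p_k(X,S)\big]$; since $\Gamma(X,S)=\Gamma(X,s)$ on $\{S=s\}$ we have $\proba[X \lvert S = s]{k\in\Gamma(X,s)} = \pi_s^{-1}\mathbb{E}\big[\1{S=s}\1{k\in\Gamma(X,S)}\big]$; and $\mathbb{E}_X[\size{\Gamma(X,S)}] = \mathbb{E}\big[\sum_{k=1}^{K}\1{k\in\Gamma(X,S)}\big]$. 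Substituting and collecting the coefficient of each indicator yields
\begin{equation*}
\mathcal{R}_{\lambda^*,\gamma^*}(\Gamma) \;=\; 1 - \lambda^*\beta \;-\; \mathbb{E}\!\left[\,\sum_{k=1}^{K}\1{k\in\Gamma(X,S)}\left(p_k(X,S) - \lambda^* - \frac{\gamma^*_{k,S}}{\pi_S}\right)\right].
\end{equation*}
For each $(x,s)$ the minimizing classifier keeps exactly the indices $k$ with $p_k(x,s) \geq \lambda^* + \gamma^*_{k,s}/\pi_s$; this is a measurable set-valued map minimizing $\mathcal{R}_{\lambda^*,\gamma^*}$ over $\bm{\Gamma}$, and by Theorem~\ref{thm:optimal set-valued classifier} it equals $\Gamma^*_\beta$, which is (iii). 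Under Assumption~\ref{assum:continuous} the event $\{p_k(X,S) = \lambda^* + \gamma^*_{k,S}/\pi_S\}$ is null for every $k$, so the minimizer is unique up to a null set.

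For (i) and (ii), I would use that by Theorem~\ref{thm:optimal set-valued classifier} the pair $(\lambda^*,\gamma^*)$ minimizes
\begin{equation*}
H(\lambda,\gamma) \;=\; \sum_{k=1}^{K}\sum_{s\in\Scal}\E[X \lvert S = s]{\left(\pi_s\big(p_k(X,s) - \lambda\big) - \gamma_{k,s}\right)_{+}} + \lambda\beta
\end{equation*}
over $\{\lambda\geq 0,\ \sum_{s\in\Scal}\gamma_{k,s} = 0 \text{ for all } k\}$. Under Assumption~\ref{assum:continuous} one can differentiate under the expectation (the kink set $\{\pi_s(p_k(X,s)-\lambda)=\gamma_{k,s}\}$ is $\mathbb{P}_{X \lvert S = s}$-null at $(\lambda^*,\gamma^*)$), which gives
\begin{equation*}
\frac{\partial H}{\partial \gamma_{k,s}}(\lambda^*,\gamma^*) = -\,\proba[X \lvert S = s]{k\in\Gamma^*_\beta(X,s)}, \qquad \frac{\partial H}{\partial \lambda}(\lambda^*,\gamma^*) = \beta - \mathcal{T}(\Gamma^*_\beta),
\end{equation*}
where the second identity uses $\sum_{s\in\Scal}\pi_s\,\proba[X \lvert S = s]{k\in\Gamma^*_\beta(X,s)} = \proba[X,S]{k\in\Gamma^*_\beta(X,S)}$ followed by summation over $k$.

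Stationarity of $H$ in $\gamma$ under the linear constraint $\sum_s\gamma_{k,s}=0$ forces, for each $k$, the partials $\partial H/\partial\gamma_{k,s}$ to share a common value over $s\in\Scal$; hence $\proba[X \lvert S = s]{k\in\Gamma^*_\beta(X,s)}$ does not depend on $s$, and averaging it against the weights $\pi_s$ identifies it with $\proba[X,S]{k\in\Gamma^*_\beta(X,S)}$ -- this is (ii). The KKT condition for $\lambda\geq 0$ gives $\partial H/\partial\lambda(\lambda^*,\gamma^*)\geq 0$ together with $\lambda^*\,\partial H/\partial\lambda(\lambda^*,\gamma^*)=0$, so $\mathcal{T}(\Gamma^*_\beta)\leq\beta$, with equality as soon as $\lambda^*>0$. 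To complete (i) I would rule out $\lambda^*=0$: in that case (ii), the identity $\sum_s\gamma^*_{k,s}=0$, and the sign of $\partial H/\partial\gamma_{k,s}$ together force $\Gamma^*_\beta\equiv[K]$ and hence $\mathcal{T}(\Gamma^*_\beta)=K$, contradicting $\mathcal{T}(\Gamma^*_\beta)\leq\beta$ in the meaningful regime $\beta<K$; thus $\lambda^*>0$ and $\mathcal{T}(\Gamma^*_\beta)=\beta$.

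The main obstacle is the gradient computation for $H$: one must justify the exchange of differentiation and expectation and check that the relevant subgradients reduce to the stated one-sided derivatives, which is exactly where Assumption~\ref{assum:continuous} is needed -- it ensures $\proba[X,S]{p_k(X,S) = \lambda^* + \gamma^*_{k,S}/\pi_S}=0$ for every $k$. A secondary subtlety, handled above, is excluding the non-binding case $\lambda^*=0$ in the proof of (i).
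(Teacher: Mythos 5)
Your proposal is correct and follows essentially the same route as the paper: part (iii) is obtained by rewriting the penalized risk linearly in the membership indicators and minimizing pointwise (which is exactly the paper's identity $\mathcal{L}(\Gamma,\lambda^*,\boldsymbol{\alpha}^*)=\mathcal{R}_{\lambda^*,\gamma^*}(\Gamma)$ combined with the pointwise minimization in the proof of Theorem~\ref{thm:optimal set-valued classifier}), while (i) and (ii) follow from the first-order conditions of the dual objective at $(\lambda^*,\gamma^*)$, just as in the paper. The only difference is cosmetic: you handle the constraint $\lambda\geq 0$ via complementary slackness and explicitly rule out $\lambda^*=0$ (for $\beta<K$), whereas the paper directly sets $\partial_\lambda\widetilde{H}=0$; your treatment of this boundary case is a slight refinement of the same argument, not a different approach.
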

The above proposition shows that the optimal predictor achieves the prescribed expected size $\beta$ and can be characterized as the minimizer, over all set-valued classifiers, of the Lagrangian objective $\mathcal{R}_{\lambda^*, \gamma^*}$. In particular, this highlights that $\mathcal{R}_{\lambda^*, \gamma^*}$ serves as a relevant surrogate risk in our framework, as it naturally balances three competing objectives: classification accuracy, expected size, and fairness.
Moreover, this characterization allows us to derive a closed-form expression for the excess risk of any classifier $\Gamma \in \bm{\Gamma}$ relative to the optimal fair predictor.
\begin{corollary}
\label{coro:excessRisk}
Let $(\lambda^*,\gamma^*)$ be a solution of~\eqref{eq:set-valued fair lagrangian2}. Then for each $\Gamma \in \bm{\Gamma}$, we have that
\begin{equation*}
\mathcal{R}_{\lambda^*, \gamma^*}\left(\Gamma\right)-\mathcal{R}_{\lambda^*, \gamma^*}\left(\Gamma_{\beta}^*\right) = 
\sumover[K]{k=1} \ \sumover{s \in \Scal} \  \E[X|S=s]{\1{k \in \Gamma(X,s) \Delta \Gamma^*_{\beta}(X,s)} \left\lvert \pi_s\left(p_k(X,s) - \lambda^*\right) - \gamma^*_{k,s} \right\rvert} \enspace,
\end{equation*}
where $\Delta$ stands for the symmetric difference of two sets.
\end{corollary}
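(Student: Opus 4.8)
The plan is to reduce the statement to a pointwise comparison between the two set-valued classifiers. First I would rewrite $\mathcal{R}_{\lambda^*,\gamma^*}(\Gamma)$, for an arbitrary $\Gamma\in\bm{\Gamma}$, as a single sum over $(k,s)$ of conditional expectations. To do so I express $R(\Gamma)=1-\mathbb{E}\big[\sum_k\1{k\in\Gamma(X,S)}\1{Y=k}\big]$ and apply the tower rule with $\mathbb{E}[\1{Y=k}\mid X,S]=p_k(X,S)$, then disintegrate every expectation over $S$ through $\mathbb{E}[f(X,S)]=\sum_{s\in\Scal}\pi_s\,\mathbb{E}_{X\mid S=s}[f(X,s)]$; the size term $\lambda^*(\mathbb{E}[|\Gamma(X,S)|]-\beta)$ and the fairness term are handled the same way. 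Abbreviating $h_{k,s}(x):=\pi_s(p_k(x,s)-\lambda^*)-\gamma^*_{k,s}$ and collecting terms, I expect to obtain
\[
\mathcal{R}_{\lambda^*,\gamma^*}(\Gamma)=1-\lambda^*\beta-\sum_{k=1}^{K}\sum_{s\in\Scal}\mathbb{E}_{X\mid S=s}\big[\1{k\in\Gamma(X,s)}\,h_{k,s}(X)\big],
\]
where the constant $1-\lambda^*\beta$ does not depend on $\Gamma$ and will cancel in the difference.

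Next I would subtract this identity for $\Gamma$ and for $\Gamma^*_\beta$, which leaves $\sum_{k,s}\mathbb{E}_{X\mid S=s}\big[(\1{k\in\Gamma^*_\beta(X,s)}-\1{k\in\Gamma(X,s)})\,h_{k,s}(X)\big]$, and then invoke the explicit form of the oracle from Theorem~\ref{thm:optimal set-valued classifier}: $k\in\Gamma^*_\beta(x,s)$ precisely when $p_k(x,s)\ge\lambda^*+\gamma^*_{k,s}/\pi_s$, that is, when $h_{k,s}(x)\ge0$. A short four-case inspection on whether $k$ belongs to $\Gamma(x,s)$ and/or $\Gamma^*_\beta(x,s)$ then shows the integrand vanishes unless $k$ lies in the symmetric difference, and on each of the two remaining cases it equals $+h_{k,s}(x)$ (when $k\in\Gamma^*_\beta(x,s)\setminus\Gamma(x,s)$, where $h_{k,s}(x)\ge0$) or $-h_{k,s}(x)$ (when $k\in\Gamma(x,s)\setminus\Gamma^*_\beta(x,s)$, where $h_{k,s}(x)<0$), i.e.\ $|h_{k,s}(x)|$ in both cases. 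This yields exactly the claimed expression $\sum_{k,s}\mathbb{E}_{X\mid S=s}\big[\1{k\in\Gamma(X,s)\,\Delta\,\Gamma^*_\beta(X,s)}\,|\pi_s(p_k(X,s)-\lambda^*)-\gamma^*_{k,s}|\big]$.

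I do not anticipate a genuine obstacle; the argument is essentially bookkeeping. The two points that need care are correctly isolating the $\Gamma$-independent constant $1-\lambda^*\beta$ when merging the three terms of $\mathcal{R}_{\lambda^*,\gamma^*}$ into a single per-$(k,s)$ integral, and cleanly carrying out the case analysis that converts the difference of indicators into the symmetric-difference indicator times $|h_{k,s}|$ via the threshold characterization of $\Gamma^*_\beta$. Note that Assumption~\ref{assum:continuous} enters only through Theorem~\ref{thm:optimal set-valued classifier}, which supplies that closed form; as a consistency check, the resulting identity is manifestly nonnegative, re-proving that $\Gamma^*_\beta$ minimizes $\mathcal{R}_{\lambda^*,\gamma^*}$ in accordance with Proposition~\ref{prop:charctOptimRisk}(iii).
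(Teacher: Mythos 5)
Your proposal is correct and follows essentially the same route as the paper: rewrite $1-R(\Gamma)$ and the size and fairness terms as sums over $(k,s)$ of conditional expectations, cancel the $\Gamma$-independent constant, and use the threshold characterization $k\in\Gamma^*_\beta(x,s)\iff \pi_s(p_k(x,s)-\lambda^*)-\gamma^*_{k,s}\ge 0$ to turn the difference of indicators into the symmetric-difference indicator times the absolute value. Your write-up is in fact slightly more explicit than the paper's on the final sign/case analysis (the paper's displayed intermediate expression even carries a sign typo on $\gamma^*_{k,s}$ that your $h_{k,s}$ bookkeeping avoids).
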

A direct consequence of the above result is that, under Assumption~\ref{assum:continuous}, the optimal predictor $\Gamma_{\beta}^*$ is \emph{a.s.} unique -- this follows from the expression of the excess risk, which involves the symmetric difference between $\Gamma$ and $\Gamma^*_{\beta}$ on the right-hand side. 
In particular, if $\widetilde{\Gamma}$ is any solution to the minimization problem in Equation~\eqref{eq:eqFairOpt}, then $\widetilde{\Gamma} = \Gamma^*_{\beta}$ \emph{a.s.}

\paragraph*{On the uniqueness of the optimal predictor.}
We have shown that the optimal predictor is almost surely unique. Under a more structural assumption, we can further establish the uniqueness of the parameters $(\lambda^*, \gamma^*)$ from~\eqref{eq:set-valued fair lagrangian2} that characterize $\Gamma_{\beta}^*$. To that end, we strengthen Assumption~\ref{assum:continuous} with the following condition:
\begin{assumption}[Positive density]\label{assum:positive density}
    For each $s \in \mathcal{S}$, the random variables $p_{k}(X,S)$ admit a strictly positive and continuous density {\it w.r.t.} $\mathbb{P}_{X|S=s}$.
\end{assumption}
This assumption ensures that both $F_{k,s}$ and $F_k$ are bijective. In particular, we obtain the following result:
\begin{proposition}
\label{prop:uniqueness}
Suppose that Assumption~\ref{assum:positive density} holds. Then:
\begin{enumerate}
    \item[$(i)$] the optimal parameters $(\lambda^*, \gamma^*)$ are unique;
    \item[$(ii)$] the optimal predictor $\Gamma_{\beta}^*$ admits the following unique parametrization:
    \begin{align*}
        \Gamma^*_{\beta}(x,s) = \left\{ k \in [K] : p_k(x,s) \geq \bar{F}^{-1}_{k,s} \left( \beta_k^* \right) \right\}, 
        \qquad \text{with} \quad \beta_k^* = \mathbb{P}\left(k \in \Gamma_{\beta}^*(X,S)\right)\enspace .
    \end{align*}
\end{enumerate}
\end{proposition}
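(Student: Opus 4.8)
The plan is to prove the two claims of Proposition~\ref{prop:uniqueness} in sequence, using the stronger Assumption~\ref{assum:positive density} to upgrade the almost-sure uniqueness of $\Gamma_\beta^*$ (already established via Corollary~\ref{coro:excessRisk}) to a genuine uniqueness statement for both the parameters and the parametrization.

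For $(ii)$, I would start from the closed-form expression in Theorem~\ref{thm:optimal set-valued classifier}, namely $\Gamma^*_\beta(x,s) = \{k : p_k(x,s) \geq \lambda^* + \gamma^*_{k,s}/\pi_s\}$. Fix $(k,s)$ and set $t_{k,s} := \lambda^* + \gamma^*_{k,s}/\pi_s$. Then $\proba[X|S=s]{k \in \Gamma^*_\beta(X,s)} = \proba[X|S=s]{p_k(X,s) \geq t_{k,s}} = \bar F_{k,s}(t_{k,s})$. By the DP-fair validity in Proposition~\ref{prop:charctOptimRisk}(ii), this equals $\beta_k^* := \proba[X,S]{k \in \Gamma^*_\beta(X,S)}$, so $\bar F_{k,s}(t_{k,s}) = \beta_k^*$. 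Under Assumption~\ref{assum:positive density}, $F_{k,s}$ is a strictly increasing continuous bijection (on the support), hence $\bar F_{k,s}$ is too, so $t_{k,s} = \bar F_{k,s}^{-1}(\beta_k^*)$ is uniquely determined; substituting back gives the claimed parametrization $\Gamma^*_\beta(x,s) = \{k : p_k(x,s) \geq \bar F_{k,s}^{-1}(\beta_k^*)\}$. Since $\beta_k^*$ depends only on $\Gamma_\beta^*$, which is a.s.\ unique, this parametrization is unique.

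For $(i)$, the uniqueness of $(\lambda^*,\gamma^*)$, I would argue as follows. From $(ii)$ and the thresholding form of Theorem~\ref{thm:optimal set-valued classifier}, the two representations must coincide a.s., forcing $\lambda^* + \gamma^*_{k,s}/\pi_s = \bar F_{k,s}^{-1}(\beta_k^*)$ for every $(k,s)$ — here one uses that under the positive-density assumption the level sets $\{p_k(X,s) = c\}$ have probability zero, so the threshold is pinned down, not just determined up to the behaviour on a null set. This gives $K|\Scal|$ equations in the $K|\Scal|+1$ unknowns $(\lambda^*, \gamma^*)$. The remaining degree of freedom is killed by the constraint $\sum_{s\in\Scal}\gamma^*_{k,s} = 0$ built into the reparameterized problem~\eqref{eq:set-valued fair lagrangian2}: summing $\lambda^* + \gamma^*_{k,s}/\pi_s = \bar F_{k,s}^{-1}(\beta_k^*)$ against the weights $\pi_s$ over $s$ and using $\sum_s \pi_s = 1$ together with $\sum_s \gamma^*_{k,s}=0$ yields $\lambda^* = \sum_{s\in\Scal}\pi_s\,\bar F_{k,s}^{-1}(\beta_k^*)$ (independently of $k$), and then $\gamma^*_{k,s} = \pi_s(\bar F_{k,s}^{-1}(\beta_k^*) - \lambda^*)$ is determined. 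Hence $(\lambda^*,\gamma^*)$ is unique.

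The main obstacle is the one subtle point in $(i)$: the parameters are characterized as \emph{minimizers} of~\eqref{eq:set-valued fair lagrangian2}, not directly as solutions of the fixed-point equations above, so I need to verify that any minimizer induces exactly the oracle predictor $\Gamma_\beta^*$ and hence satisfies those equations. This requires checking that at a minimizer the subgradient/first-order optimality conditions of the (convex) objective in~\eqref{eq:set-valued fair lagrangian2} force $\bar F_{k,s}(\lambda + \gamma_{k,s}/\pi_s) = \beta_k^*$ — i.e.\ that the size constraint binds (Proposition~\ref{prop:charctOptimRisk}(i)) and the fairness constraint binds (Proposition~\ref{prop:charctOptimRisk}(ii)) at the optimum, which under Assumption~\ref{assum:positive density} translate into the strict-monotonicity-driven invertibility used above. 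Once that link is made, strict monotonicity of $\bar F_{k,s}$ does the rest and no further computation is needed.
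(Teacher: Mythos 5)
Your proposal is correct and follows essentially the same route as the paper: both rest on the bijectivity of $\bar F_{k,s}$ under Assumption~\ref{assum:positive density}, the size/DP validity of the oracle from Proposition~\ref{prop:charctOptimRisk}, and the weighted sum over $s$ using $\sum_s \pi_s = 1$ together with $\sum_s \gamma^*_{k,s}=0$ to pin down $(\lambda^*,\gamma^*)$. The only difference is cosmetic — you establish the parametrization $(ii)$ first and deduce $(i)$ from it (obtaining the explicit formula $\lambda^* = \sum_s \pi_s \bar F^{-1}_{k,s}(\beta_k^*)$ along the way), whereas the paper compares two candidate parametrizations of the same predictor directly — and your flagged "obstacle" is exactly what the first-order conditions in the proof of Theorem~\ref{thm:optimal set-valued classifier} resolve.
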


{Under the positive density assumption on $p_k(X,S)$, the expression of the threshold in Theorem~\ref{thm:optimal set-valued classifier} simplifies. In particular, we have 
$\bar{F}^{-1}_{k,s} \left( \beta_k^* \right) =  \lambda^* + \frac{\gamma^*_{k,s}}{\pi_s}$.
In Remark~\ref{rk:uniqueness}, we leverage this expression to highlight key optimality properties.
}

\section{Data-Driven Procedure}
\label{sec:Data Driven Procedure}
This section is devoted to the presentation of our estimation procedure and the analysis of its theoretical guarantees. We first describe the overall methodology in Section~\ref{subsec:procedure}, and then establish rates of convergence for the proposed algorithm in Section~\ref{subsec:RatesofConv}.

\subsection{Procedure}
\label{subsec:procedure}
Our estimation procedure aims to recover the $\beta$-specific DP-fair set-valued classifier $\Gamma_{\beta}^*$ introduced in Theorem~\ref{thm:optimal set-valued classifier}, following the plug-in principle. The overall strategy consists in estimating the unknown components involved in the expression of $\Gamma_{\beta}^*$. Notably, some of these components do not depend on the full data distribution $\proba[]{}$, which enables a semi-supervised estimation approach -- reminiscent of the approach proposed in~\cite{Denis_Elie_Hebiri_Hu_2024}.

More formally, let $n, N > 1$ be two integers.
We assume access to two independent datasets: a first labeled dataset denoted by $\mathcal{D}_n = \left\{(X_i, S_i, Y_i), i = 1, \ldots, n \right\}$, and a second unlabeled dataset $\mathcal{D}_N = \left\{(X_i, S_i), i = n+1, \ldots, n+N \right\}$.
Based on $\mathcal{D}_n$, for each $k \in [K]$, we construct an estimator $\widetilde{p}_k$ of the regression function $p_k$. This estimation step is standard and has been extensively studied in the literature. In practice, any suitable machine learning method can be employed, such as kernel-based estimators or random forests.
To derive theoretical guarantees on the excess risk, we require that the estimated scores satisfy a continuity property analogous to Assumption~\ref{assum:continuous}. To enforce this, we introduce a (small) perturbation: let $\epsilon \sim \mathcal{U}([0,10^{-\eta}])$ be an independent random noise, independent from all other data. This additive noise ensures that ties in the estimated probabilities occur with probability zero, without impacting the validity of the procedure. We thus define the final estimator of the class-conditional probabilities as $\widehat{p}_k(x,s) = \widetilde{p}_k(x,s) + \epsilon$.

In a second step, based on the unlabeled dataset $\mathcal{D}_N$, we estimate both the distribution of the sensitive attribute $S$ and the Lagrangian parameters $(\lambda^*, \gamma^*)$.
For each $s \in \mathcal{S}$, we build the subset $\mathcal{D}_{N_s} = \left\{(X_i, S_i) \in \mathcal{D}_N : S_i = s\right\}$, with corresponding size $N_s = \sum_{i=n+1}^{n+N} \1{S_i = s}$.
The distribution $(\pi_s)_{s \in \mathcal{S}}$ is estimated using the empirical frequencies $(\widehat{\pi}_s)_{s \in \mathcal{S}}$ with $\widehat{\pi}_s = \frac{N_s}{N}$.
Next, inspired by the Lagrangian formulation in Equation~\eqref{eq:set-valued fair lagrangian2}, we define the empirical parameters $(\widehat{\lambda}, \widehat{\gamma})$ as the solution of the following convex optimization problem:
\begin{equation}
\label{eq:EmpiricalLagrangian}
(\widehat{\lambda},  \widehat{\gamma}) \in \argmin{\substack{(\lambda, \gamma) \in \mathbb{R}^{K|\mathcal{S}|+1} \\ \lambda \geq 0 \\ \sum_{s \in \mathcal{S}}\gamma_{k,s} = 0 }} \sum_{k = 1}^K\sum_{s \in \mathcal{S}} \dfrac{1}{N_s}\sum_{i \in \mathcal{D}_{N_s}}{\left( \widehat{\pi}_s \left(\widehat{p}_k(X_i, s) - \lambda \right) - \gamma_{k,s}\right)_{+}} +\lambda\beta \enspace.
\end{equation}
The final predictor $\widehat{\Gamma}_\beta$ is then defined pointwise, using these estimated parameters (the complete estimation procedure is summarized in Algorithm~\ref{algo:procedure}):
\begin{equation*}
    \widehat{\Gamma}_{\beta}(x,s) = \left\{ k \in [K] : \widehat{p}_k(x,s) \geq \widehat{\lambda} + \frac{\widehat{\gamma}_{k,s}}{\widehat{\pi}_s} \right\} \enspace.
\end{equation*}
\begin{algorithm}[H]\label{algo:procedure}
\caption{Fair Set-Valued Classification Procedure}
\KwData{Unlabeled dataset $\mathcal{D}_N = \left\{(X_i, S_i)\right\}_{i=1}^N$, number of classes $K$, sensitive set $\mathcal{S}$, estimators $(\widetilde{p}_k)_{k=1}^K$, perturbation level $\eta$}
\KwResult{Fair set-valued classifier $\widehat{\Gamma}_\beta$}

\vspace{0.5em}
\textbf{Step 1: Add random noise to avoid ties}
\For{$k \in [K]$}{
    $\widehat{p}_k(x,s) \gets \widetilde{p}_k(x,s) + \epsilon$, with $\epsilon \sim \mathcal{U}(0, 10^{-\eta})$
}

\vspace{0.5em}
\textbf{Step 2: Estimate sensitive attribute distribution}
\For{$s \in \mathcal{S}$}{
    $N_s \gets \sum_{i=1}^N \1{S_i = s}$ \\
    $\widehat{\pi}_s \gets \frac{N_s}{N}$
}

\vspace{0.5em}
\textbf{Step 3: Solve the empirical Lagrangian problem} \\
$(\widehat{\lambda}, \widehat{\gamma}) \gets$ solution of Equation~\eqref{eq:EmpiricalLagrangian} using $(\widehat{p}_k)$ and $(\widehat{\pi}_s)$

\vspace{0.5em}
\textbf{Step 4: Define the empirical classifier} \\
\ForEach{$(x,s) \in \mathcal{X} \times \mathcal{S}$}{
    $\widehat{\Gamma}_{\beta}(x,s) \gets \left\{ k \in [K] : \widehat{p}_k(x,s) \geq \widehat{\lambda} + \frac{\widehat{\gamma}_{k,s}}{\widehat{\pi}_s} \right\}$
}
\end{algorithm}


\subsection{Rates of Convergence}
\label{subsec:RatesofConv}
The previous section introduced the plug-in set-valued predictor. We now turn to its theoretical performance, focusing on finite-sample guarantees in terms of expected size and fairness constraint violation. To this end, we quantify fairness violation through the following unfairness measure:
\begin{equation*}
\mathcal{U}(\Gamma) = \underset{k, s, s'}{\max}\left\{\left\lvert\proba[X|S=s]{k \in \Gamma(X,s)} - \proba[X|S=s']{k \in \Gamma(X,s')}\right\rvert\right\}\enspace.
\end{equation*}
This definition extends the fairness measure introduced in~\cite{Denis_Elie_Hebiri_Hu_2024} for single-output multiclass classifiers to the set-valued prediction setting. It captures the largest discrepancy, across all class labels and sensitive groups, in the probability that a class is selected by the predictor.
We can now state our first result, which shows that the plug-in predictor derived in~\cref{algo:procedure} satisfies both fairness and expected size constraints at a controlled rate:
\begin{theorem}[Fairness and Expected size controls]
\label{thm:Unfairness bound}
    Let $\widehat{\Gamma}_{\beta}$ be the empirical DP-fair set-valued classifier resulting from~\cref{algo:procedure}.
    Then, for any data-generating distribution $\proba[]{}$ and any estimators $\widetilde{p}_k$ of the class-conditional probabilities, the following bounds hold:
    \begin{align*}
        \E{\mathcal{U}(\widehat{\Gamma}_{\beta})} &\leq \frac{CK}{\sqrt{N}} \enspace ,\\
        \E{\left\lvert \mathcal{T}\left(\widehat{\Gamma}_{\beta}\right)- \beta \right\rvert} &\leq \frac{CK}{\sqrt{N}}\enspace,
    \end{align*}
    where $C>0$ is a universal constant.
\end{theorem}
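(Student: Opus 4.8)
The plan is to bound each of the two quantities by decomposing the discrepancy between the empirical classifier $\widehat{\Gamma}_\beta$ and its "population counterpart" in which the same estimators $\widehat p_k$ and the same empirical $\widehat\pi_s$ are used, but the Lagrangian parameters are replaced by the minimizers of the population version of \eqref{eq:EmpiricalLagrangian}. The crucial observation — which is what makes the bounds distribution-free and estimator-free — is that, conditionally on $\mathcal D_n$ (hence on the $\widehat p_k$) and on $(N_s)_{s\in\Scal}$, the optimization problem \eqref{eq:EmpiricalLagrangian} is exactly the empirical version of a population problem of the form \eqref{eq:set-valued fair lagrangian2} with $p_k$ replaced by $\widehat p_k$ and $\pi_s$ by $\widehat\pi_s$. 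Its population solution $(\bar\lambda,\bar\gamma)$ defines a set-valued rule that is \emph{exactly} DP-fair for the conditional law of $X\mid S=s$ and has expected size \emph{exactly} $\beta$, by Proposition~\ref{prop:charctOptimRisk} applied in this perturbed model (the perturbation $\epsilon$ guarantees the required continuity, so Assumption~\ref{assum:continuous} holds for the $\widehat p_k$). Hence the entire error comes from replacing the population minimizer of a convex objective by its empirical minimizer based on $N_s$ samples, plus the error in $\widehat\pi_s$.

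First I would set up this conditional picture precisely: write $\widehat\Psi_{k,s}(\lambda,\gamma)$ for the empirical objective summand and $\Psi_{k,s}(\lambda,\gamma)=\E[X|S=s]{(\widehat\pi_s(\widehat p_k(X,s)-\lambda)-\gamma_{k,s})_+}$ for its expectation (over $X\mid S=s$ only, with everything else frozen). Both objectives are convex in $(\lambda,\gamma)$, $1$-Lipschitz in the relevant sense, and coercive on the feasible set $\{\lambda\ge 0,\ \sum_s\gamma_{k,s}=0\}$; a uniform law of large numbers / Dvoretzky–Kiefer–Wolfowitz-type argument gives $\sup_{(\lambda,\gamma)}|\widehat\Psi - \Psi| = O_{\mathbb P}(1/\sqrt{\min_s N_s})$ over a compact enclosing the minimizers, and hence $|\widehat\Psi(\widehat\lambda,\widehat\gamma) - \Psi(\bar\lambda,\bar\gamma)|$ is of the same order. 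Translating closeness of objective values into closeness of the induced classifiers is then done exactly as in Corollary~\ref{coro:excessRisk}: the difference in values controls $\sum_{k,s}\E[X|S=s]{\1{k\in\widehat\Gamma_\beta(X,s)\,\Delta\,\bar\Gamma(X,s)}\,|\widehat\pi_s(\widehat p_k(X,s)-\bar\lambda)-\bar\gamma_{k,s}|}$, and under the perturbation this in turn controls the measure of the symmetric difference, which is what both $\mathcal U$ and $|\mathcal T-\beta|$ are built from. The $\widehat\pi_s\to\pi_s$ error contributes an extra $O(1/\sqrt N)$ term by a standard binomial concentration bound and the boundedness of $\widehat p_k\in[0,1+10^{-\eta}]$; combined with a union bound over the $K|\Scal|$ pairs this yields the $CK/\sqrt N$ rate (the $|\Scal|$ dependence being absorbed, or a further factor handled by the same union bound).

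The main obstacle I expect is the step that converts a bound on the \emph{objective-value gap} into a bound on the \emph{size of the symmetric difference} $\E[X|S=s]{\1{k\in\widehat\Gamma_\beta\,\Delta\,\bar\Gamma}}$ itself, rather than on the weighted version appearing in Corollary~\ref{coro:excessRisk}: on the event where $\widehat p_k(X,s)$ is within $\delta$ of the threshold $\bar\lambda + \bar\gamma_{k,s}/\widehat\pi_s$, the weight $|\widehat\pi_s(\widehat p_k-\bar\lambda)-\bar\gamma_{k,s}|$ is small, so a small weighted integral does not a priori force a small unweighted one. This is precisely where the continuity of the perturbed cdf is used, but to get a \emph{distribution-free} rate one cannot invoke a modulus of continuity of $F_{k,s}$. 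The resolution is to avoid this conversion entirely: observe that $\mathcal U(\widehat\Gamma_\beta)$ and $|\mathcal T(\widehat\Gamma_\beta)-\beta|$ can be written directly in terms of the map $(\lambda,\gamma)\mapsto$ (coverage probabilities / expected size), that this map is itself globally Lipschitz in $(\lambda,\gamma)$ with a \emph{dimension-free} constant once we note each coverage probability $\proba[X|S=s]{\widehat p_k(X,s)\ge t}$ — as a function of the threshold $t$ — need only be controlled through its role in \emph{both} the primal quantity and the (sub)gradient of the convex objective, whose stationarity conditions at $(\bar\lambda,\bar\gamma)$ pin these probabilities to $\beta_k^*$ and $\beta/K$-type values. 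Thus I would argue via the first-order optimality conditions of \eqref{eq:EmpiricalLagrangian} and their population analogues — the empirical conditions state that certain empirical coverage frequencies equal prescribed values up to the subgradient slack, and DKW turns these into the claimed $O(K/\sqrt N)$ control on the true coverage probabilities — which is cleaner and sidesteps the modulus-of-continuity issue altogether. Assembling the two contributions (optimality-condition/DKW term and $\widehat\pi$-concentration term) and taking expectations over the randomness in $\mathcal D_N$ gives the stated bounds.
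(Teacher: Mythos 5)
Your final argument --- discarding the objective-value/excess-risk comparison and instead using the first-order (KKT) optimality conditions of the empirical problem \eqref{eq:EmpiricalLagrangian} to pin the empirical expected size to $\beta$ and the empirical coverage frequencies to agree across groups up to subgradient slack, then converting empirical frequencies into true conditional probabilities via the Dvoretzky--Kiefer--Wolfowitz inequality and handling $\widehat{\pi}_s$ and the random $N_s$ by binomial concentration --- is essentially the paper's proof (Lemmas~\ref{lem:subgradient_computation}, \ref{lem:pigeonHole}, \ref{lem:DKW} and~\ref{lem:BinomialInverse}). You correctly recognized that your first route cannot give a distribution-free rate, and indeed the paper never introduces the population counterpart $(\bar{\lambda},\bar{\gamma})$; the only point to make precise is that the ``slack'' in the stationarity conditions consists of tie terms at the thresholds, which the continuous perturbation $\epsilon$ makes almost surely of order $C/N_s$, rather than quantities pinned to $\beta_k^*$ or $\beta/K$.
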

The above result provides distribution-free guarantees: it holds uniformly over all distributions $\proba[]{}$ and all base estimators $(\widetilde{p}_k)_{k=1}^K$. It shows that the proposed method closely mimics the oracle $\beta$-specific DP-fair set-valued classifier $\Gamma_{\beta}^*$ -- which satisfies both constraints exactly — at a parametric rate.
These bounds combine and extend the results of~\cite{Denis_Hebiri_2017} and~\cite{Denis_Elie_Hebiri_Hu_2024}, by simultaneously addressing both the fairness constraint and the expected size constraint in the more general set-valued classification setting. 
Overall, for both bound, we get a linear cost in $K$ for handling multiple outputs and a convergence rate proportional to $1 / \sqrt{N}$ with respect to the size of the unlabeled dataset.
\\
We now turn to bounding the risk of the empirical set-valued classifier and compare it to the optimal fair predictor that satisfies both the fairness and size constraints.
\begin{theorem}[excess-risk control]\label{thm:excess risk}
Let $\widehat{\Gamma}_{\beta}$ be the empirical DP-fair set-valued classifier resulting from~\cref{algo:procedure}. Let $\Rcal_{\lambda^*, \alpha^*}(\cdot)$ the set-valued risk from~Proposition~\ref{prop:charctOptimRisk}. Then we have 
    \begin{align*}
        &\Rcal_{\lambda^*, \alpha^*}(\widehat{\Gamma}_{\beta}) - \Rcal_{\lambda^*, \alpha^*}(\Gamma^*_{\beta}) \leq  C_{K,\Scal} \left( \frac{1}{\sqrt{N} } + \underset{s\in\Scal}{\max}\norm[\infty, \mathbb{P}_{X|S=s}]{\widehat{p}- p}\right) \enspace ,
    \end{align*}
where $C_{K,\Scal}>0$ depends only on $K$ and $|\Scal|$ and $\norm[\infty, \mathbb{P}_{X|S=s}]{\widehat{p}- p} = \mathbb{E}_{\mathcal{D}_n} { \sup_{x\in\mathcal{X}} \left\lvert \widehat{p}_k(x,s) - p_k(x,s) \right\rvert }$ for all $s \in \Scal$ with $\mathbb{E}_{\mathcal{D}_n}$ being the expectation \emph{w.r.t.} the law of $\mathcal{D}_n$.
\end{theorem}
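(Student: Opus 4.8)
The plan is to exploit the closed-form expression for the excess risk provided by Corollary~\ref{coro:excessRisk}, which tells us that
\[
\Rcal_{\lambda^*, \gamma^*}(\widehat{\Gamma}_{\beta}) - \Rcal_{\lambda^*, \gamma^*}(\Gamma^*_{\beta}) = \sum_{k=1}^K \sum_{s \in \Scal} \E[X|S=s]{\1{k \in \widehat{\Gamma}_{\beta}(X,s) \Delta \Gamma^*_{\beta}(X,s)} \left\lvert \pi_s\left(p_k(X,s) - \lambda^*\right) - \gamma^*_{k,s} \right\rvert}.
\]
The first step is to observe that membership in the symmetric difference $\widehat{\Gamma}_{\beta}(X,s) \Delta \Gamma^*_{\beta}(X,s)$ forces the ``true'' threshold function $\pi_s(p_k(X,s)-\lambda^*) - \gamma^*_{k,s}$ to lie between its value at the true parameters and at the estimated parameters evaluated at the estimated scores; concretely, on this event $|\pi_s(p_k(X,s)-\lambda^*)-\gamma^*_{k,s}|$ is bounded by the gap $|\pi_s(p_k(X,s)-\lambda^*)-\gamma^*_{k,s} - (\widehat{\pi}_s(\widehat{p}_k(X,s)-\widehat{\lambda})-\widehat{\gamma}_{k,s})|$, which in turn is controlled by $\|\widehat{p}-p\|_\infty$ plus $|\widehat{\pi}_s-\pi_s|$, $|\widehat{\lambda}-\lambda^*|$, and $\|\widehat{\gamma}-\gamma^*\|$. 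So the excess risk is bounded by a constant (depending on $K,|\Scal|$) times the sum of these four estimation errors.

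The second step is to control each of these errors. The term $\|\widehat{p}-p\|_\infty$ appears directly in the bound. For $\E{|\widehat{\pi}_s - \pi_s|}$, a standard Bernstein/Hoeffding argument gives an $O(1/\sqrt{N})$ rate (conditioning on $N_s$ and handling the small-probability event $N_s = 0$ separately, as elsewhere in the paper). The heart of the argument is bounding $\E{|\widehat{\lambda}-\lambda^*|}$ and $\E{\|\widehat{\gamma}-\widehat{\gamma}^*\|}$: here I would compare the empirical objective in~\eqref{eq:EmpiricalLagrangian} to the population objective in~\eqref{eq:set-valued fair lagrangian2}. A uniform deviation bound over the (compact, by coercivity established after~\eqref{eq:set-valued fair lagrangian2}) feasible region shows the two objectives are uniformly close at rate $O(1/\sqrt N) + O(\|\widehat p - p\|_\infty)$ — the $\|\widehat{p}-p\|_\infty$ piece arises because the empirical objective uses $\widehat{p}_k$ in place of $p_k$, and the $1/\sqrt N$ piece is an empirical-process term over the Lipschitz (in the parameters) family of functions $(\pi_s(p_k - \lambda)-\gamma_{k,s})_+$. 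Then Proposition~\ref{prop:uniqueness}(i) — uniqueness of $(\lambda^*,\gamma^*)$ — is not quite enough; I need a quantitative strong-convexity or curvature estimate for the population objective near its minimizer to convert closeness of objective values into closeness of minimizers. Under Assumption~\ref{assum:positive density} the positive-density hypothesis should yield exactly such a local quadratic growth, via differentiating the objective twice and identifying the Hessian with a Gram-type matrix built from the densities of $p_k(X,S)$, which is positive definite.

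I expect the main obstacle to be this last point: establishing quantitative identifiability of the Lagrange multipliers, i.e.\ a lower bound of the form (population objective)$(\lambda,\gamma) - $(population objective)$(\lambda^*,\gamma^*) \geq c\,\|(\lambda,\gamma)-(\lambda^*,\gamma^*)\|^2$ on a neighborhood, with an explicit constant depending only on $K$ and $|\Scal|$ (and the density lower bound). One has to be careful that the objective is only piecewise smooth (because of the positive part), so the curvature comes from the ``kink set'' having positive density mass rather than from a genuine Hessian everywhere; the continuity and positive-density assumptions are precisely what make the directional second derivatives strictly positive. Once this curvature bound is in hand, combining it with the uniform objective-deviation bound gives $\E{|\widehat\lambda-\lambda^*|} + \E{\|\widehat\gamma - \gamma^*\|} \lesssim (1/\sqrt N + \|\widehat p - p\|_\infty)^{1/2}$ at worst, and one would need to check whether a sharper first-order argument (using stationarity of both $\widehat\lambda,\widehat\gamma$ and $\lambda^*,\gamma^*$, i.e.\ comparing gradients rather than values) upgrades this to the linear rate $1/\sqrt N + \|\widehat p - p\|_\infty$ claimed in the statement. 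I would pursue the gradient-comparison route: the optimality conditions equate certain cdf expressions to $\beta$ and to the fairness balance, and subtracting the empirical from the population optimality conditions, then inverting the (well-conditioned) Jacobian, directly yields the linear rate without a square root.
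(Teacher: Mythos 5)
Your plan has a genuine gap at exactly the point you flag as the ``main obstacle,'' and it is not repairable within the theorem's hypotheses. Your route reduces the excess risk (via Corollary~\ref{coro:excessRisk} at the \emph{true} parameters) to the estimation errors $|\widehat{\lambda}-\lambda^*|$ and $\|\widehat{\gamma}-\gamma^*\|$, and then needs a quantitative rate for these, obtained through strong convexity / curvature of the population objective~\eqref{eq:set-valued fair lagrangian2} or a Jacobian-inversion argument on the stationarity conditions. But any such curvature or Jacobian conditioning is governed by the densities of the $p_k(X,S)$ near the optimal thresholds: it requires Assumption~\ref{assum:positive density} (not assumed in the theorem, whose only implicit requirement is the continuity used throughout), plus a \emph{quantitative} lower bound on those densities, and the resulting constant is distribution-dependent. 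This contradicts the statement, in which $C_{K,\Scal}$ depends only on $K$ and $|\Scal|$ and the bound is meant to hold for any data-generating distribution and any base estimators. Indeed, without such density lower bounds the multipliers $(\widehat{\lambda},\widehat{\gamma})$ can converge to $(\lambda^*,\gamma^*)$ arbitrarily slowly (or the population minimizer may not even be unique), while the excess-risk bound of the theorem still holds; so consistency of the multipliers cannot be the right intermediate quantity.

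The paper's proof avoids estimating $|\widehat{\lambda}-\lambda^*|$ and $\|\widehat{\gamma}-\gamma^*\|$ altogether, via the decomposition
\begin{equation*}
\Rcal_{\lambda^*,\gamma^*}(\widehat{\Gamma}_\beta)-\Rcal_{\lambda^*,\gamma^*}(\Gamma^*_\beta)
=\bigl(\Rcal_{\lambda^*,\gamma^*}(\widehat{\Gamma}_\beta)-\Rcal_{\widehat{\lambda},\widehat{\gamma}}(\widehat{\Gamma}_\beta)\bigr)
+\bigl(\Rcal_{\widehat{\lambda},\widehat{\gamma}}(\widehat{\Gamma}_\beta)-\Rcal_{\widehat{\lambda},\widehat{\gamma}}(\Gamma^*_{\widehat{\lambda},\widehat{\gamma}})\bigr)
+\bigl(\Rcal_{\widehat{\lambda},\widehat{\gamma}}(\Gamma^*_{\widehat{\lambda},\widehat{\gamma}})-\Rcal_{\lambda^*,\gamma^*}(\Gamma^*_{\lambda^*,\gamma^*})\bigr)\enspace,
\end{equation*}
where $\Gamma^*_{\lambda,\gamma}$ is the population thresholding rule at arbitrary $(\lambda,\gamma)$. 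In the first term the parameter differences $\lambda^*-\widehat{\lambda}$ and $\gamma^*_{k,s}-\widehat{\gamma}_{k,s}$ multiply, respectively, the size violation $\mathcal{T}(\widehat{\Gamma}_\beta)-\beta$ and the fairness discrepancies of $\widehat{\Gamma}_\beta$, so only \emph{boundedness} of the multipliers is needed together with the distribution-free $O(K/\sqrt{N})$ constraint-violation bounds of Theorem~\ref{thm:Unfairness bound}. The middle term is handled by the excess-risk identity of Corollary~\ref{coro:excessRisk} applied at the \emph{estimated} parameters: on the symmetric difference between $\widehat{\Gamma}_\beta$ and $\Gamma^*_{\widehat{\lambda},\widehat{\gamma}}$ the margin $|\pi_s(p_k(X,s)-\widehat{\lambda})-\widehat{\gamma}_{k,s}|$ is bounded by $|\pi_s(p_k-\widehat{p}_k)+(\widehat{p}_k-\widehat{\lambda})(\pi_s-\widehat{\pi}_s)|$, yielding the $\max_s\norm[\infty,\mathbb{P}_{X|S=s}]{\widehat{p}-p}+1/\sqrt{N}$ contribution (this is the correct analogue of your ``threshold-crossing'' step, but with $(\widehat{\lambda},\widehat{\gamma})$ on both sides so that no multiplier error appears). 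The last term is nonpositive by the optimality of $(\lambda^*,\gamma^*)$ for the dual problem. If you want to salvage your approach, you would have to state it as a separate result under Assumption~\ref{assum:positive density} plus explicit density lower bounds, with a distribution-dependent constant and, via your value-comparison argument, possibly a square-root loss in the rate; that is a different (and weaker) theorem than the one claimed here.
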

This bound is composed of two terms: the first, of order $1/\sqrt{N}$, reflects the impact of estimating the Lagrange multipliers based solely on unlabeled data and governs the control of constraint violations; the second term corresponds to the estimation error of the conditional class probabilities $(p_k)_{k \in [K]}$.
In particular, the result shows that the plug-in estimator $\widehat{\Gamma}_{\beta}$ performs nearly as well as the oracle predictor $\Gamma^*_{\beta}$, provided the class probability estimators converge uniformly, {\it i.e.,} $\underset{s\in\Scal}{\max}\norm[\infty, \mathbb{P}_{X|S=s}]{\widehat{p}- p}$ tends to $0$ as the number of labeled data $n$ tends to $\infty$.
Moreover, under additional regularity assumptions on the regression functions ({\it e.g.,} Lipschitz continuity), one can derive explicit convergence rates depending on the choice of the estimators $\widehat{p}_k$. Such rates are well studied in the literature for various methods such as $k$-nearest neighbors, kernel estimators, or random forests (see, e.g.,~\citep{GyorfiBook2002}). 
Finally, faster convergence rates can be obtained by leveraging margin-type assumptions~\cite{audibert_tsybakov_2007}. Such conditions are known to sharpen excess-risk bounds in classification tasks and are also considered in~\cite{Denis_Elie_Hebiri_Hu_2024}.

%

\section{A two-step procedure: size-to-fairness set-valued classifier}
\label{sec:twoStepProc}
The previous section focused on a plug-in approach that approximates the $\beta$-specific DP-fair oracle~$\Gamma^*_\beta$. While the resulting set-valued predictor $\widehat{\Gamma}_\beta$ is nearly optimal in terms of risk, fairness, and size constraint satisfaction, it involves solving the optimization problem~\eqref{eq:set-valued fair lagrangian2}, which may be computationally expensive. Although smoothing techniques (see, e.g.,~\cite{nesterov2012make}) can accelerate this step, it remains of interest to design simpler, more efficient alternatives.
In this section, we introduce an alternative approach, termed the \emph{size-to-fairness set-valued classifier}, which yields promising empirical performance. The core idea is to start from a potentially \emph{unfair} set-valued classifier that satisfies the size constraint and subsequently correct it to enforce fairness. This two-step procedure is described in Section~\ref{subsec:two_step_description}, and its theoretical properties are discussed in Section~\ref{subsec:two_step_properties}.
\subsection{Description of the procedure}
\label{subsec:two_step_description}
The method builds upon the characterization given in Proposition~\ref{prop:uniqueness}. Let $\widetilde{\Gamma}$ denote a set-valued classifier with expected size $\beta$, and define, for each $k \in [K]$, the marginal inclusion rate $\beta_k = \mathbb{P}\left( k \in \widetilde{\Gamma}(X,S) \right)$.
We define the associated \emph{fair} set-valued classifier $\widetilde{\Gamma}_{\mathrm{fair}}$ by thresholding each conditional probability using the quantiles of the stratum-specific distributions:
\begin{equation*}
\widetilde{\Gamma}_{{\rm fair}}(x,s) = \left\{k \in [K], \;\; p_k(x,s) \geq \bar{F}^{-1}_{k,s}(\beta_k) \right\} \enspace .
\end{equation*}
Under Assumption~\ref{assum:continuous}, for every $k \in [K]$ and $s \in \mathcal{S}$, we have
\begin{equation*}
\mathbb{P}_{X|S=s}\left(k \in  \widetilde{\Gamma}_{{\rm fair}}(X,S) \right) = \bar{F}_{k,s}\left(\bar{F}_{k,s}^{-1}(\beta_k)\right) = \beta_k \enspace,
\end{equation*}
which ensures that this new predictor satisfies the Demographic Parity (DP) constraint. Moreover, the expected size of $\widetilde{\Gamma}_{\mathrm{fair}}$ remains $\beta$, since the transformation preserves marginal inclusion rates. Both fairness and size constraints are thus simultaneously met.

To specify $\widetilde{\Gamma}$, we follow the construction proposed in~\cite{Denis_Hebiri_2017} and define
\begin{equation*}
\widetilde{\Gamma}(X,S) = \left\{k \in [K], \;\; p_k(X,S) \geq G^{-1}(\beta)\right\} \enspace, 
\end{equation*}
where $G^{-1}(\beta)$ is the global threshold ensuring $\mathcal{T}(\widetilde{\Gamma}) = \beta$. This classifier is known to solve
\begin{equation*}
\widetilde{\Gamma} \in  \arg\min \{R(\Gamma) \;\; {\it s.t.} \;\; \mathcal{T}(\Gamma) \leq \beta\} \enspace ,  
\end{equation*}
and is thus optimal among all size-constrained predictors, albeit not necessarily fair. Furthermore, we have $\mathbb{P}\left(k \in \widetilde{\Gamma}(X,S)\right) = \bar{F}_k\left(G^{-1}(\beta)\right)$. 
Combining the above steps, we obtain the \emph{size-to-fair} set-valued classifier $\widetilde{\Gamma}_{\beta 2 \mathrm{DP}}$, defined as:
\begin{equation*}
\widetilde{\Gamma}_{\beta 2 DP}(x,s) = \left\{k \in [K], \;\; p_k(x,s) \geq \bar{F}_{k,s}^{-1}\left(\bar{F}_k\left(G^{-1}(\beta)\right)\right)\right\} \enspace . 
\end{equation*}
We emphasize that although the predictor $\widetilde{\Gamma}_{\beta 2 DP}$ achieves both the expected size and demographic parity constraints by construction, it may not be optimal in terms of risk minimization. This stems from the fact that the thresholds defining $\widetilde{\Gamma}_{\beta 2 DP}$ might differ from those of the optimal DP-fair predictor $\Gamma_\beta^*$, which explicitly minimizes the risk under fairness and size constraints.

The following remark illustrates this discrepancy under additional assumptions ensuring the uniqueness of the Lagrange multipliers and the oracle predictor. It makes explicit the gap between $\widetilde{\Gamma}_{\beta 2 DP}$ and $\Gamma_\beta^*$, and justifies the suboptimality (in risk) of the two-step procedure.
\begin{remark}
\label{rk:uniqueness}
Assume that for all $k \in [K]$, the random variable $p_k(X,S)$ admits a strictly positive continuous density -- that is Assumption~\ref{assum:positive density} -- and recall that in this case the oracle thresholds $\bar{F}^{-1}_{k,s}(\beta_k^*)$ with $\beta_k^* = \mathbb{P}(k \in \Gamma_\beta^*(X,S))$ from Proposition~\ref{prop:uniqueness} are uniquely defined. Due to the non-linearity of the quantile operator, the composition of quantiles does not commute, so in general, we have 
\begin{equation*}
\beta_k^* \neq \bar{F}_k(G^{-1}(\beta)) \enspace.
\end{equation*}
As a consequence, the thresholds used to define the fair correction in the two-step predictor $\widetilde{\Gamma}_{\beta 2 DP}$ differ from those of the oracle predictor $\Gamma_\beta^*$. That is, $\bar{F}^{-1}_{k,s}(\beta_k^*) \neq \bar{F}^{-1}_{k,s}\left( \bar{F}_k(G^{-1}(\beta)) \right)$, which implies:
\begin{equation*}
\widetilde{\Gamma}_{\beta 2 DP} \neq \Gamma^*_\beta \qquad \text{and} \qquad R(\Gamma^*_\beta) < R(\widetilde{\Gamma}_{\beta 2 DP}) \enspace.
\end{equation*}
\end{remark}
Despite this potential gap in risk, the size-to-fairness predictor $\widetilde{\Gamma}_{\beta 2 DP}$ retains strong advantages: it is easily implementable, requires no constrained optimization, and offers robust constraint satisfaction. As we will highlight in Section~\ref{sec:Experiments and Results}, it also exhibits competitive numerical performance in practice.
In the next paragraph, we introduce a data-driven implementation of this procedure based on the plug-in principle.

\paragraph*{Two-step plug-in predictor.}
We now construct a plug-in estimator $\widehat{\Gamma}_{\beta 2 DP}$ of the size-to-fairness predictor $\widetilde{\Gamma}_{\beta 2 DP}$, based on the labeled and unlabeled data.
To this end, we consider the same estimators $\widehat{p}_k$ built from the labeled dataset $\mathcal{D}_n$ as in Section~\ref{subsec:procedure}. Then, using the unlabeled dataset $\mathcal{D}_N$, we define the empirical cumulative distribution functions for each $k \in [K]$ and $s \in \mathcal{S}$ as:
\begin{equation*}
\widehat{F}_{k, s}(\cdot) = \frac{1}{N_s}\underset{i = 1}{\overset{N}{\sum}} \1{S_i = s} \1{\widehat{p}_k(X_i,S_i) \leq \cdot}\enspace, \qquad\qquad
\widehat{F}_{k}(\cdot) = \sumover{s \in \Scal} \widehat{\pi}_s \widehat{F}_{k,s}(\cdot)\enspace,
\end{equation*}
where $N_s = \sum_{i=1}^N \1{S_i = s}$ and $\widehat{\pi}_s = N_s / N$.  
We denote the associated empirical survival functions by $\widehat{\overline{F}}_{k,s} = 1 - \widehat{F}_{k,s}$ and $\widehat{\overline{F}}_{k} = 1 - \widehat{F}_{k}$.  
We also define the empirical version of the function $G$ as
\begin{equation*}
\widehat{G} (\cdot) = \sum_{k=1}^{K} \widehat{\overline{F}}_{k}(\cdot) \enspace.
\end{equation*}
With this notation, we define the size-to-fairness plug-in predictor as:
\begin{equation}
\label{eq:eqTwoStepEmp}
\widehat{\Gamma}_{\beta 2 DP}(x,s) = \left\{k \in [K] : \widehat{p}_{k}(x,s) \geq \widehat{\overline{F}}_{k,s}^{-1}\left(\widehat{\overline{F}}_k(\widehat{G}^{-1}(\beta))\right) \right\} \enspace.
\end{equation}

\subsection{Statistical properties}
\label{subsec:two_step_properties}
The previous discussion highlights that, since the population-level predictor $\widetilde{\Gamma}_{\beta 2 DP}$ is not necessarily risk-optimal, our main focus lies in assessing whether its plug-in estimator $\widehat{\Gamma}_{\beta 2 DP}$ satisfies the desired size and fairness constraints.
\paragraph*{Constraint guarantees.}
We first establish that $\widehat{\Gamma}_{\beta 2 DP}$ achieves control over the expected size and demographic parity constraints, up to a deviation of order $1/\sqrt{N}$. Notably, these results are non-asymptotic and distribution-free.
\begin{theorem}\label{thm:two-stepMethodBounds}
Let $\widehat{\Gamma}_{\beta 2 DP}$ be the empirical predictor defined by Equation~\eqref{eq:eqTwoStepEmp}. There exists a constant $C > 0$ such that
\begin{align*}
    \E{\mathcal{U}(\widehat{\Gamma}_{\beta 2 DP})} &\leq \frac{CK}{\sqrt{N}} \enspace ,\\
    \E{\left\lvert \mathcal{T}\left(\widehat{\Gamma}_{\beta 2 DP}\right) - \beta \right\rvert} &\leq \frac{CK}{\sqrt{N}}\enspace.
\end{align*}
\end{theorem}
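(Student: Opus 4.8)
The plan is to control both quantities by tracking how the empirical thresholds $\widehat{\overline{F}}_{k,s}^{-1}\big(\widehat{\overline{F}}_k(\widehat{G}^{-1}(\beta))\big)$ deviate from their population counterparts, and then translating threshold deviations into deviations of inclusion probabilities and of expected size. The key observation is that the random noise $\epsilon$ added in Step~1 of Algorithm~\ref{algo:procedure} guarantees that each $\widehat{p}_k(X,S)$ has a continuous conditional distribution given $\mathcal{D}_n$ and $\epsilon$, so the empirical cdfs $\widehat{F}_{k,s}$ are consistent for their (noised) population versions $\widetilde{F}_{k,s}$ (the cdf of $\widehat p_k(X,S)$ under $\mathbb{P}_{X\mid S=s}$, conditionally on $\mathcal{D}_n,\epsilon$), and these population versions are themselves continuous. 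Crucially, all bounds should be obtained \emph{conditionally on $(\mathcal{D}_n,\epsilon)$}, so that the estimation quality of $\widetilde p_k$ never enters — this is exactly what makes the statement distribution-free.

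First I would invoke the DKW inequality: for each $s$, conditionally on $(\mathcal{D}_n,\epsilon)$ and on $N_s$, we have $\mathbb{E}\big[\sup_t |\widehat{F}_{k,s}(t) - \widetilde{F}_{k,s}(t)|\big] \le C/\sqrt{N_s}$, and similarly a Bernstein/Chernoff bound controls $|\widehat\pi_s - \pi_s|$ and ensures $N_s \gtrsim N\pi_s$ with high probability (handling the low-probability event where some $N_s$ is small separately, at cost $O(e^{-cN})$). Summing over $k$ gives $\mathbb{E}[\sup_t|\widehat G(t) - \widetilde G(t)|] \le CK/\sqrt N$ where $\widetilde G = \sum_k \widetilde{\overline F}_k$. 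Second, I would convert these uniform cdf deviations into deviations of the composed quantile thresholds. Because $\widetilde G$, $\widetilde F_k$, $\widetilde F_{k,s}$ are all continuous, generalized-inverse manipulations give: $\widehat{\overline F}_k(\widehat G^{-1}(\beta))$ is within $O(K/\sqrt N)$ of $\widetilde{\overline F}_k(\widetilde G^{-1}(\beta)) =: \widetilde\beta_k$ in expectation (using $|\widehat G(\widehat G^{-1}(\beta)) - \beta| \le \sup|\widehat G - \widetilde G|$ type identities plus monotonicity), and then $\widehat{\overline F}_{k,s}^{-1}$ applied to a point within $\delta$ of $\widetilde\beta_k$ lands within a set of $\widetilde{\mathbb{P}}_{X|S=s}$-probability $O(\delta + \sup|\widehat F_{k,s}-\widetilde F_{k,s}|)$ of the true threshold $\widetilde{\overline F}_{k,s}^{-1}(\widetilde\beta_k)$.

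Third, for the fairness bound, note that $\widetilde\Gamma_{\beta 2 DP}$ (built from the noised $\widehat p_k$'s at the population level) is \emph{exactly} DP-fair by the computation preceding Theorem~\ref{thm:two-stepMethodBounds}, so $\mathcal{U}(\widehat\Gamma_{\beta 2 DP}) \le \sum_{k}\sum_{s} \big|\widetilde{\mathbb{P}}_{X|S=s}(k\in\widehat\Gamma_{\beta 2 DP}(X,s)) - \widetilde{\mathbb{P}}_{X|S=s}(k\in\widetilde\Gamma_{\beta 2 DP}(X,s))\big|$ up to a factor, and each summand is the probability that $\widehat p_k(X,s)$ falls between the empirical and population thresholds — bounded by the threshold-deviation estimate from the previous step, i.e. $O(1/\sqrt N)$ per $(k,s)$, hence $O(K/\sqrt N)$ total after taking expectations and using the DKW/Bernstein bounds. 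For the size bound, write $\mathcal{T}(\widehat\Gamma_{\beta 2 DP}) = \sum_k \mathbb{P}(k\in\widehat\Gamma_{\beta 2 DP}(X,S))$ and compare to $\mathcal{T}(\widetilde\Gamma_{\beta 2 DP}) = \beta$; the same per-$(k,s)$ thin-slab argument, weighted by $\pi_s$ and combined with $|\widehat\pi_s-\pi_s|$ control, yields $O(K/\sqrt N)$. Finally I would reassemble: take total expectation over $(\mathcal{D}_n,\epsilon)$ and the $N_s$'s, using $\mathbb{E}[1/\sqrt{N_s}] \lesssim 1/\sqrt{N\pi_s}$ on the good event and the exponential bound off it, and collect all $|\mathcal{S}|$ strata into the universal constant.

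The main obstacle is the second step: controlling the \emph{composition} of empirical generalized inverses, $\widehat{\overline F}_{k,s}^{-1}\circ\widehat{\overline F}_k\circ\widehat G^{-1}$, versus its population analogue. Generalized inverses are only "one-sided" Lipschitz-free, so one cannot naively chain sup-norm bounds; the argument must exploit the continuity of the population cdfs (guaranteed by the $\epsilon$-perturbation) to pass between "$t$ close to a quantile" and "$\overline F(t)$ close to the level", and must be careful that the quantity that ultimately matters — the $\widetilde{\mathbb{P}}_{X|S=s}$-mass of the slab between the two thresholds — is controlled directly by cdf deviations rather than by (uncontrolled) threshold-location deviations. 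Handling the edge cases where $\widehat G^{-1}(\beta)$ or a quantile level hits $0$ or $1$, or where strata are empty, is routine but must be done explicitly to keep the bound distribution-free.
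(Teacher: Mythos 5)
Your proposal is sound and rests on the same three technical pillars as the paper's proof -- the DKW inequality applied conditionally on $(\mathcal{D}_n,\epsilon)$ and $N_s$ (Lemma~\ref{lem:DKW}), the $1/N_s$ slack in composing an empirical cdf with its own generalized inverse (Lemma~\ref{lem:1_over_n_divergence}), and the binomial bound $\mathbb{E}[\1{N_s>0}/N_s]\lesssim 1/(N\pi_s)$ (Lemma~\ref{lem:BinomialInverse}) together with the exponentially small event of empty strata -- but you organize the comparison differently. You route everything through the population-level noised oracle $\widetilde{\Gamma}_{\beta 2 DP}$, its levels $\widetilde{\beta}_k=\widetilde{\overline F}_k(\widetilde G^{-1}(\beta))$ and thresholds $\widetilde{\overline F}_{k,s}^{-1}(\widetilde\beta_k)$, and then control the $\widetilde{\mathbb P}_{X|S=s}$-mass of the slab between empirical and population thresholds. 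The paper never introduces these population quantities: for fairness it compares the \emph{empirical} within-stratum frequencies across $s$ and $s'$, both of which equal the common level $\widehat\beta_k=\widehat{\overline F}_k(\widehat G^{-1}(\beta))$ up to $1/N_s$ by Lemma~\ref{lem:1_over_n_divergence}, and pays a single DKW term to pass from empirical to population conditional probabilities; for the size it uses $\sum_k\widehat{\overline F}_{k,s}(\widehat\delta_{k,s})\approx\sum_k\widehat\beta_k=\widehat G(\widehat G^{-1}(\beta))\approx\beta$ together with $\sum_s\pi_s=1$. This shorter route completely sidesteps the generalized-inverse chaining issue you correctly identify as the main obstacle, and it also shows that the $|\widehat\pi_s-\pi_s|$ control you invoke is unnecessary here (the two-step thresholds involve $\widehat\pi_s$ only through the pooled empirical cdf $\widehat F_k$, and the size is weighted by the true $\pi_s$).

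One point in your route needs genuine care to hit the stated rate. Your per-class comparison $|\widehat\beta_k-\widetilde\beta_k|$ contains a term $\sup_t|\widehat G(t)-\widetilde G(t)|$, which is itself of order $K/\sqrt N$; for the fairness bound (a maximum over $k$) this is harmless, but for the size bound a naive sum over $k$ of such per-class errors gives $O(K^2/\sqrt N)$ rather than $O(K/\sqrt N)$. To recover the linear-in-$K$ bound you must either bound the sum in one shot, e.g.\ $\sum_k\bigl|\widetilde{\overline F}_k(\widehat G^{-1}(\beta))-\widetilde\beta_k\bigr|=\bigl|\widetilde G(\widehat G^{-1}(\beta))-\beta\bigr|$ using that all the differences have the same sign by monotonicity, or simply pin each stratum to $\widehat\beta_k$ as the paper does and use $\sum_k\widehat\beta_k\approx\beta$. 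Finally, note that your appeal to continuity of the conditional law of $\widehat p_k(X,S)$ given $(\mathcal{D}_n,\epsilon)$ is exactly the same (implicit) no-ties assumption the paper itself makes when applying Lemma~\ref{lem:1_over_n_divergence} to the scores, so it is not an additional gap specific to your argument, but it is worth stating it as a hypothesis rather than attributing it to the additive noise alone.
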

These convergence rates are of the same order as those obtained for the optimal plug-in predictor $\widehat{\Gamma}_\beta$ studied in Section~\ref{subsec:RatesofConv}. In particular, the $1/\sqrt{N}$ rate reflects the statistical error in estimating the cumulative distribution functions $F_{k,s}$ from the unlabeled dataset. Importantly, this means the method benefits from unlabeled data alone, which is advantageous in practice.

\paragraph*{Computational considerations.}
While $\widehat{\Gamma}_{\beta 2 DP}$ does not benefit from the optimality guarantees in terms of risk, it offers strong computational advantages over the estimator $\widehat{\Gamma}_\beta$. Both procedures rely on preliminary estimators $(\widehat{p}_k(X_i,S_i))_{i=1}^N$, so we do not include their cost in the complexity analysis.
Let $M$ denote the cost of one step of the numerical optimizer and $T$ the number of iterations needed for convergence. Then: i) the plug-in estimator $\widehat{\Gamma}_\beta$ has overall time complexity of order $\mathcal{O}(MTK|\Scal|N)$; ii) in contrast, the two-step method $\widehat{\Gamma}_{\beta 2 DP}$ requires only $\mathcal{O}(K\size{S}N)$ operations, as it reduces to empirical quantile computations.

Furthermore, the constant $M$ can in practice grow with $K$, $|\Scal|$, $N$, and even the target size level $\beta$ (observed empirically for the latter). For instance, using the \texttt{BFGS} optimization algorithm, one may encounter complexities of order at least $\mathcal{O}(T(K\size{\Scal})^3N)$.
Empirically, we observe that the two-step method scales significantly better with the number of classes. On a toy example, increasing $K$ results in a clear speed-up for $\widehat{\Gamma}_{\beta 2 DP}$ compared to $\widehat{\Gamma}_\beta$ (see~\cref{fig:two-step vs optimizer time perfs} in Section~\ref{sec:Experiments and Results}). This makes the two-step predictor a promising alternative for large-scale applications.

\section{Experiments and Numerical Results}
\label{sec:Experiments and Results}

This section presents the empirical evaluation of the algorithms developed in Sections~\ref{sec:Data Driven Procedure} and~\ref{sec:twoStepProc}, using both synthetic and real-world datasets. As a baseline, we consider the standard set-valued classifier that minimizes the risk under only a size constraint, typically resulting in unfair predictions. We denote this method by $\widehat{\Gamma}_\text{unfair}$ (labeled as \texttt{SVC} in the plots). We begin by outlining the general setup used throughout our experiments.

\subsection{Implementation}

There are two main steps involved in constructing the fair classifiers $\widehat{\Gamma}_{\beta}$ (referred to as \texttt{DP-fair SVC} in the figures) and $\widehat{\Gamma}_{\beta 2 DP}$ (referred to as \texttt{Two-Step method}): 
(1) estimating the class-conditional probabilities, and 
(2) deriving the final classifier using either optimization (solving~\cref{eq:EmpiricalLagrangian}) or plug-in estimates of quantiles and CDFs.

All experiments are implemented in Python. We use three datasets for each run: a training set composed of $n$ labeled samples and $N$ unlabeled samples, and a test set of $T$ labeled data used solely for evaluation. In the case of real-world data -- which only includes labeled observations -- we split the data into $80\%$ train / $20\%$ test, and then split the training portion again (50\%$ / 50\%$) to produce $D_n$ and $D_N$, where labels are dropped from $D_N$ to simulate unlabeled data. The same proportions are applied to synthetic datasets.

\paragraph*{Conditional probability estimation.} 
When the probabilities $p_k$ are not available ({\it i.e.,} except in the idealized synthetic setting), we estimate them using a gradient boosting algorithm 
(\texttt{GradientBoostingClassifier} from \texttt{sklearn.ensemble} with $20$ estimators). This estimation step uses only the labeled dataset $D_n$.

\paragraph*{Second step.} 
The final classifier is built using only the unlabeled dataset $D_N$. For $\widehat{\Gamma}_{\beta}$, we solve~\cref{eq:EmpiricalLagrangian} using the \texttt{L-BFGS-B} optimizer via \texttt{scipy.optimize.minimize}. For the two-step method $\widehat{\Gamma}_{\beta 2 DP}$, we compute the empirical CDFs and quantiles $\widehat{\Bar{F}}_k$ and $\widehat{\Bar{F}}_k^{-1}$ using \texttt{numpy}’s built-in functions.

\subsection{Synthetic Data}
\label{ssec:Data Generation}

We consider the case of $K=4$ classes and binary sensitive attributes $s\in \{-1,1\}$. We generate 10,000 samples from a Gaussian mixture model as follows:
\begin{align*}
Y & \sim \mathcal{M}(10{,}000 \ , (1/4,1/4,1/4,1/4)) \enspace,\\
S| &Y=k \sim \mathcal{R}ademacher\left(\frac{1}{2}+K\frac{2(k\%2)-1}{2(k+K)}\right) \enspace,\\
X| & S=s, Y=k \sim \mathcal{N}(ks\mu, I_d)\enspace, 
\end{align*}
where $I_d$ is the identity matrix in $\mathbb{R}^d$ and $\mu \sim \mathcal{U}([0,1])$.

We explore two scenarios:
\begin{enumerate}
    \item We assume access to the true conditional probabilities $p_k$ --- all corresponding plots are deferred to Appendix~\ref{append:numeric}.
    \item Probabilities $p_k$ are unknown and must be estimated from data.
\end{enumerate}

\paragraph*{Comparison with the Unfair Baseline.}  
Figures~\ref{fig:results on generated data no estimation} and~\ref{fig:results on generated data} respectively show the results for the two scenarios above. Subplot (b) in each figure confirms that both fair and unfair classifiers satisfy the size constraint. This aligns with our theoretical guarantees in~\cref{thm:Unfairness bound}, which match those in~\citep{Denis_Hebiri_2017}.
\\
Subplot (c) highlights the rise in unfairness when $\beta$ increases under the unfair classifier -- a phenomenon caused by the intrinsic bias amplification of set-valued outputs. In contrast, $\widehat{\Gamma}_\beta$ consistently yields low unfairness, close to zero.
\\
Subplot (a) shows that enforcing fairness comes with a slight increase in classification risk. However, given the large fairness improvements (reductions of up to 0.8 in unfairness), this trade-off is acceptable and expected.
\\
Finally, comparing both figures, we observe that estimating $p_k$ leads to minimal performance degradation, validating our theoretical findings in~\cref{thm:excess risk}.

\begin{figure}
\centering
\subfloat[Risk]{\includegraphics[width=0.3\textwidth]{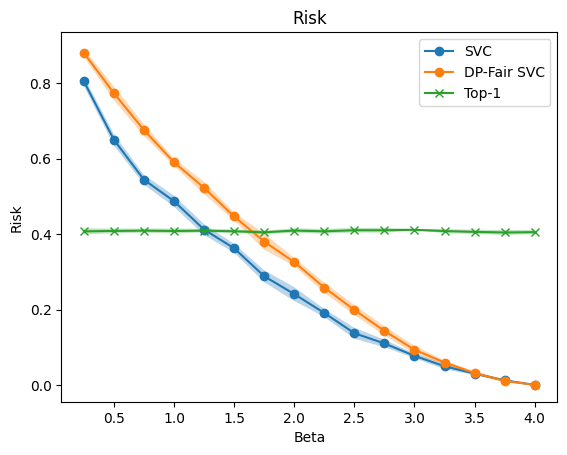}}\quad
\subfloat[Mean Size Error]{\includegraphics[width=0.3\textwidth]{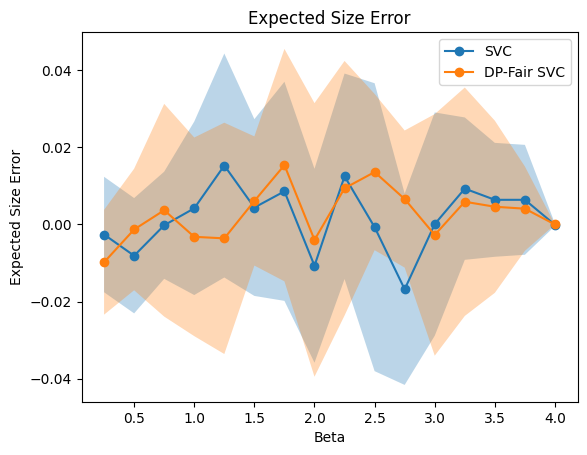}}\quad
\subfloat[Unfairness]{\includegraphics[width=0.3\textwidth]{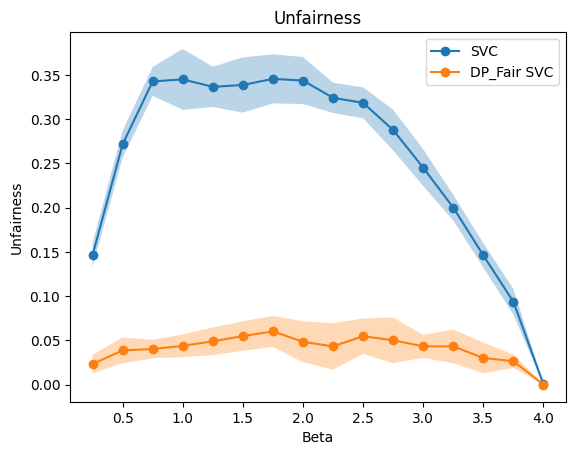}}
\caption{Results on synthetic data with estimated class-conditional probabilities (20 estimators).}
\label{fig:results on generated data}
\end{figure}

\paragraph*{Two-Step vs. Optimizer.}
We now compare the two fair classifiers $\widehat{\Gamma}_\beta$ (via optimizer) and $\widehat{\Gamma}_{\beta 2 DP}$ (two-step), both in terms of runtime and numerical stability.
\\
\textit{Runtime.}  
Figures~\ref{fig:two-step vs optimizer generated data no estimation} and~\ref{fig:two-step vs optimizer generated data} show that the two methods yield comparable performance across risk, size, and unfairness. Although $\widehat{\Gamma}_{\beta 2 DP}$ can exhibit slightly higher unfairness, this remains within acceptable bounds. The runtime advantage of the two-step method becomes clear in~\cref{fig:two-step vs optimizer time perfs}: for large values of $K$, it significantly outperforms the optimizer.
\\
\textit{Numerical Stability.}  
Figure~\ref{fig:two-step vs optimizer stability} shows that the optimizer struggles with numerical instability as the misclassification risk approaches zero, occasionally violating the constraints. In contrast, the two-step procedure remains more robust, albeit some degradation for high $\beta$.

\begin{figure}
    \centering
    \subfloat[Risk]{\includegraphics[width=0.3\textwidth]{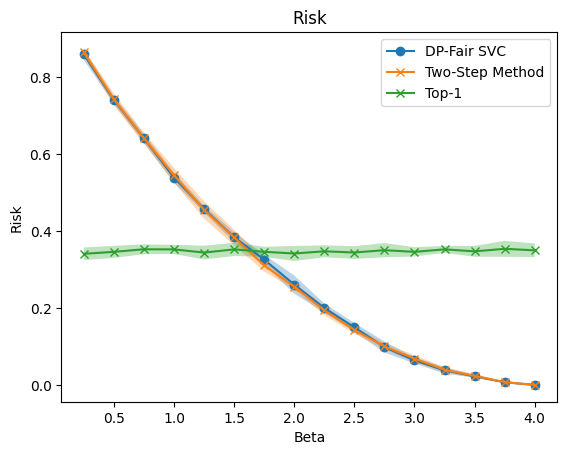}}\quad
    \subfloat[Mean Size Error]{\includegraphics[width=0.3\textwidth]{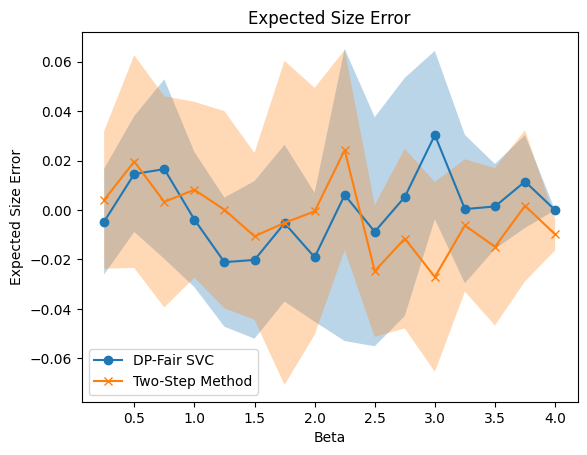}}\quad
    \subfloat[Unfairness]{\includegraphics[width=0.3\textwidth]{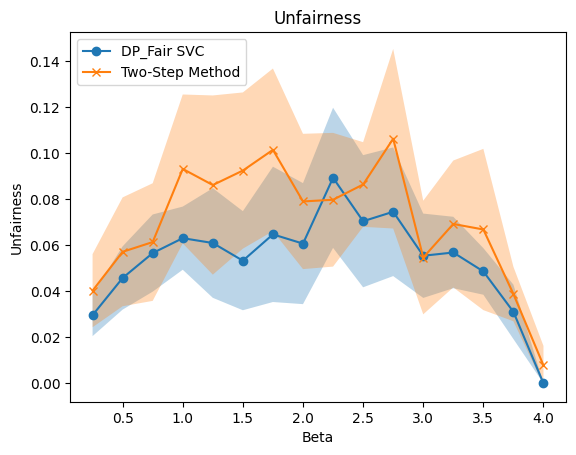}}
    \caption{Same comparison with estimated probabilities (20 estimators).}
    \label{fig:two-step vs optimizer generated data}
\end{figure}

\begin{figure}
    \centering
    \subfloat[$K=20$]{\includegraphics[width=0.3\textwidth]{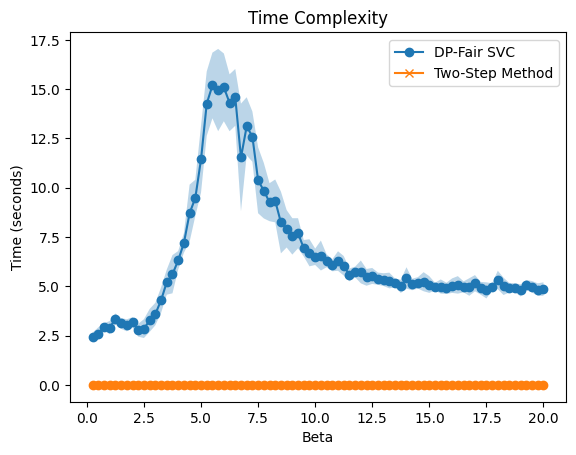}}\quad
    \subfloat[$K=50$]{\includegraphics[width=0.3\textwidth]{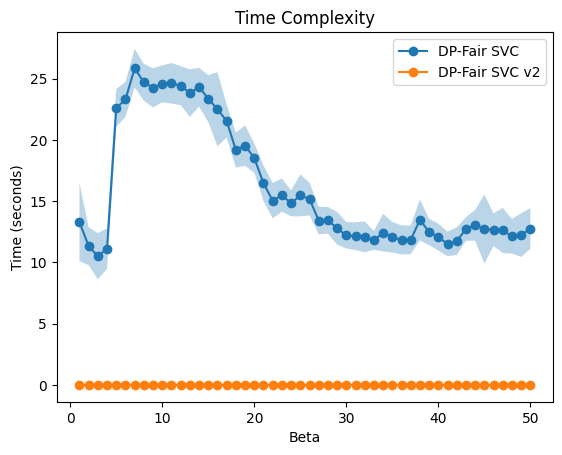}}
    \caption{Runtime comparison for increasing $K$.}
    \label{fig:two-step vs optimizer time perfs}
\end{figure}

\begin{figure}
    \centering
    \subfloat[Risk]{\includegraphics[width=0.3\textwidth]{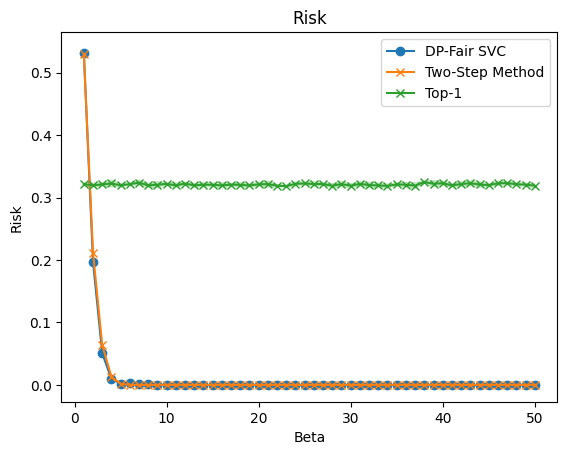}}\quad
    \subfloat[Mean Size Error]{\includegraphics[width=0.3\textwidth]{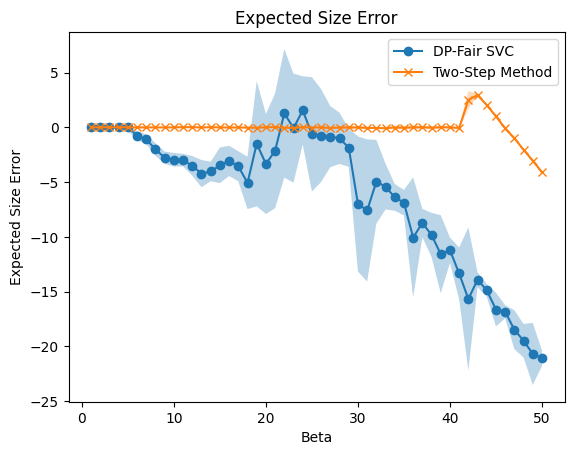}}\quad
    \subfloat[Unfairness]{\includegraphics[width=0.3\textwidth]{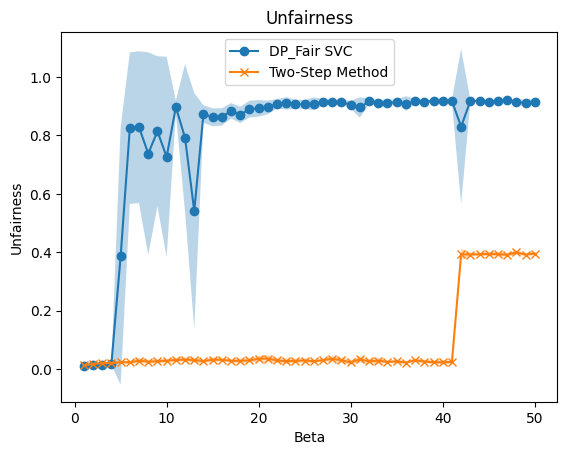}}
    \caption{Stability comparison between the optimizer and the two-step method.}
    \label{fig:two-step vs optimizer stability}
\end{figure}

\subsection{Real Data}

We now evaluate our models on the DRUG dataset\footnote{\url{https://www.kaggle.com/datasets/obeykhadija/drug-consumptions-uci}}, which contains demographic and personality data for 1,885 individuals, along with drug use behavior. The task is to predict cannabis usage levels. Following~\citep{Denis_Elie_Hebiri_Hu_2024}, we reduce the number of classes from 7 to 4: never used, not used in the past year, used in the past year but not today, and used in the past day. The sensitive attribute is binary, indicating whether the respondent has a college degree.

As shown in~\cref{fig:results on DRUG data}, our DP-fair classifier $\widehat{\Gamma}_\beta$ matches the performance of the unfair baseline in terms of risk and size accuracy, while achieving near-zero unfairness. 

\begin{figure}
    \centering
    \subfloat[Risk]{\includegraphics[width=0.3\textwidth]{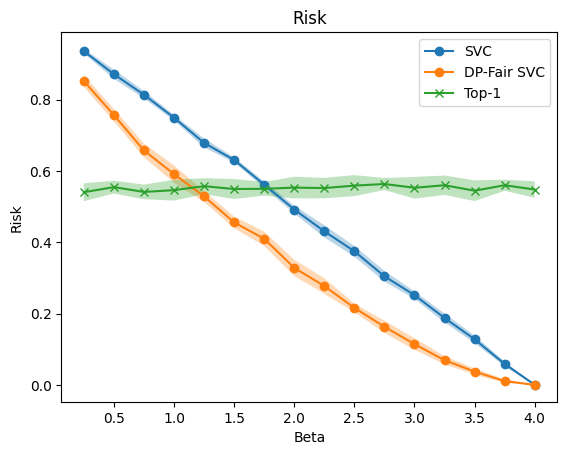}}\quad
    \subfloat[Mean Size Error]{\includegraphics[width=0.3\textwidth]{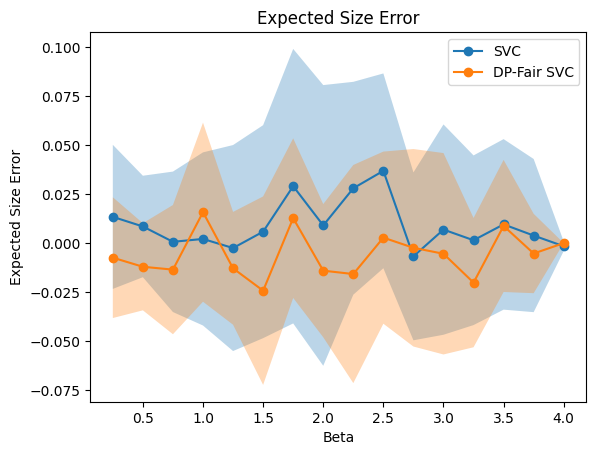}}\quad
    \subfloat[Unfairness]{\includegraphics[width=0.3\textwidth]{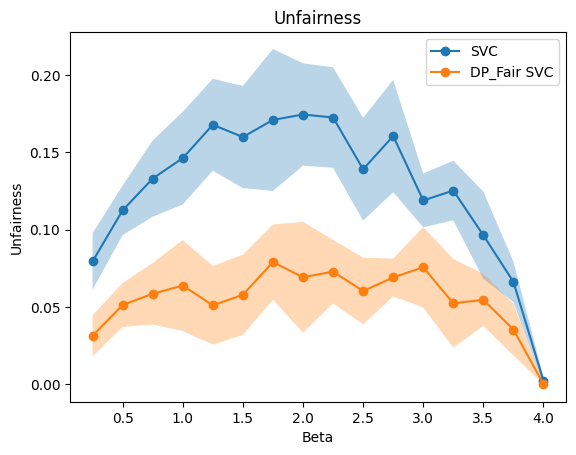}}
    \caption{Results on the DRUG dataset (20 estimators, gradient boosting).}
    \label{fig:results on DRUG data}
\end{figure}

\section{Conclusion}
\label{sec:conclusion}

In this work, we introduce a novel framework for learning \emph{fair set-valued classifiers} under Demographic Parity constraints. Unlike standard multi-class predictors, our approach outputs subsets of labels, thereby enabling a flexible control of prediction uncertainty while enforcing fairness across sensitive groups. We characterize the optimal trade-off between accuracy, fairness, and output size via an oracle construction, and propose two practical algorithms: a plug-in estimator based on constrained optimization, and a computationally efficient two-step correction procedure. Both methods rely solely on unlabeled data for enforcing fairness, making them appealing for real-world applications where labeling might be expensive or sensitive.
\\
Our framework offers an alternative to popular fairness relaxations such as $\varepsilon$-fairness, where the choice of the tolerance parameter $\varepsilon$ is often arbitrary and lacks interpretability. In contrast, the set-valued formulation enables a direct and meaningful control of the predictor’s output size, which provides both interpretability and tunability from a practitioner’s perspective. This makes our approach a compelling and principled substitute for unconstrained or approximately constrained fairness objectives.

Beyond empirical performance and constraint guarantees, the set-valued perspective opens promising research directions. In particular, future work could investigate how to extend fairness-aware prediction to structured output problems, such as hierarchical classification. Moreover, extending the fairness constraint to more general criteria (such as equalized odds or individual fairness) in the set-valued prediction setting is an open challenge.

Overall, our results suggest that fair set-valued prediction is a versatile and powerful tool for bridging the gap between predictive performance, fairness, and interpretability.

\cleardoublepage\bibliography{references}
\bibliographystyle{plainnat}

\newpage
\cleardoublepage\appendix

\begin{center}
\Large{\textbf{Appendix}}
\end{center}
\bigskip

This appendix consists mainly in two parts. A first one is devoted to additional numerical results (\cref{append:numeric}). In a second part (\cref{append:proofs}), we gather the proof of our results.

\section{Additional numerical results}
\label{append:numeric}
This section provides all plots related to the the performance of the set-valued classifiers when the class-conditional probabilities are known. As compared to the case where the class-conditional probabilities are unknown, we observe that estimating $p_k$ leads to minimal performance degradation, validating our theoretical findings in~\cref{thm:excess risk}.

\begin{figure}
\centering
\subfloat[Risk]{\includegraphics[width=0.3\textwidth]{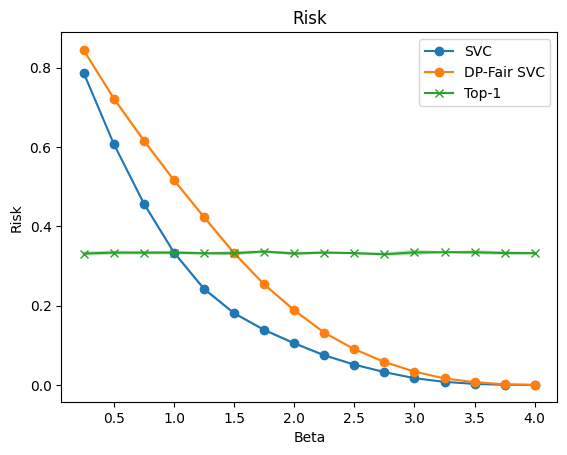}}\quad
\subfloat[Mean Size Error]{\includegraphics[width=0.3\textwidth]{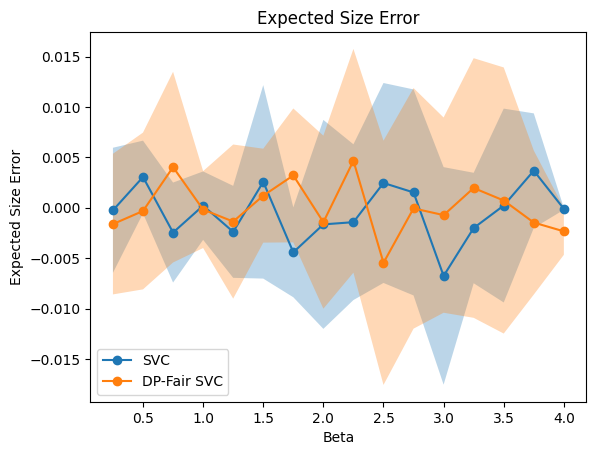}}\quad
\subfloat[Unfairness]{\includegraphics[width=0.3\textwidth]{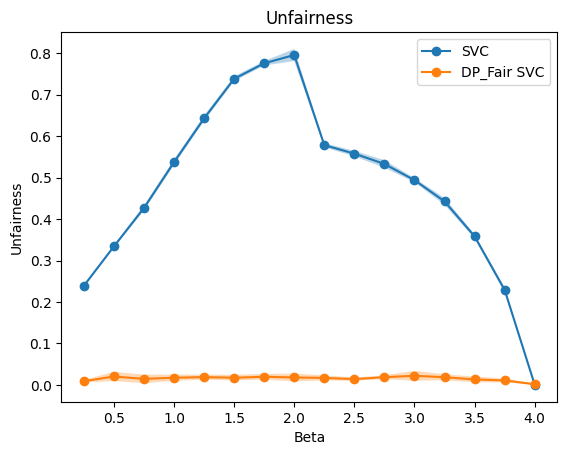}}
\caption{Results on synthetic data using the true conditional distributions.}
\label{fig:results on generated data no estimation}
\end{figure}

\begin{figure}
    \centering
    \subfloat[Risk]{\includegraphics[width=0.3\textwidth]{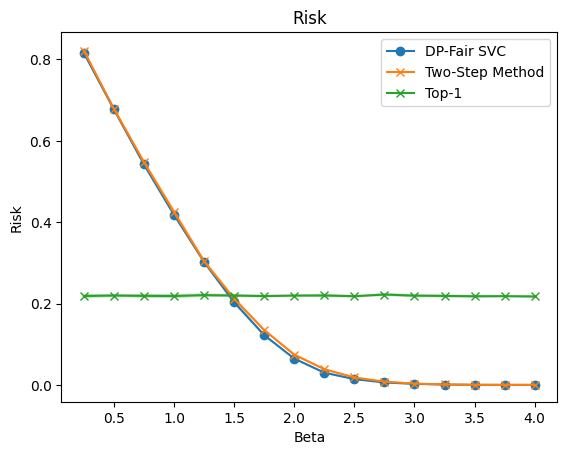}}\quad
    \subfloat[Mean Size Error]{\includegraphics[width=0.3\textwidth]{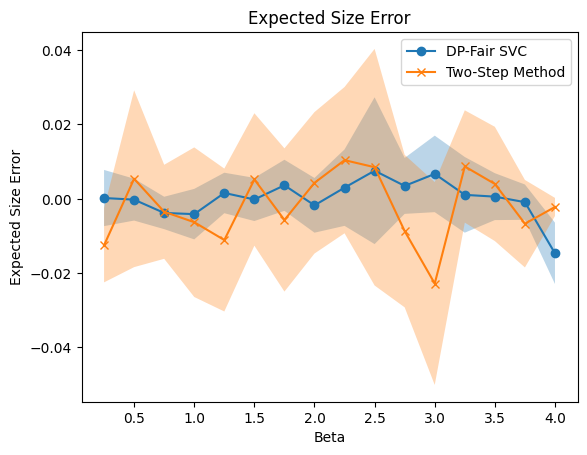}}\quad
    \subfloat[Unfairness]{\includegraphics[width=0.3\textwidth]{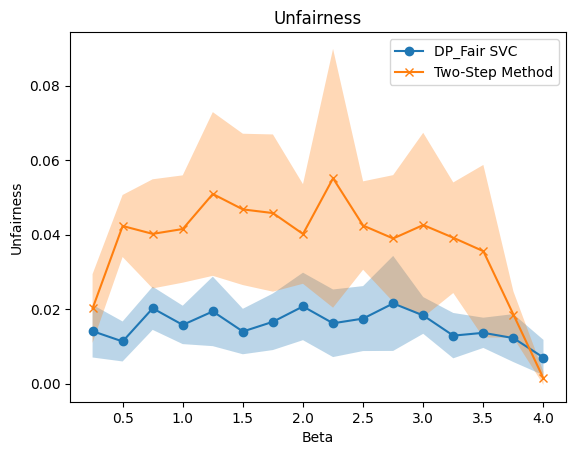}}
    \caption{Comparison between optimizer-based and two-step fair classifiers (true probabilities).}
    \label{fig:two-step vs optimizer generated data no estimation}
\end{figure}

In addition, Figure~\ref{fig:two-step vs optimizer DRUG} displays the two-step set-valued classifier performance as compared to the optimizer-based approach in the case of the real data. The conclusion are similar to the case of the synthetic data: risks, size, and unfairness are comparable for both approaches.

\begin{figure}
    \centering
    \subfloat[Risk]{\includegraphics[width=0.3\textwidth]{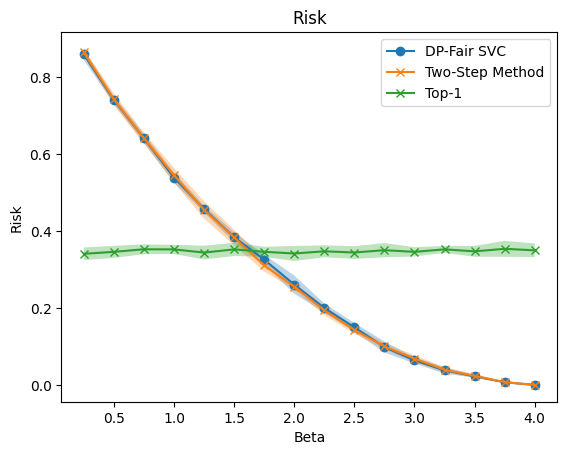}}\quad
    \subfloat[Mean Size Error]{\includegraphics[width=0.3\textwidth]{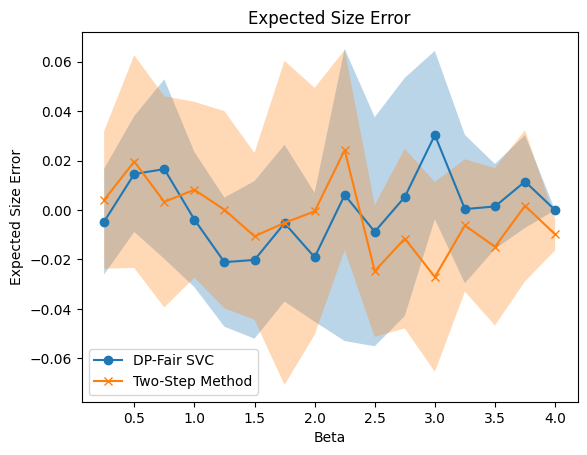}}\quad
    \subfloat[Unfairness]{\includegraphics[width=0.3\textwidth]{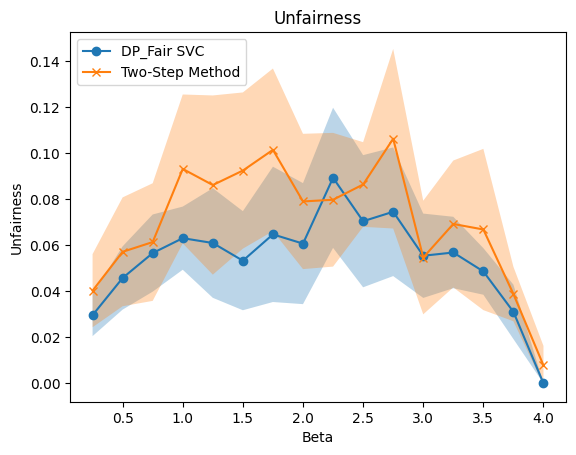}}
    \caption{Two-step vs optimizer on the DRUG dataset.}
    \label{fig:two-step vs optimizer DRUG}
\end{figure}

\section{Proofs}
\label{append:proofs}
This appendix is organised as follows: in Section~\ref{sec:tecRes}, we state important technical tools that will be used in the main proof section. The remaining of this appendix is devoted to the proofs of the results in~\cref{sec:General Framework,sec:Data Driven Procedure,sec:twoStepProc} respectively.

We also introduce $\Delta = \{\left(\lambda, \gamma\right) \in \mathbb{R}^{K|\mathcal{S}|+1}, \;\; \lambda \geq 0, \;_; \sum_{s \in \mathcal{S}}\gamma_{k,s} = 0\}$ the set of parameters of interests. 
Throughout the proofs, we use the following notation
\begin{equation*}
\widehat{\mathbb{P}}_{X|S=s}\left(\widehat{p}_{k,s} \geq \cdot \right)  = \dfrac{1}{N_s} \sum_{i \in \mathcal{D}_{N_s}} \1{\widehat{p}_{k}(X_i,s) \geq \cdot}.
\end{equation*}

\subsection{Technical Results}
\label{sec:tecRes}

    \begin{lemma}\label{lem:1_over_n_divergence}

        Let $Z_1, \ldots ,Z_N$ i.i.d random variable with continuous distribution function.
        Let us denote by $\widehat{F}$ the empirical cumulative distribution function.
        For each $u \in (0,1)$, we have 
        \begin{equation*}
        0 \leq \widehat{F}(\widehat{F}^{-1}(u)) -u \leq \dfrac{1}{N}, \;\; a.s.
        \end{equation*}
    \end{lemma}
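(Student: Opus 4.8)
The plan is to exploit the order-statistics description of the empirical cdf together with the continuity of the underlying distribution. Write $Z_{(1)} \leq \cdots \leq Z_{(N)}$ for the order statistics; since the common distribution is continuous, these are almost surely strictly increasing, so ties occur with probability zero and we may work on that full-measure event. On that event, $\widehat{F}$ is the right-continuous step function that equals $j/N$ on the interval $[Z_{(j)}, Z_{(j+1)})$, with the conventions $Z_{(0)} = -\infty$ and $Z_{(N+1)} = +\infty$.

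First I would fix $u \in (0,1)$ and let $j = \lceil Nu \rceil$, so that $j-1 < Nu \leq j$ and $1 \leq j \leq N$. By definition of the generalized inverse $\widehat{F}^{-1}(u) = \inf\{t : \widehat{F}(t) \geq u\}$, and since $\widehat{F}$ takes the value $j/N \geq u$ exactly from $Z_{(j)}$ onward (and the value $(j-1)/N < u$ just before), we get $\widehat{F}^{-1}(u) = Z_{(j)}$. Consequently $\widehat{F}(\widehat{F}^{-1}(u)) = \widehat{F}(Z_{(j)}) = j/N$. The claimed inequalities then read $0 \leq j/N - u \leq 1/N$, i.e. $Nu \leq j \leq Nu + 1$, which is precisely the definition of $j = \lceil Nu \rceil$ together with $j \leq (Nu - 1) + 1 + 1$; more simply, $j - 1 < Nu$ gives $j < Nu + 1$ hence $j/N - u < 1/N$, and $Nu \leq j$ gives $j/N - u \geq 0$. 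This closes the argument.

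I do not expect a genuine obstacle here; the only points requiring care are (i) justifying that $\widehat{F}^{-1}(u) = Z_{(j)}$ rather than some neighbouring order statistic — this uses that $u \in (0,1)$ strictly, so the infimum is attained at a finite order statistic and not at $-\infty$ or beyond $Z_{(N)}$ — and (ii) handling the null event of ties, which is dispatched by continuity of the distribution function. An alternative, slightly slicker route avoids order statistics entirely: from $\widehat{F}(\widehat{F}^{-1}(u)) \geq u$ (a general property of generalized inverses for right-continuous nondecreasing functions) one gets the left inequality for free; for the right inequality, note that $\widehat{F}$ has jumps of size exactly $1/N$ at each $Z_{(j)}$ on the no-ties event, and $\widehat{F}^{-1}(u)$ is such a jump point with left limit $\widehat{F}(\widehat{F}^{-1}(u)^-) < u$, whence $\widehat{F}(\widehat{F}^{-1}(u)) < u + 1/N$. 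Either phrasing yields the bound almost surely, uniformly in $u \in (0,1)$.
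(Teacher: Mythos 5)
Your argument is correct and follows essentially the same route as the paper's proof: both identify $\widehat{F}^{-1}(u)$ with an order statistic on the almost-sure no-ties event (guaranteed by continuity), evaluate $\widehat{F}$ there as a multiple of $1/N$, and bound the gap to $u$ by the jump size $1/N$; your explicit indexing via $j=\lceil Nu\rceil$ is just a more arithmetic phrasing of the paper's localization of $u$ in $[\widehat{F}(Z_{\sigma(i-1)}),\widehat{F}(Z_{\sigma(i)}))$. No gaps to report.
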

    
    \begin{proof}
        Let $\sigma$ be the ordering permutation ensuring $Z_{\sigma(i)} < Z_{\sigma(i+1)}$ almost surely for $i \in [N-1]$.
        Assume that for some $i \in \left\{ 2, \ldots, N \right\}$, $u \in [\hat{F}(Z_{\sigma(i-1)}), \hat{F}(Z_{\sigma(i)}))$. Therefore $\hat{F}^{-1}(u) = Z_{\sigma(i)}$ and then $\hat{F}\left(\hat{F}^{-1}\left( u \right)\right) = \hat{F}(Z_{\sigma(i)}) = \frac{i}{N}$. By subtracting u, thanks to the continuity assumption, we get
        \begin{align*}
            0 = \hat{F}(Z_{\sigma(i)}) - \hat{F}(Z_{\sigma(i)}) &\leq \hat{F}\left(\hat{F}^{-1}\left( u \right) \right) - u\\ &= \hat{F}(Z_{\sigma(i)}) - u\\ &\leq \hat{F}(Z_{\sigma(i)}) - \hat{F}(Z_{\sigma(i-1)}) = \frac{1}{N} \enspace a.s.
        \end{align*}
        And for $u \in (0,\widehat{F}(Z_{\sigma(1)}))$, similar reasoning holds with $\hat{F}^{-1}(u) = Z_{\sigma(1)}$.
    \end{proof}

\begin{lemma}
\label{lem:BinomialInverse}
Let $Z$ a random variable distributed according to a Binomial distribution with parameter $N,p$.
We have that
\begin{equation*}
\mathbb{E}\left[\dfrac{\1{Z > 0}}{Z}\right] \leq \dfrac{2}{(N+1)p} \enspace .
\end{equation*}
\end{lemma}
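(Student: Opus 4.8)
\textbf{Proof plan for Lemma~\ref{lem:BinomialInverse}.}
The plan is to compute (or bound) $\mathbb{E}\left[\1{Z>0}/Z\right]$ directly from the binomial probability mass function, using the classical algebraic identity that turns $\binom{N}{z}/z$ into something telescoping against $\binom{N+1}{z+1}$. Concretely, for $z\geq 1$ we have the identity
\begin{equation*}
\frac{1}{z}\binom{N}{z} = \frac{1}{N+1}\cdot\frac{N+1}{z}\binom{N}{z} \leq \frac{2}{N+1}\cdot\frac{1}{z+1}\binom{N+1}{z+1}\cdot\frac{z+1}{z}\cdot\frac{1}{\,\cdot\,},
\end{equation*}
so the first step is to write, for $z\in\{1,\dots,N\}$,
\begin{equation*}
\frac{1}{z}\binom{N}{z}p^z(1-p)^{N-z} = \frac{1}{(N+1)p}\cdot\frac{N+1}{z}\binom{N}{z}p^{z+1}(1-p)^{N-z},
\end{equation*}
and then observe $\frac{N+1}{z}\binom{N}{z} = \frac{z+1}{z}\binom{N+1}{z+1} \leq 2\binom{N+1}{z+1}$ for $z\geq 1$.

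Summing this over $z=1,\dots,N$ gives
\begin{equation*}
\mathbb{E}\left[\frac{\1{Z>0}}{Z}\right] \leq \frac{2}{(N+1)p}\sum_{z=1}^{N}\binom{N+1}{z+1}p^{z+1}(1-p)^{N-z} = \frac{2}{(N+1)p}\sum_{j=2}^{N+1}\binom{N+1}{j}p^{j}(1-p)^{N+1-j},
\end{equation*}
after the index shift $j=z+1$ (note $(1-p)^{N-z}=(1-p)^{N+1-j}$). The last sum is at most $\sum_{j=0}^{N+1}\binom{N+1}{j}p^j(1-p)^{N+1-j}=1$, which yields the claimed bound $\mathbb{E}\left[\1{Z>0}/Z\right]\leq \frac{2}{(N+1)p}$.

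There is essentially no hard obstacle here; this is a routine binomial-moment computation. The only point requiring a little care is the inequality $\frac{N+1}{z}\binom{N}{z}\leq 2\binom{N+1}{z+1}$: one should verify it holds for all $z\in\{1,\dots,N\}$ (equality-ish behaviour near $z=1$ forces the factor $2$ rather than $1$, since $\frac{z+1}{z}\leq 2$ with equality at $z=1$), and confirm that the bound is vacuous but still valid when $p=0$. If one prefers to avoid the constant $2$ slack, an alternative is the integral representation $\frac{\1{Z>0}}{Z}\le\int_0^1 t^{Z-1}\1{Z>0}\,dt$ followed by $\mathbb{E}[t^{Z-1}\1{Z>0}]\le t^{-1}\mathbb{E}[t^Z]=t^{-1}(1-p+pt)^N$ and then integrating, but the pmf route above is the most transparent and matches the stated constant.
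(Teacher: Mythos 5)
Your argument is correct, and it proves exactly the stated bound: the identity $\frac{N+1}{z}\binom{N}{z}=\frac{z+1}{z}\binom{N+1}{z+1}$, the bound $\frac{z+1}{z}\leq 2$ for $z\geq 1$, the observation $(1-p)^{N-z}=(1-p)^{(N+1)-(z+1)}$, and the final majorization of the shifted binomial sum by $1$ are all valid, so the conclusion $\mathbb{E}\bigl[\1{Z>0}/Z\bigr]\leq \frac{2}{(N+1)p}$ follows. The paper itself states this lemma without proof (it is a standard auxiliary fact), so there is nothing to compare against; your term-by-term pmf computation is essentially the classical argument, which is more commonly packaged as the pointwise bound $\frac{\1{Z>0}}{Z}\leq\frac{2}{Z+1}$ followed by the exact identity $\mathbb{E}\bigl[\frac{1}{Z+1}\bigr]=\frac{1-(1-p)^{N+1}}{(N+1)p}\leq\frac{1}{(N+1)p}$ — the same content, slightly more compact. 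Two cosmetic remarks: your first display (the ``plan'' identity with the dangling factor) is garbled and should be deleted, since the subsequent correct displays supersede it; and the case $p=0$ is indeed degenerate (the right-hand side is not finite), so it deserves only the one-line remark you already give.
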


\begin{lemma}\label{lem:G properties}
(Proposition 1 of~\cite{Denis_Hebiri_2017})
    Under ~\cref{assum:continuous}, the following properties hold:
    \begin{itemize}
        \item $\forall t \in (0,1), \beta \in (0,K) : G^{-1}(\beta) \leq t \iff \beta \geq G(t)$ \enspace,
        \item $\forall \beta \in (0,K), G(G^{-1}(\beta)) = \beta$ \enspace.
    \end{itemize}
\end{lemma}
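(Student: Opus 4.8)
\textbf{Proof plan for Lemma~\ref{lem:G properties} (restated from~\cite{Denis_Hebiri_2017}).}
Since this is a restatement of an existing result, the natural plan is to reproduce its short proof, which rests entirely on the structure of $G$ and the continuity granted by Assumption~\ref{assum:continuous}. First I would record the basic monotonicity facts: each $\overline{F}_k = 1 - F_k$ is non-increasing, hence $G = \sum_{k=1}^K \overline{F}_k$ is non-increasing on $\mathbb{R}$; moreover, under Assumption~\ref{assum:continuous} each $F_{k,s}$ is continuous, and since $F_k = \sum_{s \in \Scal} \pi_s F_{k,s}$ is a finite convex combination of continuous cdfs, $F_k$ is continuous, so $G$ is continuous. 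One should also note the range behaviour: $G(t) \to K$ as $t \to -\infty$ (all $\overline{F}_k \to 1$) and $G(t) \to 0$ as $t \to +\infty$, so for $\beta \in (0,K)$ the generalized inverse $G^{-1}(\beta) = \inf\{t : G(t) \leq \beta\}$ is finite and well defined. These are the only structural inputs needed.

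For the first bullet, $G^{-1}(\beta) \leq t \iff \beta \geq G(t)$, I would argue both implications from the definition of the generalized inverse of a non-increasing function together with continuity. For ($\Leftarrow$): if $\beta \geq G(t)$, then $t$ belongs to the set $\{u : G(u) \leq \beta\}$, so its infimum $G^{-1}(\beta)$ is $\leq t$. For ($\Rightarrow$): if $G^{-1}(\beta) \leq t$, then by monotonicity of $G$ we get $G(t) \leq G(G^{-1}(\beta))$, and it suffices to show $G(G^{-1}(\beta)) \leq \beta$; this follows from continuity of $G$ by taking a sequence $u_n \downarrow G^{-1}(\beta)$ with $G(u_n) \leq \beta$ (such a sequence exists by definition of the infimum) and passing to the limit. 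The second bullet, $G(G^{-1}(\beta)) = \beta$, then combines two inequalities: the $\leq$ direction was just shown; for $\geq$, note that for any $u < G^{-1}(\beta)$ we have $G(u) > \beta$ (else $u$ would be in the defining set, contradicting that $G^{-1}(\beta)$ is its infimum), and letting $u \uparrow G^{-1}(\beta)$, continuity of $G$ yields $G(G^{-1}(\beta)) \geq \beta$.

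The argument is essentially routine once continuity of $G$ is established, so there is no real obstacle; the only point requiring a modicum of care is the passage to the limit using continuity (rather than mere monotonicity) to pin down $G$ at $G^{-1}(\beta)$ exactly — without Assumption~\ref{assum:continuous} one would only get $G(G^{-1}(\beta)) \leq \beta$ and the equivalence in the first bullet could fail at jump points. Since the statement is explicitly attributed to Proposition~1 of~\cite{Denis_Hebiri_2017}, I would keep the write-up brief, or simply cite that reference, noting that Assumption~\ref{assum:continuous} here plays the role of the continuity hypothesis there.
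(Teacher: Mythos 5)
Your proposal is correct: the monotonicity/continuity facts about $G$, the two implications for the first bullet, and the two-sided inequality giving $G(G^{-1}(\beta)) = \beta$ are exactly the standard generalized-inverse argument, with Assumption~\ref{assum:continuous} invoked precisely where left-continuity of $G$ is needed. The paper itself gives no proof of this lemma -- it simply cites Proposition~1 of \cite{Denis_Hebiri_2017} -- so there is nothing to compare against beyond noting that your reconstruction matches what that reference establishes, and keeping it brief or citing the reference, as you suggest, is the right call.
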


\begin{lemma}(Dvoretzky-Kiefer-Wolfowitz Inequality)
\label{lem:DKW}
    Let $Z_1, \ldots, Z_N$ i.i.d. real-valued random variable with common distribution function $F$. We denote by $\widehat{F}_N$ the corresponding empirical cumulative distribution function. The following holds
    \begin{equation*}
    \mathbb{P}\left(\sup_{x \in \mathbb{R}} \left|\widehat{F}_N(x)-F(x)\right| \geq \varepsilon\right) \leq 2\exp(-2n\varepsilon^2), \qquad \forall \varepsilon > 0 \enspace.     
    \end{equation*}
\end{lemma}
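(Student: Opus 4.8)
This is the classical Dvoretzky--Kiefer--Wolfowitz inequality with the sharp leading constant $2$ obtained by Massart; the plan is to organise the standard argument in three stages: (a) reduce to the case where $F$ is the uniform distribution on $[0,1]$; (b) reduce the two-sided statement to a one-sided one, which is exactly what produces the factor $2$; and (c) prove the one-sided estimate $\mathbb{P}\!\left(\sup_{u\in[0,1]}(\widehat{U}_N(u)-u)\ge\varepsilon\right)\le e^{-2N\varepsilon^2}$, where $\widehat{U}_N$ denotes the empirical cdf of $N$ i.i.d.\ uniforms. (For $\varepsilon>1$ the probability is $0$, and for small $\varepsilon$ the bound is vacuous, so only $\varepsilon\in(0,1]$ needs attention.)

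For stage (a), let $U_1,\dots,U_N\sim\mathcal{U}([0,1])$ be i.i.d.\ and set $Z_i=F^{-1}(U_i)$ with $F^{-1}$ the left-continuous generalized inverse. The identity $F^{-1}(u)\le x\iff u\le F(x)$ gives $\{Z_i\le x\}=\{U_i\le F(x)\}$ for every $x$, hence $\widehat{F}_N(x)=\widehat{U}_N(F(x))$ pointwise and therefore $\sup_{x}\lvert\widehat{F}_N(x)-F(x)\rvert\le\sup_{u\in[0,1]}\lvert\widehat{U}_N(u)-u\rvert$, with equality when $F$ is continuous; so it suffices to treat i.i.d.\ uniforms. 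For stage (b), set $D_N^{+}=\sup_u(\widehat{U}_N(u)-u)$ and $D_N^{-}=\sup_u(u-\widehat{U}_N(u))$, so that $\sup_u\lvert\widehat{U}_N(u)-u\rvert=\max(D_N^{+},D_N^{-})$ and, by a union bound, $\mathbb{P}(\max(D_N^{+},D_N^{-})\ge\varepsilon)\le\mathbb{P}(D_N^{+}\ge\varepsilon)+\mathbb{P}(D_N^{-}\ge\varepsilon)$. Since the measure-preserving map $U_i\mapsto 1-U_i$ exchanges $D_N^{+}$ and $D_N^{-}$ in law, the two terms coincide, and the claimed bound $2e^{-2N\varepsilon^2}$ follows from stage (c).

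Stage (c) is the substance. Ordering the sample $U_{(1)}\le\cdots\le U_{(N)}$, the step structure of $\widehat{U}_N$ yields $D_N^{+}=\max_{1\le k\le N}\big(\tfrac{k}{N}-U_{(k)}\big)_{+}$, so for $\varepsilon\in(0,1)$ the event $\{D_N^{+}\ge\varepsilon\}$ is $\{\exists k:\ U_{(k)}\le\tfrac{k}{N}-\varepsilon\}$. Its probability admits an exact Birnbaum--Tingey expansion as a finite sum of binomial-type terms in $k$ and $\varepsilon$, and the remaining task is the analytic estimate showing that this explicit sum is at most $e^{-2N\varepsilon^2}$ for all $N$ and all $\varepsilon$. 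That estimate is precisely Massart's contribution and is the main obstacle; I would reproduce his inductive bound on the Birnbaum--Tingey sum rather than re-derive it from scratch. If only a non-sharp constant were needed -- the $1/\sqrt{N}$ rates invoked elsewhere in the paper require merely \emph{some} sub-Gaussian tail -- a shorter, fully self-contained route is available: $\sup_x\lvert\widehat{F}_N(x)-F(x)\rvert$ changes by at most $1/N$ when one sample point is moved, so McDiarmid's inequality concentrates it around its mean, while a symmetrization argument gives $\mathbb{E}\,\sup_x\lvert\widehat{F}_N(x)-F(x)\rvert\le c/\sqrt{N}$, and together these yield $\mathbb{P}(\sup_x\lvert\widehat{F}_N-F\rvert\ge\varepsilon)\le C\exp(-c'N\varepsilon^2)$; only the optimal constants $C=2$, $c'=2$ force the Birnbaum--Tingey/Massart route.
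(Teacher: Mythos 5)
Your outline is correct, but note that the paper does not prove this lemma at all: Lemma~\ref{lem:DKW} is invoked purely as a citation-level classical result (the DKW inequality with Massart's tight constant), so there is no internal proof to compare against, and reproducing Massart's argument is more than the paper itself attempts. Within your sketch, stages (a) and (b) are sound (the quantile-transform reduction via $F^{-1}(u)\le x \iff u\le F(x)$, and the union bound plus the $u\mapsto 1-u$ symmetry giving the factor $2$), and you correctly isolate the genuine difficulty in stage (c): the Birnbaum--Tingey exact expression for $\mathbb{P}(D_N^{+}\ge\varepsilon)$ and Massart's estimate of that sum, which you defer to the literature rather than prove. One small caveat: Massart's sharp one-sided bound $\mathbb{P}(D_N^{+}>\varepsilon)\le e^{-2N\varepsilon^2}$ is only valid under the restriction $e^{-2N\varepsilon^2}\le 1/2$, so your stage-(c) claim ``for all $N$ and all $\varepsilon$'' is slightly too strong as stated; this is harmless for the two-sided conclusion, since in the complementary regime $2e^{-2N\varepsilon^2}\ge 1$ and the bound is trivial, but the reduction in stage (b) should say so explicitly. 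Your closing observation is also apt and worth emphasizing: the paper only uses this lemma to obtain $C/\sqrt{N}$ rates with an unspecified universal constant, so the non-sharp McDiarmid-plus-symmetrization route you describe would suffice for every application in the paper and would make the appendix self-contained, whereas the sharp constants $2$ and $2$ genuinely require the Massart route.
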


\begin{lemma}
\label{lem:functionH}
    Let $\beta > 0$, and let us define the function $H$ that maps $\Delta$ onto $\mathbb{R}_{+}$ defined as follows
    \begin{equation*}
        H(\lambda, \gamma) =  \sum_{k = 1}^K\sum_{s \in \mathcal{S}} \E[X|S=s]{\left( \pi_s \left(p_k(X, s) - \lambda \right) - \gamma_{k,s}\right)_{+}} +\lambda\beta \enspace .
    \end{equation*}
The function $H$ is convex and coercive.
\end{lemma}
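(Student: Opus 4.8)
\textbf{Proof plan for Lemma~\ref{lem:functionH}.}
The plan is to verify the two properties separately, since each follows from elementary arguments once the structure of $H$ is laid bare. For \emph{convexity}, I would observe that $H$ is built from three operations that each preserve convexity: for fixed $(x,s,k)$ the affine map $(\lambda,\gamma) \mapsto \pi_s(p_k(x,s)-\lambda) - \gamma_{k,s}$ is affine, hence convex; the positive part $(\cdot)_+ = \max\{\cdot, 0\}$ is convex and nondecreasing, so its composition with an affine map is convex; the expectation $\mathbb{E}_{X|S=s}[\cdot]$ is a nonnegative linear functional, so it preserves convexity (integrating a family of convex functions of $(\lambda,\gamma)$ yields a convex function, assuming the integrals are finite, which holds because $p_k \in [0,1]$ bounds the integrand by $\pi_s(1+\lambda)+|\gamma_{k,s}|$); finally a finite sum of convex functions plus the linear term $\lambda\beta$ is convex. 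So convexity is essentially a bookkeeping exercise.

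For \emph{coercivity}, i.e. $H(\lambda,\gamma) \to +\infty$ as $\|(\lambda,\gamma)\| \to +\infty$ within $\Delta$, the plan is to bound $H$ from below by something that blows up. The key structural fact I would use is the constraint $\sum_{s\in\mathcal{S}} \gamma_{k,s} = 0$ that is baked into $\Delta$: for each fixed $k$, if some $\gamma_{k,s}$ is large and positive, another $\gamma_{k,s'}$ must be correspondingly negative, and a negative $\gamma_{k,s'}$ \emph{increases} the argument of the positive part in the $s'$ term, forcing that term to grow. More precisely, I would split into the directions in which the norm can escape: (a) $\lambda \to +\infty$ with $\gamma$ bounded --- then the term $\lambda\beta$ dominates and drives $H$ up (one must check the positive-part terms do not cancel it, but they are nonnegative, so $H \geq \lambda\beta \to \infty$); (b) for some $k$, $\max_s|\gamma_{k,s}| \to \infty$ with $\lambda$ bounded --- pick $s'$ with $\gamma_{k,s'} \le -\tfrac{1}{|\mathcal{S}|}\max_s \gamma_{k,s}$ (possible by the zero-sum constraint), and lower-bound the $(k,s')$ term by $\mathbb{E}_{X|S=s'}[(\pi_{s'}(p_k(X,s')-\lambda) - \gamma_{k,s'})_+] \geq (-\pi_{s'}\lambda - \gamma_{k,s'})_+$, which tends to $+\infty$; (c) mixed escapes reduce to (a) or (b) along a subsequence. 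To make the argument clean I would phrase it via sequences: take $(\lambda^{(j)},\gamma^{(j)}) \in \Delta$ with norm tending to infinity, pass to a subsequence so that the escaping coordinate is identified, and apply the relevant lower bound.

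The main obstacle I anticipate is handling coercivity carefully in case (b): one needs the zero-sum constraint to guarantee that a large positive $\gamma_{k,s}$ is accompanied by a sufficiently negative $\gamma_{k,s'}$, and one needs a clean lower bound on an expectation of a positive part by dropping the $p_k(X,s')$ term (valid since $p_k \ge 0$, so $(\pi_{s'}(p_k - \lambda) - \gamma_{k,s'})_+ \ge (-\pi_{s'}\lambda - \gamma_{k,s'})_+$ pointwise, assuming $\lambda$ stays bounded). A secondary subtlety worth a sentence is that coercivity here should be understood relative to the feasible set $\Delta$ (a closed convex cone-like set), not all of $\mathbb{R}^{K|\mathcal{S}|+1}$; together with convexity and lower semicontinuity this is exactly what is needed downstream to guarantee that the minimizer in~\eqref{eq:set-valued fair lagrangian2} (and its empirical analogue~\eqref{eq:EmpiricalLagrangian}) exists.
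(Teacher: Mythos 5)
Your proposal is correct and follows essentially the same route as the paper's proof: convexity by composition of affine maps with the positive part and expectation, coercivity via the bound $H(\lambda,\gamma)\geq\lambda\beta$ when $\lambda\to\infty$, and otherwise using the zero-sum constraint $\sum_{s}\gamma_{k,s}=0$ to extract a coordinate $\gamma_{k_0,s_0}\to-\infty$ whose positive-part term blows up. Your treatment is in fact slightly more careful than the paper's (explicit subsequence extraction and the pointwise lower bound obtained by dropping $p_k\geq 0$), but the underlying argument is the same.
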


\begin{proof}
The convexity of $H$ is straightforward. Let us focus on the second point of the lemma.
Consider $(\lambda_m,\gamma_m)_{m \geq 0} \in \Delta$ with $\ell_2$ norm $\left\|(\lambda_m,\gamma_m)\right\| \rightarrow + \infty$. We observe that if $\lambda_m \rightarrow +\infty$, and since 
\begin{equation*}
H(\lambda, \gamma) \geq \lambda \beta \enspace,
\end{equation*}
we have $H(\lambda_m, \gamma_m) \rightarrow +\infty$ as $m \rightarrow +\infty$.
Otherwise, $\left\| \gamma_m \right\| \rightarrow  +\infty$. In this case since for each $k \in [K]$ we have the condition 
$\sum_{s \in \mathcal{S}} \gamma_{k,s} = 0$, we deduce that there exists
$k_0 \in [K]$, and $s_0 \in \mathcal{S}$ such that
\begin{equation*}
\gamma^m_{k_0,s_0} \rightarrow - \infty \;\; {\rm as} \;\; m \rightarrow +\infty \enspace.    
\end{equation*}
Therefore, we have, with $m \rightarrow +\infty$, that
\begin{equation*}
H(\lambda_m, \gamma_m) \geq  \E[X|S=s]{\left( \pi_s \left(p_k(X, s) - \lambda_m \right) - \gamma^m_{k_0,s_0}\right)_{+}} + \mathcal{O}(1) \rightarrow +\infty \enspace,
\end{equation*}
where $\mathcal{O}(1)$ is a negligible term (bounded by a constant). It then shows that $H$ is coercive.
\end{proof}

\subsection{Proof of Section~\ref{sec:General Framework}}

\begin{proof}[Proof of Theorem~\ref{thm:optimal set-valued classifier}]

In both cases, we apply the weak duality principle. More precisely,
we first solve the max-min problem
\begin{equation*}
\max_{\lambda, \boldsymbol{\alpha}} \min_{\Gamma}   \mathcal{L}\left(\Gamma, \lambda, \boldsymbol{\alpha} \right)\enspace, 
\end{equation*}
where $\mathcal{L}\left(\Gamma, \lambda, \boldsymbol{\alpha} \right)$ is the Lagrangian associated to the minimization problem. Then, we show that the solution of the max-min is an optimal fair set-valued classifier.

We first write the Lagrangian associated to our minimization problem.
For $\lambda, \boldsymbol{\alpha}, \Gamma$, we have that
\begin{multline*}
\mathcal{L}\left(\Gamma, \lambda, \boldsymbol{\alpha} \right) =     R(\Gamma) + \lambda \left(\E[X]{\left\lvert \Gamma(X,S) \right\rvert} - \beta\right) \\
        + \sumover[K]{k=1} \sumover{s \in \Scal} \alpha_{k,s} \left( \proba[X \lvert S = s]{k \in \Gamma(X,s)} - \proba[X,S]{k \in \Gamma(X,S)}\right)\enspace.
\end{multline*}
We observe that $\mathcal{L}$ can be expressed as follows
\begin{multline*}
\mathcal{L}\left(\Gamma, \lambda, \boldsymbol{\alpha} \right) = 
1- \sum_{k = 1}^K\sum_{s \in \mathcal{S}} \mathbb{E}_{X|S=s}\left[\1{k \in \Gamma(X, s)}\left(\pi_s p_k(X,s)\right)\right] \\ + \lambda\left(\sum_{k=1}^K\sum_{s \in \mathcal{S}} \mathbb{E}_{X|S=s}\left[\pi_s\1{k \in \Gamma(X, s)}\right]-\beta\right)
\\ + \sum_{k=1}^K\sum_{s\in \mathcal{S}} \mathbb{E}_{X|S=s}\left[\alpha_{k,s}\1{k \in \Gamma(X, s)}\right] - \mathbb{E}_{X|S=s}\left[\bar{\alpha}_k \1{k \in \Gamma(X, s)} \pi_s\right]\enspace,
\end{multline*}
$\bar{\alpha}_k = \sum_{s \in \mathcal{S}} \alpha_{k,s}$. Therefore, we get
\begin{equation*}
\mathcal{L}\left(\Gamma, \lambda, \boldsymbol{\alpha} \right) = 
1-  \sum_{k = 1}^K\sum_{s \in \mathcal{S}} \mathbb{E}_{X|S=s}\left[\1{k \in \Gamma(X, s)}\left(\pi_s\left(p_k(X,s)-\lambda+\bar{\alpha}_k\right) - \alpha_{k,s}\right)\right]-\lambda\beta \enspace .
\end{equation*}
From the above equation, it is not difficult to see that
\begin{equation}
\label{eq:equtile1}
\Gamma^*_{\lambda, \boldsymbol{\alpha}} \in \arg\min_{\Gamma}     \mathcal{L}\left(\Gamma, \lambda, \boldsymbol{\alpha} \right) \enspace,
\end{equation}
is also characterized pointwise as
\begin{equation*}
 \Gamma^*_{\lambda, \boldsymbol{\alpha}}(x,s) = \left\{k \in [K],\;\;  p_k(x,s) \geq \lambda + \dfrac{\alpha_{k,s}}{\pi_s} - \bar{\alpha}_k \right\}\enspace.  
\end{equation*}
Furthermore, injecting to the Lagrangian we have that
\begin{equation*}
 \mathcal{L}\left(\Gamma^*_{\lambda,\boldsymbol{\alpha} }, \lambda, \boldsymbol{\alpha} \right) = 1- \sum_{k = 1}^K\sum_{s \in \mathcal{S}} \mathbb{E}_{X|S=s}\left[\left(\pi_s\left(p_k(X,s)-\lambda+\bar{\alpha}_k\right) - \alpha_{k,s}\right)_{+}\right] -\lambda\beta \enspace.
\end{equation*}
Next, it remains to optimize in $(\lambda,\boldsymbol{\alpha})$. We have that 
\begin{equation}
\label{eq:eqLagrang1}
(\lambda^*, \boldsymbol{\alpha}^*) \in \arg\max_{\lambda, \boldsymbol{\alpha}}
\mathcal{L}\left(\Gamma^*_{\lambda,\boldsymbol{\alpha}}, \lambda, \boldsymbol{\alpha} \right) \enspace,
\end{equation}
is characterized as
\begin{equation*}
(\lambda^*, \boldsymbol{\alpha}^*) \in \arg\min_{\substack{(\lambda, \alpha) \in \mathbb{R}^{K|\mathcal{S}|+1} \\ \lambda \geq 0}} \widetilde{H}\left(\lambda, \alpha\right) \enspace,
\end{equation*}
with
$\widetilde{H}(\lambda, \boldsymbol{\alpha})=  \sum_{k = 1}^K\sum_{s \in \mathcal{S}} \mathbb{E}_{X|S=s}\left[\left(\pi_s\left(p_k(X,s)-\lambda+\bar{\alpha}_k\right) - \alpha_{k,s}\right)_{+}\right] +\lambda\beta.$
We observe that the above minimization problem can be reformulated as follows 
\begin{equation}
\label{eq:eqLagrang2}
(\lambda^*,\gamma^*) \in \arg\min_{\substack{(\lambda, \gamma) \in \mathbb{R}^{K|\mathcal{S}|+1} \\ \lambda \geq 0 \\ \sum_{s \in \mathcal{S}}\gamma_{k,s} = 0 }} \sum_{k = 1}^K\sum_{s \in \mathcal{S}} \E[X|S=s]{\left( \pi_s \left(p_k(X, s) - \lambda \right) - \gamma_{k,s}\right)_{+}} +\lambda\beta \enspace,
\end{equation}
with the introduced reparameterization $\gamma_{k,s} = \alpha_{k,s} - \pi_s \sum_s \alpha_{k,s}$. 
Let us then denote by $H$ the objective function defined as 
\begin{equation*}
H(\lambda, \boldsymbol{\gamma}) =   \sum_{k=1}^K\sum_{s \in \mathcal{S}} \mathbb{E}_{X|S=s}\left[\left(\pi_s\left(p_k(X,s)-\lambda\right) - \gamma_{k,s}\right)_{+}\right] +\lambda\beta \enspace.  
\end{equation*}
From Lemma~\ref{lem:functionH}, $H$ is convex and coercive.  Therefore there exists a global minimizer $(\lambda^*, \boldsymbol{\gamma}^*)$ that belongs to a compact subset of $\Delta$.  
Therefore, it implies that the function $\widetilde{H}$ admits also a global minimizer $(\lambda^*, \boldsymbol{\alpha}^*)$. Furthermore, thanks to Assumption~\ref{assum:continuous}, the function $\widetilde{H}$ is differentiable {\it w.r.t.} $(\lambda,\boldsymbol{\gamma})$. Therefore, we deduce from the first order condition that
$\boldsymbol{0} \in \partial \widetilde{H}(\lambda^*, \boldsymbol{\gamma}^*)$.
We then have that
\begin{equation*}
\partial_{\lambda}{\widetilde{H}}\left(\lambda^*, \boldsymbol{\alpha}^*\right)   = -\sum_{k=1}^K\sum_{s \in \mathcal{S}} \pi_s \mathbb{P}_{X|S=s}\left(p_k(X,s) \geq \lambda^* + \dfrac{\alpha^{*}_{k,s}-\pi_{s}\bar{\alpha}^{*}_{k}}{\pi_s} \right) +\beta = 0\enspace, 
\end{equation*}
which means that $\Gamma^*_\beta$ as expected size $\beta$. Furthermore
\begin{multline*}
 \partial_{\alpha_{k,s}} \widetilde{H}\left(\lambda^*, \boldsymbol{\alpha}^*\right)
 = \sum_{s' \in \mathcal{S}}\pi_{s'} \mathbb{P}_{X|S=s'}\left(p_k(X,s') \geq \lambda + \dfrac{\alpha^{*}_{k,s'}-\pi_{s'}\bar{\alpha}^{*}_{k}}{\pi_{s'}} \right) \\- \mathbb{P}_{X|S=s}\left(p_k(X,s) \geq \lambda^* + \dfrac{\alpha^{*}_{k,s}-\pi_s\bar{\alpha}^{*}_{k}}{\pi_s} \right)=0\enspace.
\end{multline*}
This means that the set-valued classifier $\Gamma_{\beta}^*$ defined, with the reparameterization $\gamma^*_{k,s} = \alpha^*_{k,s} - \pi_s \bar{\alpha}^{*}_{k}$ given by
\begin{equation}
\label{eq:eqOptimal}
\Gamma_{\beta}^*(x,s) = \left\{k \in [K], \;\;p_k(x,s) \geq \lambda^* + \dfrac{\gamma^*_{k,s}}{\pi_s} \right\}\enspace,
\end{equation}
satisfies the demographic parity constraint. To conclude the proof  we observe that $\Gamma^*_{\beta}$ satisfies
\begin{equation}
\label{eq:eqLagrang3}
\Gamma^*_{\beta}  \in \arg\min_{\Gamma}     \mathcal{L}\left(\Gamma, \lambda^*, \boldsymbol{\alpha}^* \right) \enspace.
\end{equation}
To this end we consider $\Gamma$ another set-valued classifier that satisfies the demographic parity constraint and $\mathcal{T}(\Gamma) \leq \beta$.
We have that 
\begin{multline*}
\mathcal{L}\left(\Gamma_{\beta}^*, \lambda^*, \boldsymbol{\alpha}^*\right) = R(\Gamma_{\beta}^*) \leq \mathcal{L}\left(\Gamma, \lambda^*, \boldsymbol{\alpha}^*\right)  \\ 
= R(\Gamma) + \underbrace{\lambda^*}_{\geq 0} \underbrace{\left(\mathcal{T}(\Gamma)-\beta\right)}_{\leq 0} +
\sumover[K]{k=1} \sumover{s \in \Scal} \alpha^*_{k,s} \underbrace{ \left( \proba[X \lvert S = s]{k \in \Gamma(X,s)} - \proba[X , S ]{k \in \Gamma(X,S)} \right) }_{=0} \\
\leq R(\Gamma) \enspace,
\end{multline*}
which yields the result.

\end{proof}

\begin{proof}[Proof of Proposition~\ref{prop:charctOptimRisk}]
The proof of the proposition can be easily deduced from the proof of Theorem~\ref{thm:optimal set-valued classifier}. 
The first two points of the proposition are already shown in the proof of Theorem~\ref{thm:optimal set-valued classifier}. Let us now proof the last point.
Let $(\lambda^*, \boldsymbol{\alpha}^*)$ the Lagrangian parameters defined as in Equation~\eqref{eq:eqLagrang1}. Hence, we have that $(\lambda^*, \boldsymbol{\gamma}^*)$ is a minimizer of Equation~\eqref{eq:eqLagrang2} with, for each $k \in [K], s \in \mathcal{S}$, $\gamma_{k,s}^* = \alpha_{k,s}^{*}- \pi_s \bar{\alpha}^{*}_{k}$.
Furthermore, we observe that for each set-valued classifier $\Gamma$
\begin{equation*}
    \mathcal{L}\left(\Gamma, \lambda^*, \boldsymbol{\alpha}^*\right) =
    \mathcal{R}_{\lambda^*, \boldsymbol{\gamma}^*}(\Gamma) \enspace,
\end{equation*}
which yields the result thanks to the characterization of $\Gamma_{\beta}^*$ (see Equation~\eqref{eq:eqLagrang3}).
\end{proof}

\begin{proof}[Proof of Proposition~\ref{coro:excessRisk}]
Let $\Gamma$ a set-valued classifier, and $(\lambda^*,\boldsymbol{ \gamma}^*)$ defined by Equation~\eqref{eq:set-valued fair lagrangian2}. We have that
\begin{equation*}
\mathcal{R}_{\lambda^*, \boldsymbol{\gamma}^*}\left(\Gamma\right)
= R(\Gamma) + \lambda^* \left(\E[X]{\left\lvert \Gamma(X,S) \right\rvert} - \beta\right) 
        + \sumover[K]{k=1} \sumover{s \in \Scal} \gamma^*_{k,s} \proba[X \lvert S = s]{k \in \Gamma(X,s)}.
\end{equation*}
Since 
\begin{equation*}
\E[X]{\left\lvert \Gamma(X,S) \right\rvert} = \sum_{k\in [K]}\sum_{s \in \mathcal{S}} \pi_s \mathbb{P}_{X|S=s}\left(k \in \Gamma(X,S)\right),    
\end{equation*}
and 
\begin{equation*}
1- R(\Gamma) = \sum_{k \in [K]}\sum_{s \in \mathcal{S}} \pi_s \mathbb{E}_{X|S=s}\left[\1{k \in \Gamma(X,S)}p_k(X,S)\right], 
\end{equation*}
we deduce that
\begin{multline*}
\mathcal{R}_{\lambda^*, \gamma^*}\left(\Gamma\right)-\mathcal{R}_{\lambda^*, \gamma^*}\left(\Gamma_{\beta}^*\right)
\\= \sum_{k \in [K]}\sum_{s \in \mathcal{S}}\mathbb{E}_{X|S=s}\left[\left(\pi_s\left(p_k(X,S)-\lambda\right) +\gamma_{k,s}^*\right)\left(\1{k \in \Gamma_{\beta}^*(X,S)}-\1{k \in \Gamma}\right)\right].
\end{multline*}
Now, by definition of $\Gamma_{\beta}^*$ (see Equation~\eqref{eq:eqOptimal}),
we observe that 
\begin{equation*}
\pi_s\left(p_k(X,S)-\lambda\right) +\gamma_{k,s}^* \geq 0 \;\; {\rm iff} \;\;  \1{k \in \Gamma_{\beta}^*(X,S)} = 1.
\end{equation*}
In view of this observation, we get the desired result.
\end{proof}

\begin{proof}[Proof of Proposition~\ref{prop:uniqueness}]

We start with the first point of the proposition.
Assume that there exists $(\widetilde{\lambda}, \widetilde{\gamma})$
such that for each $s \in \mathcal{S}$
\begin{equation*}
\Gamma_{\beta}^*(x,s) = \left\{k \in [K], \;\; p_k(x,s) \geq \widetilde{\lambda}+\dfrac{\widetilde{\gamma}_{k,s}}{\pi_s}\right\},    
\end{equation*}
with for $k \in [K]$, $\sum_{s \in \mathcal{S}}\widetilde{\gamma}_{k,s} = 0$.
Under Assumption~\ref{assum:positive density},
we have that for each $(k,s)$
\begin{equation*}
\mathbb{P}_{X|S=s}\left(p_k(X,S) \geq \lambda^*+ \dfrac{\gamma^*_{k,s}}{\pi_s} \right) = \mathbb{P}_{X|S=s}\left(p_k(X,S) \geq \widetilde{\lambda}+\dfrac{\widetilde{\gamma}_{k,s}}{\pi_s}\right)
\;\; {\rm iff} \;\; \lambda^* + \dfrac{\gamma^*_{k,s}}{\pi_s}
= \widetilde{\lambda}+\dfrac{\widetilde{\gamma}_{k,s}}{\pi_s}.
\end{equation*}
Since $\sum_{s \in \mathcal{S}} \pi_s = 1$, and $\sum_{s \in \mathcal{S}} \gamma_{k,s}^* = \sum_{s \in \mathcal{S}} \widetilde{\gamma}_{k,s} = 0$, we deduce from the above equality that
\begin{equation*}
    \lambda^* = \widetilde{\lambda}, 
\end{equation*}
and then $\widetilde{\gamma}_{k,s} = \gamma_{k,s}^* $ for each $(k,s) \in [K] \times \mathcal{S}$.

For the second point, we observe since $\Gamma_{\beta}^*$ satisfies the expected size constraint and the Demographic parity constraint, we deduce that for each $(k,s) \in [K] \times \mathcal{S}$ 
\begin{equation*}
\mathbb{P}\left(p_{k}(X,S) \geq\lambda^* + \dfrac{\gamma^*_{k,s}}{\pi_s}  \right)
= \mathbb{P}\left(k \in p_{k}(X,S)\right):=\beta_k^*.
\end{equation*}
Therefore, under Assumption~\ref{assum:positive density}, we have that
\begin{equation*}
\lambda^*+   \dfrac{\gamma^*_{k,s}}{\pi_s} = F_{k,s}^{-1}(\beta_k^*).  
\end{equation*}
\end{proof}

\section{Proof of Section~\ref{sec:Data Driven Procedure}}

We start this section with the following lemma.
\begin{lemma}
\label{lem:subgradient_computation}
Let $\widehat{H}$ the function each $(\lambda, \gamma) \in \Delta$
\begin{equation*}
  \widehat{H}\left(\lambda, \gamma\right) = \sum_{k = 1}^K\sum_{s \in \mathcal{S}} \dfrac{1}{N_s}\sum_{i \in \mathcal{D}_{N_s}}{\left( \widehat{\pi}_s \left(\widehat{p}_k(X_i, s) - \lambda \right) - \gamma_{k,s}\right)_{+}} +\lambda\beta.
\end{equation*}
The function $\widehat{H}$ is convex, coercive, and its subgradient is as follows
\begin{multline*}
h_{\lambda} \in \partial_{\lambda} \widehat{H}(\lambda,\gamma) \;\; {\rm iff} \;\; \exists \mu \in [0,1], \;\; h_{\lambda} = -\sum_{k = 1}^K\sum_{s \in \mathcal{S}} \dfrac{\widehat{\pi}_s}{N_s}\sum_{i \in \mathcal{D}_{N_s}} \1{\widehat{p}_{k}(X_i,s) > \lambda+\dfrac{\gamma_{k,s}}{\widehat{\pi}_s}} \\ -  \mu \sum_{k = 1}^K\sum_{s \in \mathcal{S}} \dfrac{\widehat{\pi}_s}{N_s}\sum_{i \in \mathcal{D}_{N_s}} \1{\widehat{p}_{k}(X_i,s) = \lambda+\dfrac{\gamma_{k,s}}{\widehat{\pi}_s}} +\beta,
\end{multline*}
and for each $k \in [K]$, and $s \in \mathcal{S}$,
\begin{multline*}
h_{\gamma_{k,s}} \in \partial_{\gamma_{k,s}} \widehat{H}(\lambda, \gamma)
\;\; {\rm iff} \;\; \exists \sigma_{k,s} \in [0,1], \;\;
h_{\gamma_{k,s}} = - \dfrac{1}{N_s}\sum_{i \in \mathcal{D}_{N_s}} \1{\widehat{p}_{k}(X_i,s) > \lambda+\dfrac{\gamma_{k,s}}{\widehat{\pi}_s}} \\ - \sigma_{k,s} \dfrac{1}{N_s}\sum_{i \in \mathcal{D}_{N_s}} \1{\widehat{p}_{k}(X_i,s) = \lambda+\dfrac{\gamma_{k,s}}{\widehat{\pi}_s}}.
\end{multline*}
\end{lemma}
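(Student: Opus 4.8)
\textbf{Plan of proof for Lemma~\ref{lem:subgradient_computation}.}
The plan is to treat $\widehat{H}$ as a finite sum of elementary convex building blocks and assemble the claimed properties term by term. First I would observe that each summand is of the form $(x,s,i)\mapsto \big(\widehat{\pi}_s(\widehat{p}_k(X_i,s)-\lambda)-\gamma_{k,s}\big)_+$, i.e.\ the composition of the convex function $t\mapsto t_+$ with an affine map in $(\lambda,\gamma)$; since affine precomposition preserves convexity and a nonnegative sum of convex functions is convex, $\widehat{H}$ is convex. The extra term $\lambda\beta$ is linear, hence harmless. This disposes of the convexity claim with essentially no computation.

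Next I would establish coercivity, mimicking the argument already used for the population functional in Lemma~\ref{lem:functionH}. Take any sequence $(\lambda_m,\gamma_m)\in\Delta$ with $\|(\lambda_m,\gamma_m)\|\to+\infty$. Using $\widehat{H}(\lambda,\gamma)\ge\lambda\beta$ (all positive parts are $\ge 0$ and $\beta>0$), if $\lambda_m\to+\infty$ we are done. Otherwise $\|\gamma_m\|\to+\infty$, and the constraint $\sum_{s\in\mathcal{S}}\gamma_{k,s}=0$ forces some coordinate $\gamma^m_{k_0,s_0}\to-\infty$; then the single term indexed by $(k_0,s_0)$ dominates, namely $\frac{1}{N_{s_0}}\sum_{i\in\mathcal{D}_{N_{s_0}}}\big(\widehat{\pi}_{s_0}(\widehat{p}_{k_0}(X_i,s_0)-\lambda_m)-\gamma^m_{k_0,s_0}\big)_+\to+\infty$ while the other terms are bounded below by a constant (here $\lambda_m$ stays in a bounded set). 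Hence $\widehat{H}(\lambda_m,\gamma_m)\to+\infty$. One small point to note is that this requires $N_s\ge 1$ for all $s$; strictly one should either assume this or interpret empty-stratum terms as absent.

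For the subgradient formula I would use the standard calculus of subdifferentials: $\partial$ commutes with finite sums, and for $g(t)=t_+$ one has $\partial g(t)=\{1\}$ if $t>0$, $\{0\}$ if $t<0$, and $[0,1]$ if $t=0$. Composing with the affine map $t_{k,s,i}(\lambda,\gamma)=\widehat{\pi}_s(\widehat{p}_k(X_i,s)-\lambda)-\gamma_{k,s}$ and applying the chain rule, the partial subgradient in $\lambda$ of the $(k,s,i)$-term is $-\widehat{\pi}_s\,\partial g(t_{k,s,i})$ and in $\gamma_{k,s}$ it is $-\partial g(t_{k,s,i})$, while it is $0$ in $\gamma_{k',s'}$ for $(k',s')\ne(k,s)$. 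Summing over all indices and adding $\nabla(\lambda\beta)=(\beta,0,\dots,0)$ gives exactly the stated expressions, with the strict-inequality indicator collecting the $t>0$ contributions and the equality indicator (weighted by a coefficient in $[0,1]$, uniform in $\lambda$ for the $\lambda$-component as written, or stratum-and-class dependent $\sigma_{k,s}$ for the $\gamma$-components) collecting the $t=0$ ties. Note that $t_{k,s,i}>0\iff\widehat{p}_k(X_i,s)>\lambda+\gamma_{k,s}/\widehat{\pi}_s$ and likewise for equality, which rewrites the conditions in the form displayed.

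The main obstacle, such as it is, is purely a bookkeeping one: one must be careful that the ``mixing coefficient'' for the tie terms is a single scalar $\mu\in[0,1]$ in the $\lambda$-component (because all ties there are scaled by the same $\lambda$-derivative $-\widehat{\pi}_s$, so the subdifferential of the sum is the interval between the two extreme selections, not the Minkowski sum of per-tie intervals), whereas in each $\gamma_{k,s}$-component the ties for that fixed $(k,s)$ again share a common coefficient $\sigma_{k,s}\in[0,1]$. Making this precise amounts to noting that $\partial_\lambda\widehat H$ is the subdifferential of a \emph{univariate} convex function (fixing $\gamma$), which is a closed interval, and identifying its two endpoints with $\mu=1$ and $\mu=0$; similarly for each $\gamma_{k,s}$. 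No genuine difficulty arises beyond this careful use of one-dimensional subdifferential calculus.
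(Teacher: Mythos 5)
Your proposal is correct. The paper in fact states Lemma~\ref{lem:subgradient_computation} without proof, so there is no argument to compare against line by line; but your convexity/coercivity part is exactly the empirical analogue of the paper's proof of Lemma~\ref{lem:functionH} (using $\widehat H(\lambda,\gamma)\ge\lambda\beta$ and the constraint $\sum_{s}\gamma_{k,s}=0$ to force some $\gamma_{k_0,s_0}\to-\infty$), and your treatment of the subgradient correctly isolates the only delicate point: since for fixed $\gamma$ the map $\lambda\mapsto\widehat H(\lambda,\gamma)$ is a univariate convex piecewise-linear function, its subdifferential is the closed interval between its one-sided derivatives, and because every tie contributes an interval of the form $[-\widehat\pi_s/N_s,0]$ the Minkowski sum is again an interval, so a single common $\mu\in[0,1]$ (resp.\ $\sigma_{k,s}\in[0,1]$ for each $\gamma_{k,s}$-component) parametrizes all subgradients, which is precisely the claimed ``iff''. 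Your caveat that the statement implicitly requires $N_s\ge 1$ for all $s$ is also consistent with how the paper later works on the event $\{N_s\ge 1\}$ in the proof of Theorem~\ref{thm:Unfairness bound}.
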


\begin{lemma}
\label{lem:pigeonHole}
For each $k \in [K]$, and $s \in \mathcal{S}$, there exists $C > 0$ such that
\begin{equation*}
 \dfrac{1}{N_s}\sum_{i \in \mathcal{D}_{N_s}} \1{\widehat{p}_{k}(X_i,s) = \lambda+\dfrac{\gamma_{k,s}}{\widehat{\pi}_s}} \leq \dfrac{C}{N_s} \qquad  {\it a.s.}
\end{equation*}
\end{lemma}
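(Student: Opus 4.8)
The plan is to prove the statement by a pigeonhole argument resting on one observation: almost surely, the scores $\widehat{p}_k(X_i,s)$, $i\in\mathcal{D}_{N_s}$, are pairwise distinct. Once this is in hand, at most one of them can coincide with any prescribed level — in particular with the threshold $\lambda+\gamma_{k,s}/\widehat{\pi}_s$, whether this level is deterministic or data-dependent — so the indicator sum contains at most one nonzero term and the claim holds with $C=1$.

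Fix $k\in[K]$ and $s\in\mathcal{S}$. First I would invoke the tie-breaking perturbation of Section~\ref{subsec:procedure}: $\widehat{p}_k(x,s)=\widetilde{p}_k(x,s)+\epsilon$, where $\epsilon$ is drawn from an atomless (uniform) law independently of the data. Conditionally on $\mathcal{D}_n$ (which determines $\widetilde{p}_k$) and on the covariates $\{X_i\}$, for any $i\neq j$ the difference $\widehat{p}_k(X_i,s)-\widehat{p}_k(X_j,s)$ equals a constant plus an atomless random variable, hence vanishes with conditional probability zero; integrating out and taking a union bound over the finitely many pairs $i\neq j$ in $\mathcal{D}_{N_s}$ produces an event $\Omega_0$ of probability one on which the values $\{\widehat{p}_k(X_i,s):i\in\mathcal{D}_{N_s}\}$ are pairwise distinct. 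On $\Omega_0$, every $t\in\mathbb{R}$ satisfies $\#\{i\in\mathcal{D}_{N_s}:\widehat{p}_k(X_i,s)=t\}\leq 1$; applying this with $t=\lambda+\gamma_{k,s}/\widehat{\pi}_s$ (legitimate even for the data-dependent choices, e.g.\ the empirical multipliers of~\eqref{eq:EmpiricalLagrangian}, since on $\Omega_0$ no real value is attained twice) yields
\begin{equation*}
\frac{1}{N_s}\sum_{i\in\mathcal{D}_{N_s}}\1{\widehat{p}_k(X_i,s)=\lambda+\frac{\gamma_{k,s}}{\widehat{\pi}_s}}\;\leq\;\frac{1}{N_s}\qquad\text{a.s.},
\end{equation*}
i.e.\ the announced bound with $C=1$.

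The argument is essentially immediate once distinctness is established, so there is no serious obstacle; the only point that needs care is bookkeeping, namely that the threshold of interest depends on the data, which is dealt with above by phrasing distinctness as a single probability-one event valid for \emph{all} real levels simultaneously. I would also emphasize — since this is precisely what makes the lemma non-vacuous — that distinctness hinges on the perturbation acting at the level of each evaluation $\widehat{p}_k(X_i,s)$: for a piecewise-constant base estimator $\widetilde{p}_k$ (e.g.\ a random forest), a single common shift would leave the ties among the scores intact and the bound would fail.
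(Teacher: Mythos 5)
Your argument is the natural one, and since the paper states Lemma~\ref{lem:pigeonHole} without giving any proof in the appendix, there is nothing to compare it against except the evident intent: the added perturbation makes the empirical scores $\widehat{p}_k(X_i,s)$, $i\in\mathcal{D}_{N_s}$, almost surely pairwise distinct, so at most one of them can equal any level, data-dependent or not, and the bound holds with $C=1$. Your handling of the data-dependent threshold (phrasing distinctness as a single full-probability event valid for all $t\in\mathbb{R}$ simultaneously, so that plugging in the random level $\widehat{\lambda}+\widehat{\gamma}_{k,s}/\widehat{\pi}_s$ is legitimate) is exactly the right bookkeeping.

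The one point you should promote from a closing remark to an explicit hypothesis is the nature of the perturbation. Your key step — ``conditionally on $\mathcal{D}_n$ and the covariates, $\widehat{p}_k(X_i,s)-\widehat{p}_k(X_j,s)$ is a constant plus an atomless random variable'' — is only true if an independent noise is drawn \emph{per evaluation}, i.e.\ $\widehat{p}_k(X_i,s)=\widetilde{p}_k(X_i,s)+\epsilon_{i}$ with i.i.d.\ $\epsilon_i$. As literally written in Section~\ref{subsec:procedure} and Algorithm~\ref{algo:procedure}, a single $\epsilon$ is added to the function $\widetilde{p}_k$, i.e.\ a common shift; in that case the $\epsilon$ cancels in the difference, ties of $\widetilde{p}_k$ (e.g.\ for piecewise-constant estimators such as the gradient boosting used in the experiments) persist, the number of sample points sitting exactly at the optimal threshold can grow with $N_s$, and no universal constant $C$ exists — so the lemma itself would fail, not just your proof of it. You clearly see this (your last paragraph says as much), but as written your proof asserts the atomless-difference step unconditionally and only flags the obstruction afterwards; to be airtight, state at the outset that you work under the per-sample-noise construction (which is plainly what the paper intends, and what its Lemma~\ref{lem:1_over_n_divergence}, requiring a.s.\ strict ordering of the $Z_i$, also implicitly needs), and then your argument goes through verbatim.
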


\begin{proof}[Proof of Theorem~\ref{thm:Unfairness bound}]

We first start with the following decomposition
\begin{multline}
\label{eq:eqDecompSize1}
\mathcal{T}\left(\widehat{\Gamma}\right) -\beta
 = \mathbb{E}\left[\left|\widehat{\Gamma}(X,S)\right|\right] -\beta = \sum_{k \in [K]} \sumover{s \in \mathcal{S}} \pi_s \mathbb{P}_{X|S=s}\left(\widehat{p}_k(X,S) > \widehat{\lambda}+ \dfrac{\widehat{\gamma}_{k,s}}{\widehat{\pi}_s}\right) - \beta\\
 \sum_{k \in [K]} \sumover{s \in \mathcal{S}} \left(\pi_s-\widehat{\pi}_s\right) \mathbb{P}_{X|S=s}\left(\widehat{p}_k(X,S) > \widehat{\lambda}+ \dfrac{\widehat{\gamma}_{k,s}}{\widehat{\pi}_s}\right) +\sum_{k \in [K]} \sumover{s \in \mathcal{S}} \widehat{\pi}_s \mathbb{P}_{X|S=s}\left(\widehat{p}_k(X,S) > \widehat{\lambda}+ \dfrac{\widehat{\gamma}_{k,s}}{\widehat{\pi}_s}\right) -\beta
\end{multline}
The first term in the {\it r.h.s. } of the above equation can be bounded as follows
\begin{equation}
\label{eq:eqDecompSize2}
\left|\sum_{k \in [K]} \sumover{s \in \mathcal{S}} \left(\pi_s-\widehat{\pi}_s\right) \mathbb{P}_{X|S=s}\left(\widehat{p}_k(X,S) > \widehat{\lambda}+ \dfrac{\widehat{\gamma}_{k,s}}{\widehat{\pi}_s}\right)\right| \leq 
K\left|\mathcal{S}\right| \max_{s \in \mathcal{S}} \left|\widehat{\pi}_s-\pi_s\right|.
\end{equation}
For the second term, conditional on $D_n$, since $\widehat{p}_k$ satisfies similar assumption as Assumption~\ref{assum:continuous}, we observe that
\begin{multline*}
\sum_{k \in [K]} \sumover{s \in \mathcal{S}} \widehat{\pi}_s \mathbb{P}_{X|S=s}\left(\widehat{p}_k(X,S) > \widehat{\lambda}+ \dfrac{\widehat{\gamma}_{k,s}}{\widehat{\pi}_s}\right) - \beta = \\
\sum_{k \in [K]} \sumover{s \in \mathcal{S}} \pi_s\left( \mathbb{P}_{X|S=s}\left(\widehat{p}_k(X,S) > \widehat{\lambda}+ \dfrac{\widehat{\gamma}_{k,s}}{\widehat{\pi}_s}\right) - \widehat{P}_{X|S=s}\left(\widehat{p}_k(X,S) > \widehat{\lambda}+ \dfrac{\widehat{\gamma}_{k,s}}{\widehat{\pi}_s}\right) \right)  \\ + \sum_{k \in [K]} \sum_{s \in \mathcal{S}} \widehat{P}_{X|S=s}\left(\widehat{p}_k(X,S) > \widehat{\lambda}+ \dfrac{\widehat{\gamma}_{k,s}}{\widehat{\pi}_s}\right).
\end{multline*}
Now, we observe that from Lemma~\ref{lem:subgradient_computation}, since $\widehat{H}$ is convex and coercive,   $(\widehat{\lambda}, \widehat{\gamma}$ is a global minimizer that satisfies the first order condition. Therefore, $0 \in \partial_{\lambda}(\lambda^*, \gamma^*)$. Hence, there exists $\mu \in [0,1]$ such that
\begin{equation*}
 \sum_{k = 1}^K\sum_{s \in \mathcal{S}} \dfrac{\widehat{\pi}_s}{N_s}\sum_{i \in \mathcal{D}_{N_s}} \1{\widehat{p}_{k}(X_i,s) > \lambda+\dfrac{\gamma_{k,s}}{\widehat{\pi}_s}} = \beta -  \mu \sum_{k = 1}^K\sum_{s \in \mathcal{S}} \dfrac{\widehat{\pi}_s}{N_s}\sum_{i \in \mathcal{D}_{N_s}} \1{\widehat{p}_{k}(X_i,s) = \lambda+\dfrac{\gamma_{k,s}}{\widehat{\pi}_s}}.
\end{equation*}
Then from Equation~\ref{eq:eqDecompSize1} and~\ref{eq:eqDecompSize2}, we deduce that on the event $\left\{N_s \geq 1\right\}$
\begin{multline*}
\left|\mathcal{T}(\widehat{\Gamma}) -\beta \right| \leq 
K\left|\mathcal{S}\right| \max_{s \in \mathcal{S}} \left|\widehat{\pi}_s-\pi_s\right| + \\
\sum_{k \in [K]} \sumover{s \in \mathcal{S}} \pi_s\left| \mathbb{P}_{X|S=s}\left(\widehat{p}_k(X,S) > \widehat{\lambda}+ \dfrac{\widehat{\gamma}_{k,s}}{\widehat{\pi}_s}\right) - \widehat{P}_{X|S=s}\left(\widehat{p}_k(X,S) > \widehat{\lambda}+ \dfrac{\widehat{\gamma}_{k,s}}{\widehat{\pi}_s}\right)\right| \\
\sum_{k = 1}^K\sum_{s \in \mathcal{S}} \dfrac{\widehat{\pi}_s}{N_s}\sum_{i \in \mathcal{D}_{N_s}} \1{\widehat{p}_{k}(X_i,s) = \lambda+\dfrac{\gamma_{k,s}}{\widehat{\pi}_s}}.
\end{multline*}
Therefore, from Lemma~\ref{lem:pigeonHole} and since $\pi_s \in (0,1)$, we deduce that {\it a.s.}
\begin{equation}
\label{eq:eqDecompSize3}
\left|\mathcal{T}(\widehat{\Gamma})-\beta \right| \leq
K\left|\mathcal{S}\right| \max_{s \in \mathcal{S}} \left|\widehat{\pi}_s-\pi_s\right| + \sum_{k,s} \sup_{t \in [0,1]} \left|\widehat{F}_{k,s}(t) -\widehat{F}_{k,s}(t)\right|
 +\dfrac{C}{\min_{s  \in \mathcal{S}} N_s}.
\end{equation}
Applying Lemma~\ref{lem:DKW}, we have that conditional $D_n$ and $N_s$, on the event $\left\{N_s \geq 1\right\}$, we have that 
\begin{equation*}
\mathbb{E}\left[\sup_{t \in [0,1]} \left|\widehat{F}_{k,s}(t) -\widehat{F}_{k,s}(t)\right|\right] \leq \dfrac{C}{\sqrt{\min_{s \in \mathcal{S}} N_s}}.
\end{equation*}
Therefore, from the above inequality and Equation~\ref{eq:eqDecompSize3}, we deduce that
\begin{multline*}
\mathbb{E}\left[\left|\mathcal{T}(\widehat{\Gamma})-\beta \right|\right]  =   \mathbb{E}\left[\left|\mathcal{T}(\widehat{\Gamma})-\beta \right|\1{N_s \geq 1}+\1{N_s=0}\right]
\leq K\left|\mathcal{S}\right| \max_{s \in \mathcal{S}} \left(\mathbb{E}\left[\left|\widehat{\pi}_s-\pi_s\right|\right]+ \mathbb{E}\left[\dfrac{C \1{N_{{\rm min}} \geq 1}}{\sqrt{N_{\min}}}\right] \right)
\\+ \mathbb{E}\left[\dfrac{C}{N_{\min}}\right] + 2K \mathbb{P}\left(N_{\min} = 0\right). 
\end{multline*}
Let us deal now with the proof of the unfairness bound that follows the same lines than for expected size. Let $k \in [K], s,s' \in \mathcal{S}$.
The following hold
\begin{multline}
\label{eq:eqUnfairProof1}
\left|\mathbb{P}_{X|S=s}\left(\widehat{p}_{k}(X,S) > \widehat{\lambda} + \dfrac{\widehat{\gamma}_{k,s}}{\widehat{\pi}_s}\right) - \mathbb{P}_{X|S=s'}\left(\widehat{p}_{k}(X,S) > \widehat{\lambda} + \dfrac{\widehat{\gamma}_{k,s'}}{\widehat{\pi}_{s'}}\right)\right|
\leq \sup_{t \in \mathbb{R}} \left|F_{k,s}(t)-\widehat{F}_{k,s}(t)\right| \\ + \sup_{t \in \mathbb{R}} \left|F_{k,s'}(t)-\widehat{F}_{k,s'}(t)\right| + \left|\widehat{\mathbb{P}}_{X|S=s}\left(\widehat{p}_{k}(X,S) > \widehat{\lambda} + \dfrac{\widehat{\gamma}_{k,s}}{\widehat{\pi}_s}\right)    - \widehat{\mathbb{P}}_{X|S=s'}\left(\widehat{p}_{k}(X,S) > \widehat{\lambda} + \dfrac{\widehat{\gamma}_{k,s'}}{\widehat{\pi}_{s'}}\right) \right|.
\end{multline}
We consider the function $\widehat{H}$ defined in Lemma~\ref{lem:subgradient_computation}. Since, we minimize this function in $\gamma$ under the constraints that $\sum_{s \in \mathcal{S}} \gamma_{k,s} =0$ for each $k \in [K]$.  we deduce from the KKT conditions that for each $k \in [K], s \in \mathcal{S}$ there exists $\widehat{\nu}_{k \in [K]} \in \mathbb{R}^{K}$ such that
\begin{equation*}
0 \in  \partial_{\gamma_{k,s}} \widehat{H}(\widehat{\lambda}, \widehat{\gamma}) + \widehat{\nu}_k, \;\; 
 {\rm with} \;\; \sum_{s \in \mathcal{S}} \widehat{\gamma}_{k,s} =0.
\end{equation*}
Therefore, from Lemma~\ref{lem:subgradient_computation}, we obtain that there exists $\sigma_{k,s} \in [0,1]$
such that
\begin{equation*}
0 =   - \dfrac{1}{N_s}\sum_{i \in \mathcal{D}_{N_s}} \1{\widehat{p}_{k}(X_i,s) > \lambda+\dfrac{\gamma_{k,s}}{\widehat{\pi}_s}}  - \sigma_{k,s} \dfrac{1}{N_s}\sum_{i \in \mathcal{D}_{N_s}} \1{\widehat{p}_{k}(X_i,s) = \lambda+\dfrac{\gamma_{k,s}}{\widehat{\pi}_s}} + \widehat{\nu}_{k},  
\end{equation*}
that implies that for each $s \neq s' \in \mathcal{S}$
\begin{multline*}
\dfrac{1}{N_s}\sum_{i \in \mathcal{D}_{N_s}} \1{\widehat{p}_{k}(X_i,s) > \lambda+\dfrac{\gamma_{k,s}}{\widehat{\pi}_s}} - \dfrac{1}{N_{s'}}\sum_{i \in \mathcal{D}_{N_{s'}}} \1{\widehat{p}_{k}(X_i,{s'}) > \lambda+\dfrac{\gamma_{k,{s'}}}{\widehat{\pi}_{s'}}}   = \\  \sigma_{k,{s'}} \dfrac{1}{N_{s'}}\sum_{i \in \mathcal{D}_{N_{s'}}} \1{\widehat{p}_{k}(X_i,{s'}) = \lambda+\dfrac{\gamma_{k,{s'}}}{\widehat{\pi}_{s'}}} - \sigma_{k,{s}} \dfrac{1}{N_s}\sum_{i \in \mathcal{D}_{N_s}} \1{\widehat{p}_{k}(X_i,s) = \lambda+\dfrac{\gamma_{k,s}}{\widehat{\pi}_s}} .
\end{multline*}
From the above inequality, we then deduce thanks to Lemma~\ref{lem:pigeonHole}
\begin{equation*}
\left|\widehat{\mathbb{P}}_{X|S=s}\left(\widehat{p}_{k}(X,S) > \widehat{\lambda} + \dfrac{\widehat{\gamma}_{k,s}}{\widehat{\pi}_s}\right)    - \widehat{\mathbb{P}}_{X|S=s'}\left(\widehat{p}_{k}(X,S) > \widehat{\lambda} + \dfrac{\widehat{\gamma}_{k,s'}}{\widehat{\pi}_{s'}}\right) \right| \leq 
\dfrac{C}{N_{\min}}.
\end{equation*}
Combining the above inequality together with Equation~\ref{eq:eqUnfairProof1}, Lemma~\ref{lem:BinomialInverse}, and Lemma~\ref{lem:DKW}, we easily obtain that for each $k, s, s'$
\begin{equation*}
\mathbb{E}\left[\left|\mathbb{P}_{X|S=s}\left(\widehat{p}_{k}(X,S) > \widehat{\lambda} + \dfrac{\widehat{\gamma}_{k,s}}{\widehat{\pi}_s}\right) - \mathbb{P}_{X|S=s'}\left(\widehat{p}_{k}(X,S) > \widehat{\lambda} + \dfrac{\widehat{\gamma}_{k,s'}}{\widehat{\pi}_{s'}}\right)\right|\right]   \leq \sqrt{\dfrac{C_{k,\mathcal{S}}}{N}},  
\end{equation*}
that yields the desired result.
\end{proof}

\begin{proof}[Proof of Theorem~\ref{thm:excess risk}]

For each $(\lambda, \gamma) \in \Delta$, we consider the predictor $\Gamma^*_{\lambda, \gamma}$ defined as 
\begin{equation*}
\Gamma^{*}_{\lambda, \gamma}(x,s) = \left\{k \in [K], \;\; p_k(x,s) \geq \lambda + \dfrac{\gamma_{k,s}}{\pi_{s}}\right\}.    
\end{equation*}
Note that using similar arguments as in the proof of Proposition~\ref{prop:charctOptimRisk}, we can show that the predictor $\Gamma^*$ is optimal with respect to $\mathcal{R}_{\lambda, \gamma}$ defined for a predictor $\Gamma$ by
\begin{equation*}
\mathcal{R}_{\lambda, \gamma}(\Gamma) = R(\Gamma) + \lambda \left(\E[X]{\left\lvert \Gamma(X,S) \right\rvert} - \beta\right) 
        + \sumover[K]{k=1} \sumover{s \in \Scal} \gamma_{k,s} \proba[X \lvert S = s]{k \in \Gamma(X,s)}\enspace.
\end{equation*}
Hence
\begin{equation*}
\Gamma^{*}_{\lambda, \gamma} \in \arg\min_{\Gamma} \mathcal{R}_{\lambda, \gamma} (\Gamma).    
\end{equation*}
Besides similarly to Proposition~\ref{prop:charctOptimRisk}, we have that 
\begin{multline}
\label{eq:eqRiskBound1}
\mathcal{R}_{\lambda, \gamma}\left(\Gamma^{*}_{\lambda, \gamma}\right) -  \mathcal{R}_{\lambda, \gamma} (\Gamma) =   \\
\sumover[K]{k=1} \ \sumover{s \in \Scal} \  \E[X|S=s]{\1{k \in \Gamma(X,s) \Delta \Gamma^*_{\lambda, \gamma}(X,s)} \left\lvert \pi_s\left(p_k(X,s) - \lambda\right) - \gamma_{k,s} \right\rvert} \enspace,
\end{multline}
Finally, we recall that the optimal parameters satisfy
\begin{equation*}
(\lambda^{*}, \gamma^{*}) \in \arg\min_{(\lambda, \gamma) \in \Delta}     \mathcal{R}_{\lambda, \gamma} (\Gamma^*_{\lambda, \gamma}).
\end{equation*}
Now, we start with the following decomposition
\begin{multline}
\label{eq:eqRiskBound2}
            \Rcal_{\lambda^*, \gamma^*}(\widehat{\Gamma}) - \Rcal_{\lambda^*, \alpha^*}(\Gamma^*_{\beta}) = \left(\Rcal_{\lambda^*, \gamma^*}(\widehat{\Gamma}) - \Rcal_{\widehat{\lambda}, \widehat{\gamma}}(\widehat{\Gamma})\right)
            \\ + \left(\Rcal_{\widehat{\lambda}, \widehat{\gamma}}(\widehat{\Gamma}) - \Rcal_{\widehat{\lambda}, \widehat{\gamma}}(\Gamma^*_{\widehat{\lambda}, \widehat{\gamma}}) \right)
             + \left(\Rcal_{\widehat{\lambda}, \widehat{\gamma}}(\Gamma^*_{\widehat{\lambda}, \widehat{\gamma}}) - \Rcal_{\lambda^*, \gamma^*}(\Gamma^*_{\lambda^*, \gamma^*}) \right)
        \end{multline}
        We now act on each of the three terms. For the first term in the {\it r.h.s.}
        of the above equation we observe that
        since for each $k \in [K]$, $\sum_{s} \widehat{\gamma}_{k,s} = \sum_{s} \gamma_{k,s}^* = 0 $, we have that
        \begin{align*}
                \Rcal_{\lambda^*, \gamma^*}(\widehat{\Gamma}) - \Rcal_{\widehat{\lambda}, \widehat{\gamma}}(\widehat{\Gamma}) &= R(\widehat{\Gamma}_{\widehat{\lambda}, \widehat{\gamma}}) - R(\widehat{\Gamma}_{\widehat{\lambda}, \widehat{\gamma}})\\ 
                &\quad + (\lambda^* - \widehat{\lambda}) \left(\E[X,S]{\size{\widehat{\Gamma}_{\widehat{\lambda}, \widehat{\gamma}}(X,S)}} - \beta \right)\\
                &\quad + \sumover{s \in \Scal}\sumover[K]{k=1} (\gamma_{k,s}^* - \widehat{\gamma}_{k,s})\left(\proba[X|S=s]{k \in \widehat{\Gamma}_{\widehat{\lambda}, \widehat{\gamma}}(X,s)} - \proba[X|S=1]{k \in \widehat{\Gamma}_{\widehat{\lambda}, \widehat{\gamma}}(X,S)}\right).\\
            \end{align*}
            Therefore, from Theorem~\ref{thm:Unfairness bound}, since parameters $\lambda^*, \widehat{\lambda}, \gamma^*$, and $\widehat{\gamma}$ are bounded, we deduce that
            \begin{equation}
            \label{eq:boundFirstTerm}
                \Rcal_{\lambda^*, \gamma^*}(\widehat{\Gamma}) - \Rcal_{\widehat{\lambda}, \widehat{\gamma}}(\widehat{\Gamma}) \leq 
                C_{K, \Scal}\sqrt{\dfrac{1}{N}}.
            \end{equation}
            For the second term, we use the characterization of $(\lambda^*, \gamma^*)$ and then observe that
            \begin{equation}
            \label{eq:boundSecondTermByZero}
                \Rcal_{\widehat{\lambda}, \widehat{\gamma}}(\widehat{\Gamma}) - \Rcal_{\widehat{\lambda}, \widehat{\gamma}}(\Gamma^*_{\widehat{\lambda}, \widehat{\gamma}}) \leq 0
            \end{equation}
            
            Finally, we consider the last term in the {\it r.h.s.} of Equation~\eqref{eq:eqRiskBound2}.
            Using Equation~\eqref{eq:eqRiskBound1}, we deduce
            \begin{multline}
            \label{eq:eqRiskBound3}
                {\Rcal_{\widehat{\lambda}, \widehat{\gamma}}(\widehat{\Gamma}_{\widehat{\lambda}, \widehat{\gamma}}) - \Rcal_{\widehat{\lambda}, \widehat{\gamma}}(\Gamma^*_{\widehat{\lambda}, \widehat{\gamma}})}\\ = {\sumover[K]{k=1}\sumover{s \in \Scal} \E[X|S=s]{\1{k \in \widehat{\Gamma}_{\widehat{\lambda}, \widehat{\gamma}}(X,s) \Delta \Gamma^*_{\widehat{\lambda}, \widehat{\gamma}}(X,s)}\left\lvert \pi_s\left(p_k(X,s) - \widehat{\lambda} \right) - \widehat{\gamma}_{k,s} \right\rvert}} 
            \end{multline}
            
            We observe that 
            $k \in \widehat{\Gamma}_{\widehat{\lambda}, \widehat{\gamma}}(X,s) \Delta \Gamma^*_{\widehat{\lambda}, \widehat{\gamma}}(X,s)$ implies
            \begin{eqnarray*}
            \left\lvert \pi_s\left(p_k(X,s) - \widehat{\lambda} \right) - \widehat{\gamma}_{k,s} \right\rvert & \leq & \left\lvert \pi_s\left(p_k(X,s) - \widehat{\lambda} \right) - \widehat{\gamma}_{k,s} -\widehat{\pi}_s\left(\widehat{p}_k(X,S)-\widehat{\lambda}\right)+ \widehat{\gamma}_{k,s}\right\rvert\\
            & \leq & \left|\pi_s(p_k(X,s) - \widehat{p}_k(X,s)) +(\widehat{p}_k(X,s)-\widehat{\lambda})\left(\pi_s-\widehat{\pi}_s\right)\right|.
            \end{eqnarray*}
            Therefore, from Equation~\ref{eq:eqRiskBound3}, since $\widehat{p}_k$, and
            $\widehat{\lambda}$ are bounded, we deduce that
            \begin{multline}
             \label{eq:boundThirdTerm}
            {\Rcal_{\widehat{\lambda}, \widehat{\gamma}}(\widehat{\Gamma}_{\widehat{\lambda}, \widehat{\gamma}}) - \Rcal_{\widehat{\lambda}, \widehat{\gamma}}(\Gamma^*_{\widehat{\lambda}, \widehat{\gamma}})}  \leq
            \sumover[K]{k=1}\sumover{s \in \Scal} \pi_s \E[X|S=s]{\left|\widehat{p}_k(X,s)-p_k(X,s)\right|} + C_K \sum_{s \in \mathcal{S}} \mathbb{E}\left[\left|\widehat{\pi}_s-\pi_s\right|\right]\\
             \leq C_K\left(\underset{s\in\Scal}{\max}\norm[\infty, \mathbb{P}_{X|S=s}]{\widehat{p}- p} + \sqrt{\dfrac{1}{N}}\right)
           \end{multline}
            
        Combining the results from \cref{eq:boundFirstTerm,eq:boundSecondTermByZero,eq:boundThirdTerm}, we obtain
        \begin{align*}
            \Rcal_{\lambda^*, \gamma^*}(\widehat{\Gamma}_{\widehat{\lambda}, \widehat{\gamma}}) - \Rcal_{\lambda^*, \gamma^*}(\Gamma^*_{\lambda^*, \gamma^*}) &\leq C_{K,\Scal}\frac{1}{\sqrt{N}} + C_K\underset{s\in\Scal}{\max}\norm[\infty, \mathbb{P}_{X|S=s}]{\widehat{p}- p},
        \end{align*}
        which yields the desired result.
    \end{proof}

\section{Proof of Section~\ref{sec:twoStepProc}}

 First of all, applying Lemma~\ref{lem:1_over_n_divergence}, we have, almost surely, that for each $u \in (0,1)$, and $\beta \in (0,K)$
 \begin{equation}
 \label{eq:eqPartD0}
0 \leq u -\widehat{\overline{F}}_{k,s}(\widehat{\overline{F}}^{-1}_{k,s} (u)) \leq \dfrac{1}{N_s}, \;\; {\rm and} \;\; 
0 \leq \beta - \widehat{G}(\widehat{G}^{-1}(\beta)) \leq \dfrac{K}{N}.
 \end{equation}

    \begin{proof}[Proof of Theorem~\ref{thm:two-stepMethodBounds}]
        We start the proof the unfairness bound and then establish the bound on the expected size.
        \paragraph*{Unfairness bound. }
        For each $k \in [K], s \in \mathcal{S}$, we introduce
     $\widehat{\beta}_k = \widehat{\overline{F}}_k(\widehat{G}^{-1}(\beta))$, $\widehat{\delta}_{k,s} = \widehat{\overline{F}}_{k,s}^{-1}\left(\widehat{\beta}_k\right)$ ,
        and $\widehat{h}_{k,s} = \widehat{p}_k(X,s) - \widehat{\delta}_{k,s}$.
        We have
        \begin{align*}
            \mathcal{U}(\widehat{\Gamma}) &= \underset{k,s,s'}{\max}\Big(\big\lvert \proba[X|S=s]{\widehat{h}_{k,s} \geq 0} - \probahat[X|S=s]{\widehat{h}_{k,s} \geq 0} + \probahat[X|S=s]{\widehat{h}_{k,s} \geq 0}\\& - \probahat[X|S=s']{\widehat{h}_{k,s'} \geq 0} + \probahat[X|S=s']{\widehat{h}_{k,s'} \geq 0} - \proba[X|S=s']{\widehat{h}_{k,s'} \geq 0} \big\rvert\Big)\\
        \end{align*}
        Then, we deduce that
        \begin{multline}
        \label{eq:eqPartD1}
            \mathcal{U}(\widehat{\Gamma}) \leq 
            2 \underset{k,s}{\max}\left(\left\lvert \proba[X|S=s]{\widehat{h}_{k,s} \geq 0} - \probahat[X|S=s]{\widehat{h}_{k,s} \geq 0}\right\rvert\right) \\ 
            + \underset{k,s,s'}{\max}\left(\left\lvert \probahat[X|S=s]{\widehat{h}_{k,s} \geq 0}- \probahat[X|S=s']{\widehat{h}_{k,s'} \geq 0}\right\rvert\right)
            \end{multline}
        Noting that for 
        \begin{multline*}
            \underset{k,s}{\max}\left(\left\lvert \proba[X|S=s]{\widehat{h}_{k,s} \geq 0} - \probahat[X|S=s]{\widehat{h}_{k,s} \geq 0}\right\rvert\right) \\ \leq \underset{k\in [K]}{\sum}\underset{t}{\sup}\left\lvert \proba[X|S=s]{\widehat{p}_k(X,s) > t} - \probahat[X|S=s]{\widehat{p}_k(X,s) > t}\right\rvert.
        \end{multline*}
        Similarly to the proof of Theorem~\ref{thm:Unfairness bound}, using Lemma~\ref{lem:DKW} conditionally on $D_n$ and $N_s$ and then by integrating over $D_n$ and $N_s$, thanks to Lemma~\ref{lem:BinomialInverse}, we have that there exists $C > 0$ a constant such that:
        \begin{equation*}
            \E{\underset{k,s}{\max}\left(\left\lvert \proba[X|S=s]{\widehat{h}_{k,s} \geq 0} - \probahat[X|S=s]{\widehat{h}_{k,s} \geq 0}\right\rvert\right)} \leq C\frac{K}{\sqrt{N}}.
        \end{equation*}
        
        Furthermore, thanks to Lemma~\ref{lem:1_over_n_divergence},
         since $\probahat[X|S=s]{\widehat{h}_{k,s} \geq 0} = \widehat{\overline{F}}_{k,s}(\widehat{\delta}_{k,s})$. We can write
        \begin{align*}
            \underset{k,s,s'}{\max}\left(\left\lvert \probahat[X|S=s]{\widehat{h}_{k,s} \geq 0}- \probahat[X|S=s']{\widehat{h}_{k,s'} \geq 0}\right\rvert\right) &= \underset{k,s,s'}{\max}\left(\left\lvert \widehat{\overline{F}}_{k,s}(\widehat{\delta}_{k,s}) - \widehat{\overline{F}}_{k,s'}(\widehat{\delta}_{k,s'})\right\rvert\right)\\
            &\leq \underset{k,s,s'}{\max}\left(\left\lvert \widehat{\beta}_k + \frac{1}{N_s} - \widehat{\beta}_k + \frac{1}{N_{s'}}\right\rvert\right)\\
            &\leq \underset{k,s,s'}{\max}\left(\left\lvert \frac{1}{N_s} + \frac{1}{N_{s'}}\right\rvert\right)\\
            &\leq \frac{2}{\underset{s}{\min}N_s}.
        \end{align*}
        
        In view of Equation~\ref{eq:eqPartD1}, applying again Lemma~\ref{lem:BinomialInverse}, we can now combine all the terms
        \begin{align*}
            \E{\mathcal{U}(\Gamma)} &\leq \frac{C_{K,\Scal}}{\sqrt{N}}.
        \end{align*}

      \paragraph*{Size constraint violation.}
      We can write:
        \begin{align*}
            \left\lvert \E[X,S]{\size{\widehat{\Gamma}(X,S)}} - \beta \right\rvert &= \left\lvert \sumover[K]{k=1}\sumover{s \in \Scal} \pi_s\proba[X|S=s]{k \in \widehat{\Gamma}(X,s)} - \beta \right\rvert\\
            &= \left\lvert \sumover[K]{k=1}\sumover{s \in \Scal} \pi_s\left(\overline{F}_{k,s}\left(\widehat{\delta}_{k,s}\right)\right) - \beta\right\rvert\\
            &= \left\lvert\sumover[K]{k=1}\sumover{s \in \Scal} \pi_s\left(\overline{F}_{k,s}\left(\widehat{\delta}_{k,s}\right) - \widehat{\overline{F}}_{k,s}\left(\widehat{\delta}_{k,s}\right) + \widehat{\overline{F}}_{k,s}\left(\widehat{\delta}_{k,s}\right)\right) - \beta \right\rvert
        \end{align*}
        Then we deduce that
        \begin{equation*}
             \left\lvert \E[X,S]{\size{\widehat{\Gamma}(X,S)}} - \beta \right\rvert \leq 
             \left\lvert\sumover[K]{k=1}\sumover{s \in \Scal} \pi_s\left(\overline{F}_{k,s}\left(\widehat{\delta}_{k,s}\right) - \widehat{\overline{F}}_{k,s}\left(\widehat{\delta}_{k,s}\right)\right) \right\rvert + \left\lvert \sumover[K]{k=1}\sumover{s \in \Scal} \pi_s \widehat{\overline{F}}_{k,s}\left(\widehat{\delta}_{k,s}\right) - \beta \right\rvert.
        \end{equation*}
        To control the first term in the {\it r.h.s.} of the above equation we use Lemma~\ref{lem:DKW}. For the second term, we observe that
        \begin{multline*}
            \left\lvert \sumover[K]{k=1}\sumover{s \in \Scal} \pi_s \widehat{\overline{F}}_{k,s}\left(\widehat{\delta}_{k,s}\right) - \beta \right\rvert 
            \\
            \leq \left\lvert \sumover[K]{k=1}\sumover{s \in \Scal} \pi_s \widehat{\overline{F}}_{k,s}\left(\widehat{\delta}_{k,s}\right) - \sum_{k \in [K]}\sum_{s \in \mathcal{S}} \pi_s \widehat{\overline{F}}_k(\widehat{G}^{-1}(\beta)) \right\rvert + \left\lvert \sum_{k \in [K]}\sum_{s \in \mathcal{S}} \pi_s \widehat{\overline{F}}_k(\widehat{G}^{-1}(\beta)) - \beta \right\rvert \enspace.
        \end{multline*}
        From Equation~\ref{eq:eqPartD0}, we then deduce that almost surely
        \begin{equation*}
            \left\lvert \sumover[K]{k=1}\sumover{s \in \Scal} \pi_s \widehat{\overline{F}}_{k,s}\left(\widehat{\delta}_{k,s}\right) - \beta \right\rvert \leq
            \dfrac{1}{N_s} + \dfrac{K}{N}.
        \end{equation*}
        Therefore, applying again Lemma~\ref{lem:BinomialInverse} we deduce the desired result.
    \end{proof}
        

\end{document}